\documentclass[accepted]{uai2026} 
                        
\usepackage[american]{babel}
\usepackage{graphicx}
\usepackage{subcaption}
\usepackage{etoc}
\usepackage{enumitem}
\usepackage{algorithm}
\usepackage{algpseudocode}  
\usepackage{natbib} 
\usepackage{booktabs} 
\usepackage{tikz} 
\usepackage{subfiles}
\usepackage{amsthm}
\usepackage{amsmath,amssymb,mathtools}
\DeclarePairedDelimiter{\norm}{\lVert}{\rVert}
\newtheorem{theorem}{Theorem}[section]

\newtheorem{proposition}[theorem]{Proposition}
\newtheorem{lemma}[theorem]{Lemma}
\newtheorem{corollary}[theorem]{Corollary}

\theoremstyle{definition}
\newtheorem{definition}[theorem]{Definition}

\newtheorem{remark}[theorem]{Remark}
\newtheorem{assumption}[theorem]{Assumption}

\newtheorem*{proposition*}{Proposition}

\newtheorem{claim}[theorem]{Claim}

\title{Federated Causal Representation Learning in State-Space Systems \\ for Decentralized Counterfactual Reasoning}

\author[1]{Nazal Mohamed\textsuperscript{*}}
\author[1]{Ayush Mohanty\textsuperscript{*}}
\author[1]{Nagi Gebraeel}
\affil[1]{%
    Georgia Institute of Technology\\
    Atlanta, GA
}

\begin{document}
\maketitle
\begingroup
\renewcommand\thefootnote{*}
\footnotetext{Equal contribution.}
\endgroup
\begin{abstract}

Networks of interdependent industrial assets (clients) are tightly coupled through physical processes and control inputs, raising a key question: \textit{how would the output of one client change if another client were operated differently?} This is difficult to answer because client-specific data are high-dimensional and private, making centralization of raw data infeasible. Each client also maintains proprietary local models that cannot be modified. We propose a federated framework for causal representation learning in state-space systems that captures interdependencies among clients under these constraints. Each client maps high-dimensional observations into low-dimensional latent states that disentangle intrinsic dynamics from control-driven influences. A central server estimates the global state-transition and control structure. This enables decentralized counterfactual reasoning where clients predict how outputs would change under alternative control inputs at others while only exchanging compact latent states. We prove convergence to a centralized oracle and provide privacy guarantees. Our experiments demonstrate scalability, and accurate cross-client counterfactual inference on synthetic and real-world industrial control system datasets.
\end{abstract}
\addtocontents{toc}{\protect\etocsettocdepth{-1}}

\section{Introduction}
Complex industrial systems such as oil refineries, water treatment networks, and distributed process plants consist of geographically dispersed yet tightly coupled assets (clients) that interact dynamically through control actions and material flows. A change at one client such as a pump adjustment or valve operation, can causally propagate to others, influencing both performance and safety across the network. These interdependencies make the system vulnerable to cascading disruptions, where local disturbances can escalate into large-scale operational failures \cite{Pournaras, Ghosh}. Managing these risks requires causal models that not only identify which clients influence one another, but also reason about how interventions propagate across the system.

Granger causality (GC) is a classical tool for detecting interdependencies in time-series data \cite{granger1969, geweke1982, tang2023causality}. Yet GC is fundamentally predictive rather than interventional, and thus \textbf{cannot} answer counterfactual questions such as: \textit{how would my output have changed if another client had altered its control input?} Moreover, GC conflates intrinsic state dynamics with control-driven influences, leaving the source of interdependencies ambiguous. 

In practice, however, operators often need exactly this type of counterfactual reasoning \cite{tang2023causality, RUIZTAGLE2022108785}. For instance, assessing ``\textit{how reducing flow at an upstream unit would affect downstream stability}'', or ``\textit{how adjusting a peer’s control settings might influence local product quality}''. Without such capabilities, decision-making remains limited to retrospective diagnostics rather than proactive intervention planning. Practical constraints further complicate this problem. Modern assets generate high-dimensional sensor data that is expensive to transmit, while bandwidth and communication limits prevent centralization. Privacy regulations and confidentiality concerns restrict data sharing across organizational boundaries. Finally, each client typically maintains a proprietary local model trained on its own data, which must remain intact and cannot be altered in a data analysis pipeline. To address these challenges, we develop a federated learning framework tailored to distributed industrial systems.  

\textbf{Contributions.} We focus on linear state-space systems, which provide a tractable representation of causally interdependent dynamics using low-dimensional latent states. In our framework, each client retains its proprietary model and maps raw high-dimensional measurements into disentangled low-dimensional latent states. Specifically, at the client level, we utilize a modeling approach that separates autoregressive time-series dynamics from control effects. A central server acting as a conduit between these interdependent clients estimates the global structures for the state-transition and input matrices without direct access to raw high-dimensional data (measurements). This architecture equips each client to answer counterfactual queries, such as predicting how its outputs would change if another client altered its controls without sharing raw data, inputs, or proprietary models. To the best of our knowledge, this is the first work to enable decentralized counterfactual reasoning. 
The key technical contributions of this paper are as follows:
\begin{itemize}[
    leftmargin=*,
    itemsep=0.5pt,
    topsep=1pt
]
    \item We employ the ``\textit{Abduction-Action-Prediction}'' framework of \cite{pearl2009causality} to formally derive the average treatment effect in the federated setting, enabling decentralized counterfactual reasoning.
    \item We propose a federated learning approach in which the server explicitly estimates the state-transition and input matrices, while clients learn their implicit effects, without moving any private, high-dimensional data.
    \item We design an iterative optimization scheme across servers and clients that provably disentangles autoregressive dynamics from exogenous inputs, a key requirement for decentralized counterfactual reasoning.
    \item We prove convergence of the decentralized framework to a centralized oracle with direct access to all high-dimensional client measurements. In addition, we provide differential privacy guarantees for the federated protocols by quantifying the perturbations required in communicated quantities to protect client data.
    \item Experiments on synthetic data validate the main claims of the paper w.r.t. counterfactual reasoning, convergence to a centralized oracle, and differential privacy, and further demonstrate robustness and scalability. Real-world experiments confirm the applicability of the approach of industrial control systems.
\end{itemize}

 \section{Related Work}
 \textbf{Causal Inference in Dynamical Systems.} 
Granger causality (GC) is the classical tool for detecting predictive dependencies in time series \cite{granger1969,geweke1982,barnett2009granger}. Extensions such as structural GC \cite{eichler2010} and causal state-space models \cite{huang2019causal,mastakouri2021necessary} have attempted to incorporate interventional semantics, but they typically assume centralized access and conflate endogenous dynamics with exogenous inputs. Probabilistic approaches including dynamic Bayesian networks \cite{murphy2002} and latent-variable state-space models \cite{krishnan2015dkf,fraccaro2017kvae} improve representation power but remain predictive in nature.  

\textbf{Federated Causal Discovery.}  
Federated learning \cite{mcmahan2017} has been extended to causal discovery across distributed clients. \cite{Li2024FedCD} propose federated conditional independence testing for heterogeneous data. \cite{Mian22Regret,Mian23PrivacyFCD} introduce regret-based methods with privacy guarantees.  FedECE \cite{IEEE2024FCD1} estimates causal effects across institutions using graphical modeling without data sharing. \cite{IEEE2024FCD2} develop a distributed annealing algorithm for federated DAG learning under generalized linear models, offering identifiability guarantees.  More recent works address scalability. FedCausal \cite{Yang_He_Wang_Yu_Domeniconi_Zhang_2024} unifies local and global optimization through adaptive strategies, while FedCSL \cite{Guo_Yu_Liu_Li_2024} improves accuracy with local-to-global learning and weighted aggregation.  Despite these advances, existing methods focus on structural recovery or associations. They lack counterfactual semantics. 

\textbf{Causal Representation Learning.} 
Pearl’s Abduction–Action–Prediction (AAP) framework \cite{pearl2009causality} formalizes counterfactual inference and underpins modern causal reasoning. Extensions such as invariant causal prediction \cite{peters2016icp}, invariant risk minimization \cite{arjovsky2019irm}, and causal representation learning \cite{scholkopf2021causal,brehmer2022weakly,lippe2022citris} aim to learn invariant and intervention-aware representations. Disentangled state-space models \cite{miladinovic2019disentangled, weng2025sde}, and disentangled representation learning \cite{li2024disentangled, wang2024disentangled} improve interpretability but assume centralized data access. Recent counterfactual time-series approaches like \cite{mastakouri2021necessary, todo2023counterfactual} also remain centralized.

\section{Preliminaries}\label{sec:Prelim}

\textbf{Linear Time Invariant (LTI) Systems.} The dynamics of a linear system can be described by the following LTI state-space equations:
\begin{equation}
\begin{aligned}
h^t &= A\,h^{t-1} + B\,u^{t-1} + w^{t-1}; \hspace{0.1cm}
y^t = C\,h^t + v^t,
\end{aligned}
\label{eq:lti}
\end{equation}
where $h^t \in \mathbb{R}^P$ is the latent state, $u^t \in \mathbb{R}^S$ is the input (or control), and $y^t \in \mathbb{R}^D$ is the measurement (also called observation or output). The matrices $A \in \mathbb{R}^{P\times P}$, $B \in \mathbb{R}^{P\times S}$, and $C \in \mathbb{R}^{D\times P}$ are time-invariant state-transition, input, and output matrices, respectively. $w^{t-1} \sim \mathcal{N}(0, Q_w)$, and $v^t \sim \mathcal{N}(0, R_v)$ are the process noise and measurement noise, respectively. 

\textbf{Structural Causal Models (SCMs).} In a causal system, we represent interdependencies among variables using a \textit{causal graph}, where nodes correspond to variables and directed edges encode direct causal influence. For any variable $X$, we denote by $\mathrm{Pa}(X)$ the set of its parent nodes. We then distinguish between two types of variables: \textbf{(1)} \textit{Exogenous variables} ($U$) are determined outside the system being modeled. They capture background factors, noise, or unobserved influences not explained by the model itself. \textbf{(2)} \textit{Endogenous variables} ($H$) are determined within the system by structural equations. Structural assignments $E$ generate each endogenous variable $H$ from its parents $\mathrm{Pa}(H)$ in the causal graph and an associated exogenous variable $U$ such that, $ H := E(\mathrm{Pa}(H),U)$. A structural causal model (SCM) specifies an ordered triplet $\langle U,H,E \rangle$, providing a principled way to infer causal relationships in a system. 

\textbf{Intervention.} An intervention, denoted by $\mathrm{do}(u = u'),\; u \in U$, is an experiment performed by replacing the structural assignment of $u$ with the constant $u'$ and leaving all other assignments unchanged. \cite{pearl2009causality}'s Abduction-Action-Prediction (AAP) scheme given below, provides a principled approach to infer the consequence of intervention:
\begin{enumerate}[leftmargin=*, itemsep=0.5pt,
    topsep=1pt]
    \item \textbf{Abduction:} infer latent quantities from past observations
    \item \textbf{Action:} encode the intervention by modifying the relevant structural assignment(s)
    \item \textbf{Prediction:} propagate the intervened model forward to obtain interventional outcomes.
\end{enumerate}

Predicting the consequence of intervention is the key to answering counterfactual questions such as, ``\textit{what would $h \in H$ have been under a different value of $u$?}''. A common way to quantify such effects is via the \emph{average treatment effect} (ATE), defined as follows:
\begin{equation}
    ATE \coloneqq \mathbb{E}[h \mid \mathrm{do}(u = u_1)] - \mathbb{E}[h \mid \mathrm{do}(u = u_0)],
\end{equation}
In other words, $ATE$ measures the expected change in the endogenous variable $h$ when the exogeneous variable $u$ is set to $u_1$ instead of $u_0$ through intervention. 


\textbf{LTI system as an SCM.} The LTI state-space system in \eqref{eq:lti} can be alternatively viewed as an SCM, where $h^t$ and $y^t$ are endogenous variables, and $u^{t-1}$, $w^t$ and $v^t$ are exogenous. The input $u^{t-1}$ is not governed by the system dynamics but externally specified. Hence $u^{t-1}$ qualifies as an intervention variable. The state $h^t$ \textbf{cannot} be observed directly (thus, called \textit{latent state}) and has to be estimated from the measurements $y^t$, using a Kalman filter \cite{og_kalman} . 

\section{Counterfactuals for LTI}\label{Sec:Counterfactual}
\begin{proposition} \label{prop:counterfactual_lti}
    Let the input to an LTI system be $u^{t-1} = u_0$. Then the ATE on measurements $y^t$ under the intervention $\mathrm{do}(u^{t-1} = u_1)$ equals $CB\,(u_1 - u_0)$.
\end{proposition}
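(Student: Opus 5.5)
We follow the Abduction--Action--Prediction scheme from Section~\ref{sec:Prelim}, treating the LTI system \eqref{eq:lti} as an SCM in which $h^t$ and $y^t$ are endogenous and $u^{t-1}$, $w^{t-1}$, $v^t$ are exogenous. The structural assignments are $h^t := A h^{t-1} + B u^{t-1} + w^{t-1}$ and $y^t := C h^t + v^t$. The intervention $\mathrm{do}(u^{t-1}=u)$ replaces only the assignment of $u^{t-1}$. All other assignments and the noise laws stay unchanged.

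\textbf{Abduction.} We condition on the past observations up to time $t-1$. This fixes the distribution of the previous latent state $h^{t-1}$, which in practice is the Kalman filter posterior with mean $\hat h^{t-1}$. The key observation is that $h^{t-1}$ is determined by $u^{0},\dots,u^{t-2}$ and the noises up to time $t-1$. None of these is a descendant of $u^{t-1}$ in the causal graph, because the graph is time-ordered. So the abducted distribution of $h^{t-1}$ is the same under both interventions $u_0$ and $u_1$. Likewise, $w^{t-1}$ and $v^t$ are independent of $u^{t-1}$ and keep their zero-mean Gaussian laws.

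\textbf{Action and Prediction.} Substituting $u^{t-1}=u$ into the structural equations gives
$y^t = C A h^{t-1} + C B u + C w^{t-1} + v^t$.
Taking expectations with linearity and the zero-mean noises yields $\mathbb{E}[y^t\mid \mathrm{do}(u^{t-1}=u)] = CA\,\mathbb{E}[h^{t-1}] + CBu$. Subtracting the expression at $u=u_0$ from the one at $u=u_1$ cancels the common term $CA\,\mathbb{E}[h^{t-1}]$ and leaves $CB(u_1-u_0)$. The result does not depend on the abducted state, so it holds for any conditioning history.

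\textbf{Main obstacle.} The only subtle point is justifying that $\mathbb{E}[h^{t-1}]$ is identical in the two interventional worlds. A naive argument could worry that conditioning on observations correlates $h^{t-1}$ with the input. This is resolved by noting that the intervention acts only on the assignment of $u^{t-1}$, which has no ancestral path to $h^{t-1}$ or to the observations $y^{\le t-1}$. The AAP counterfactual therefore shares the same abducted exogenous noise and past state in both worlds, and the difference is purely the direct path $u^{t-1}\to h^t\to y^t$ with gain $CB$.
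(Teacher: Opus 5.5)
Your proof is correct and follows the paper's argument. You abduct the Kalman posterior of $h^{t-1}$ from past data, replace $u^{t-1}$ by a constant, propagate through $C(Ah^{t-1}+Bu)$, and note that everything except $CBu$ cancels in the difference. The paper writes out the full Gaussian predictive law of $y^t$ and observes that its covariance $S_t$ is unchanged, whereas you work only with means via linearity and explicitly justify (by time ordering) why the abducted state is shared by both interventional worlds, a step the paper simply asserts.
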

Proposition~\ref{prop:counterfactual_lti} shows that only the matrices $C$ and $B$ parameterize the $ATE$ of externally applied inputs $u$ on downstream measurements $y$.

\subsection{Multi-Client System} \label{subsec: multi_client}
We consider a \emph{multi-client} LTI state-space setting with $M$ clients (subsystems). Client $m \in \{1,\dots,M\}$ has local state $h_m^t \in \mathbb{R}^{P_m}$, input $u_m^t \in \mathbb{R}^{U_m}$, and measurement $y_m^t \in \mathbb{R}^{D_m}$, with $D_m >> P_m$, $U_m$. The joint dynamics follow Eq~\eqref{eq:lti}. We further assume:
\begin{assumption}\label{assumption_C}
    The matrix $C$ is block-diagonal.
    
    \textbf{\textit{Rationale.}} In many engineered systems, sensors are tied to a physical subsystem, so $y_m^t$ depends \textbf{only} on $h_m^t$, i.e., $y_m^t = C_{mm} h_m^t$.
\end{assumption}`
Under Assumption~\ref{assumption_C}, Proposition \ref{prop:counterfactual_lti} gives the following causal effect ($ATE$) in the multi-client setting:
\begin{equation}\label{eq_counterfactual}
\begin{split}
   \mathbb{E}[y_m^t \mid \mathrm{do}(u_n^{t-1} &= u_{n_1})] - \mathbb{E}[y_m^t \mid \mathrm{do}(u_n = u_{n_0})] \\&= C_{mm}B_{mn}(u_{n_1} - u_{n_0}) \hspace{0.25cm} \forall m \neq n
\end{split}
\end{equation}
Equation~\ref{eq_counterfactual} addresses the \textbf{counterfactual} query:

\textbf{Q1.} \textit{“What would $y_m^t$ have been if client $n$’s input $u_n^{t-1}$ had changed?”}

\textbf{Problem Formulation.} We study counterfactual reasoning in multi-client LTI systems under decentralized constraints. Each client observes only high-dimensional local measurements $y_m^t$, with no access to other clients’ data. The goal is to estimate cross-client causal effects $B_{mn}$ (Eq.~\ref{eq_counterfactual}) that answer \textbf{Q1}, while respecting privacy and comm. constraints.

\textbf{Role of $A$ in Latent-State Recovery.} Although the $ATE$ in Proposition~\ref{prop:counterfactual_lti} depends only on $B_{mn}$, estimating it in practice requires access to latent states $h^t$. These states evolve according to the transition matrix $A$. Without modeling $A$, one cannot disentangle \textbf{(i)} endogenous propagation of past states via $A_{mn}$ from \textbf{(ii)} exogenous input effects via $B_{mn}$. Thus, learning $A_{mn} \forall m, n$ is essential both for recovering $h^t$ and for isolating the causal contribution of $B_{mn}$.
\begin{figure*}[t]
    \centering
    \includegraphics[width=\textwidth]{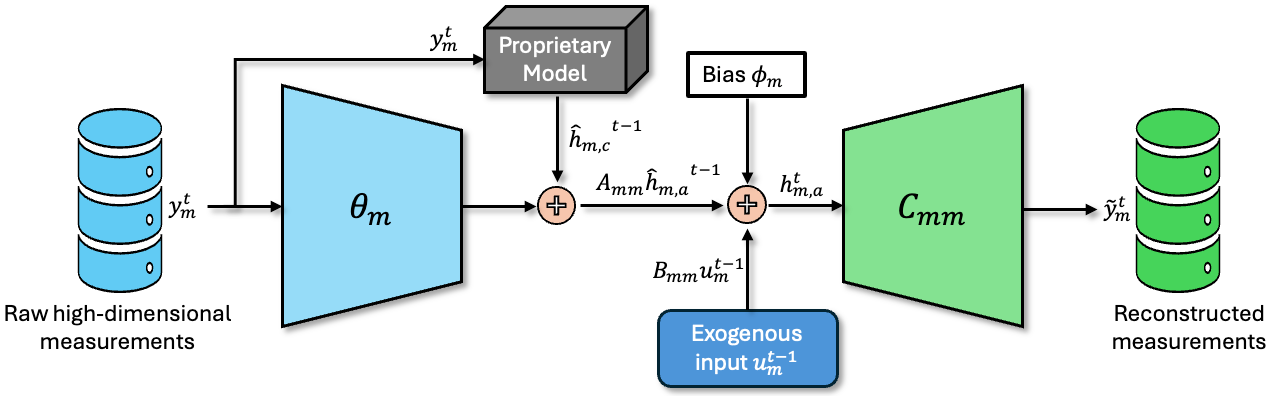}
    \caption{A simplified view of the computation inside the client $m$}
    \label{fig:client}
\end{figure*}
\subsection{Decentralized Setting}\label{Sec:Counterfactual_Decentralized}
While Eq.~\ref{eq_counterfactual} identifies $B_{mn}$ as the key parameter for counterfactual reasoning, estimating it in a decentralized environment is challenging due to:
\textbf{(1) Partial observability:} Only noisy measurements $y^t$ are observed locally. Recovering $B_{mn}$ requires first inferring latent states $h^t$ by modeling $A$ and then separating the effects of $A_{mn}$ and $B_{mn}$.
\textbf{(2) Data sharing constraints:} Clients cannot share raw $y^t$ due to privacy, and $y^t$ is often high-dimensional, making direct communication infeasible.

These constraints motivate a federated approach to estimate $\{A_{mn}, B_{mn}\}$. Each client transmits low-dimensional state estimates $\hat{h}_m^t$, and inputs $u_m^t$ to a coordinating server, without exposing raw measurements $y_m^t$. The server explicitly estimates $\{A_{mn}, B_{mn}\}$ using aggregated $\{\hat{h}^t, u^t\}$ across clients (extending prior work, e.g., \cite{ICLR2025_fedGC}, which solved only the special case $B \equiv 0$). This enables server-side counterfactual reasoning via Eq.~\ref{eq_counterfactual}.

\textbf{Key challenge for clients.} Unlike the server, client $m$ observes only its own $\hat{h}_m^t$ and $u_m^t$, and receives no states or inputs from others. Local estimates, therefore, conflate two sources of variation: \textbf{(i)} endogenous propagation of past states (through $A_{mn}$) and \textbf{(ii)} exogenous input effects (through $B_{mn}$). To support \textbf{Q1} locally, clients must learn representations in which these effects are \textbf{disentangled}, so that $B_{mn}$ can be isolated and used in Eq.~\ref{eq_counterfactual}. Our training methodology is designed to enforce this disentanglement at the client, while relying only on server communications.

\section{Federated Training}
\label{Sec: Federated_training_methodology}
\subsection{Proprietary Client Model}
\begin{assumption}\label{assumption_blocks}
    Each client $m$ knows its local block matrices $A_{mm}, B_{mm},$ and $C_{mm}$.

    \textbf{\textit{Rationale.}} The matrices $A_{mm}, B_{mm},$ and $C_{mm}$ correspond to the diagonal blocks of the global state-transition matrix $A$, input matrix $B$, and output matrix $C$ of the LTI system. These local blocks do not affect counterfactual query \textbf{Q1}. Moreover, if not pre-specified, they can be readily estimated by each client using only its own data.
\end{assumption}
\textbf{Notation.} Kalman filtering distinguishes a one-step-ahead \emph{predicted state} from a \emph{refined estimated state}. We denote by $h^t$ the one-step-ahead prediction at time $t$ (based on information up to $t-1$), and by $\hat{h}^t$ the refined estimate after observing $y^t$ at time $t$. 

The proprietary client model is a Kalman filter (in general it can be any state estimator), which makes use of the local block matrices $A_{mm}$, $B_{mm}$, $C_{mm}$, the local measurement $y_m^t$, and input $u^t_m$ to compute a \textit{predicted state} $h^t_{m,c}$ and a \textit{refined estimate} $\hat{h}^t_{m,c}$. 

\begin{assumption}
    The proprietary model provides both $h^t_{m,c}$ (predicted state) and $\hat{h}^t_{m,c}$ (refined estimate), which are treated as given inputs to our federated learning method. 
\end{assumption}
\textbf{{Limitation of the Proprietary Model.}} Since the proprietary model relies exclusively on local information (i.e., $A_{mm}, B_{mm}, C_{mm}, y_m^t,$ and $u_m^t$), it \textbf{cannot} capture cross-client interdependencies and is therefore unable to answer query \textbf{Q1} (Section \ref{Sec:Counterfactual_Decentralized}).

\subsection{Augmented Client Model}
To overcome the limitations of the proprietary model, we \textbf{\textit{augment}} it with additional ML parameters, yielding an \textit{augmented client model}. The governing equations are:
\begin{align}
    h^{t}_{m,a} &= A_{mm}\, \hat{h}^{t-1}_{m,a} + B_{mm}\, u^{t-1}_{m} + \phi_{m}, \label{eq:aug_state_pred}\\
    \hat{h}^{t}_{m,a} &= \hat{h}^{t}_{m,c} + \theta_{m}\, y^{t}_{m}. \label{eq:aug_state_estimate}
\end{align}
Here, $h^{t}_{m,a}$ denotes the \textit{augmented predicted state}, and $\hat{h}^{t}_{m,a}$ the \textit{augmented refined estimate}, which extends the proprietary estimate $\hat{h}^{t}_{m,c}$ by incorporating the learned parameter $\theta_m$. The corresponding \textbf{client loss} is defined as,  
   $L_{m,a}^t \coloneqq \|r^t_{m,a}\|_2^2,$
where we have, 
\begin{equation}\label{eq:client_residual}
    r^{t}_{m,a} = y^{t}_{m} - \tilde{y}^{t}_{m, a}, 
    \hspace{0.25cm} \text{with} \hspace{0.25cm} 
    \tilde{y}^{t}_{m, a} = C_{mm}\, h^{t}_{m,a}. 
\end{equation}
Therefore, the client loss is just a \textit{reconstruction loss} with $\tilde{y}^{t}_{m, a}$ denoting the \textit{reconstructed measurement},  which serves as the key output for counterfactual analysis (used later in Section \ref{sec:disentanglement}). 

\textbf{Parameter updating.} 
Each client updates $\phi_m$ and $\theta_m$ by combining gradients from its own loss $L_{m,a}$ with gradients derived from the \textbf{server loss $L_s$}. Importantly, the server never sends raw data; instead, it communicates only the gradients with respect to the augmented states, namely $\nabla_{h_{m,a}^t} L_s$ and $\nabla_{\hat{h}_{m,a}^{t-1}} L_s$. 
The client then uses the chain rule of partial derivatives (proof can be found in the Appendix) to compute:   
\begin{align} 
\nabla_{\theta_m} L_s &=\sum_{t = 1}^T \left(A_{mm}^\top \cdot \nabla_{h^t_{m,a}} L_s + \nabla_{\hat{h}_{m,a}^{t-1}} L_s \right) {y^{t-1}_m}^\top  \\ 
\nabla_{\phi_m} L_s &=  \sum_{t = 1}^T \nabla_{h^t_{m,a}} L_s.\label{eq:L_s_grad_chain_rule} 
\end{align}
Using the locally computed $\nabla_{\theta_m} L_s, \nabla_{\phi_m} L_s$, and learning rates $\eta_1, \eta_2, \gamma_1, \gamma_2$, the updated parameters are then given by,
\begin{align}
    \theta_m^{k+1} &= \theta_m^{k}
    - \eta_1 \,\nabla_{\theta_m} L_{m,a}
    - \eta_2 \,\nabla_{\theta_m} L_{s}, \\
    \phi_m^{k+1} &= \phi_m^{k}
    - \gamma_1 \,\nabla_{\phi_m} L_{m,a}
    - \gamma_2 \,\nabla_{\phi_m} L_{s}. \label{eq:client_update}
\end{align} 

\begin{claim}\label{claim_phi_theta_combined}
    The local client parameters $\theta_m$ and $\phi_m$ learn to encode cross-client interdependencies arising from off-diagonal blocks $A_{mn}$ and $B_{mn}$ ($n \neq m$).
\end{claim}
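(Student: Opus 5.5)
The plan is to interpret the claim through the stationary points of the client loss, and to show that at any such point the learned terms $\phi_m$ and $\theta_m y_m^t$ must supply exactly the missing off-diagonal contributions $\sum_{n\neq m}(A_{mn}h_n^{t-1}+B_{mn}u_n^{t-1})$.

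\textbf{Step 1: block decomposition.} Write the true dynamics of client $m$ from \eqref{eq:lti} in block form:
\[
h_m^t = A_{mm}h_m^{t-1} + B_{mm}u_m^{t-1} + \sum_{n\neq m}\bigl(A_{mn}h_n^{t-1}+B_{mn}u_n^{t-1}\bigr) + w_m^{t-1}.
\]
By Assumption~\ref{assumption_C}, $y_m^t = C_{mm}h_m^t + v_m^t$. Denote the cross-client term by
\[
\delta_m^{t} := \sum_{n\neq m}\bigl(A_{mn}h_n^{t-1}+B_{mn}u_n^{t-1}\bigr).
\]
The proprietary model of Assumption~\ref{assumption_blocks} uses only the diagonal blocks. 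Any discrepancy between its predictions and the true dynamics is therefore, up to noise, exactly $\delta_m^t$.

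\textbf{Step 2: rewrite the augmented prediction.} Substituting \eqref{eq:aug_state_estimate} into \eqref{eq:aug_state_pred} gives
\[
h_{m,a}^t = A_{mm}\hat h_{m,c}^{t-1} + B_{mm}u_m^{t-1} + \bigl(A_{mm}\theta_m y_m^{t-1} + \phi_m\bigr).
\]
The bracketed term is the only learnable correction. Then:
\begin{itemize}
\item the reconstruction residual \eqref{eq:client_residual} is $C_{mm}\bigl(\delta_m^t - A_{mm}\theta_m y_m^{t-1} - \phi_m\bigr)$ plus estimation error and noise;
\item minimizing $\sum_t L_{m,a}^t$ is therefore a least-squares regression of $\delta_m^t$ onto $\{y_m^{t-1}, 1\}$;
\item when $C_{mm}$ has full column rank (plausible since $D_m\gg P_m$), the optimum satisfies $A_{mm}\theta_m y_m^{t-1}+\phi_m \approx \operatorname{Proj}\bigl[\delta_m^t \mid y_m^{t-1},1\bigr]$.
\end{itemize}
This projection carries the cross-client information that is available locally: $y_m^{t-1}$ depends on $h_n$ through past couplings, and $\phi_m$ absorbs the mean of $B_{mn}u_n$.

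\textbf{Step 3: role of the server gradient.} The server loss $L_s$ fits the global $\{A,B\}$ on the aggregated $\hat h_{m,a}$. Its gradients, mapped through the chain rule \eqref{eq:L_s_grad_chain_rule}, pull $h_{m,a}^t$ toward $\sum_n A_{mn}\hat h_{n,a}^{t-1}+\sum_n B_{mn}u_n^{t-1}$. At a joint stationary point of the updates \eqref{eq:client_update}, the correction term therefore matches the off-diagonal contribution evaluated with the server's estimates of $A_{mn}$ and $B_{mn}$. I would state this as follows: at a fixed point with $\nabla L_{m,a}=\nabla L_s=0$, the identity $A_{mm}\theta_m y_m^{t-1}+\phi_m = \sum_{n\ne m}(\hat A_{mn}\hat h_{n,a}^{t-1}+\hat B_{mn}u_n^{t-1})$ holds in the least-squares sense.

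\textbf{Main obstacle.} The hard step is this last identification. The client correction is restricted to an affine function of the client's own $y_m^{t-1}$, so exact encoding requires the off-diagonal contribution to lie in that span. I would first prove the claim in this realizable regime, or state it in projection form otherwise, treating the claim as a characterization of what $\theta_m,\phi_m$ represent rather than an exact recovery statement.
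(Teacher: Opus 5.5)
Your Step~3 uses the same mechanism as the paper (Section~\ref{Sec:Causal_Disentanglement}: the message in Eq.~\eqref{eq:L_s_local_gradients} explicitly contains $\hat A_{mn}$ and $\hat B_{mn}$, and it reaches $\theta_m,\phi_m$ through the chain rule). The identification you draw from it, however, is the step that carries the claim, and it fails as stated.

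\textbf{First}, the server predicts with the proprietary estimates, not the augmented ones. Substituting \eqref{eq:aug_state_estimate} into \eqref{eq:aug_state_pred} and subtracting the server prediction gives the exact identity
\[
r^t_{m,s} = A_{mm}\theta_m y_m^{t-1} + \phi_m - \sum_{n\neq m}\bigl(\hat A_{mn}\hat h^{t-1}_{n,c} + \hat B_{mn}u_n^{t-1}\bigr).
\]
So the target involves $\hat h^{t-1}_{n,c}$, not $\hat h^{t-1}_{n,a}$. Also, the server fits only the off-diagonal blocks, not the global $\{A,B\}$.

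\textbf{Second}, $\nabla_{\theta_m}L_s$ also contains the penalty contribution $\nabla_{\hat h^{t-1}_{m,a}}L_s=\tfrac{2\xi_m}{T}A_{mm}^\top\mathcal{D}_m^t$, where $\mathcal{D}_m^t:=A_{mm}\theta_m y_m^{t-1}-\sum_{n\neq m}\hat A_{mn}\hat h^{t-1}_{n,c}$. A plain least-squares identity in $y_m^{t-1}$ is therefore not the stationarity condition unless $\xi_m=0$.

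\textbf{Third, and decisively}, a fixed point of the client updates only gives $\eta_1\nabla_{\theta_m}L_{m,a}+\eta_2\nabla_{\theta_m}L_s=0$ and $\gamma_1\nabla_{\phi_m}L_{m,a}+\gamma_2\nabla_{\phi_m}L_s=0$. Requiring $\nabla L_{m,a}=\nabla L_s=0$ separately is degenerate:
\begin{itemize}
\item Together with the server's stationarity in $\hat A_{mn},\hat B_{mn}$, it makes $L_s$ stationary in all its arguments.
\item Both $r^t_{m,s}$ and $\mathcal{D}^t_m$ are linear in $(\theta_m,\phi_m,\hat A_{mn},\hat B_{mn})$ with no constant term. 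So $L_s$ is a homogeneous convex quadratic, and its stationary points are exactly its zeros.
\item Hence $r^t_{m,s}=\mathcal{D}^t_m=0$ for every $t$, which gives $\phi_m=\sum_{n\neq m}\hat B_{mn}u_n^{t-1}$ for all $t$.
\item Under Assumption~\ref{ass:erg_pe} (zero-mean, persistently exciting inputs), this forces $\phi_m=0$ and $\hat B_{mn}=0$, and generically $A_{mm}\theta_m=0$ and $\hat A_{mn}=0$ as well.
\end{itemize}
Your identity then holds only at a point where nothing is encoded. The remaining requirement $\nabla L_{m,a}=0$ generically fails there once the off-diagonal blocks are active.

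The paper avoids this collapse by extracting only time-averaged (mean-level) conditions from stationarity and then letting $\xi_m\to\infty$. This yields Theorem~\ref{thm:disentanglement_main_thm} and Corollary~\ref{cor:disentanglement}: $\mathrm{E}[A_{mm}\theta_m^*y_m^t]=\mathrm{E}[\sum_{n\neq m}\hat A^*_{mn}\hat h^t_{n,c}]$ and $\phi_m^*=\mathrm{E}[\sum_{n\neq m}\hat B^*_{mn}u_n^t]$. That route also assigns the $A$-effects to $\theta_m$ and the $B$-effects to $\phi_m$, whereas your combined correction $A_{mm}\theta_m y_m^{t-1}+\phi_m$ leaves them entangled. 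To repair your plan, either take that route or work with the weighted fixed-point equations above, where your Step~2 regression is balanced against the alignment term.

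Step~2 is a genuinely different ingredient worth keeping. Since $L_s$ alone is minimized by the all-zero configuration, any nontrivial encoding must come from $L_{m,a}$. The paper's argument only ties $\theta_m,\phi_m$ to the server's $\hat A_{mn},\hat B_{mn}$ and leaves the link to the true blocks to experiments. However, Step~2 is not a regression of $\delta_m^t$ alone. The term you dropped, $C_{mm}A_{mm}(h_m^{t-1}-\hat h^{t-1}_{m,c})$, is correlated with $y_m^{t-1}$ whenever the coupling makes the local filter misspecified, so it must stay in the regression target.
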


\textbf{\textit{Intuition on Claim \ref{claim_phi_theta_combined}}.}  
Although each client observes only local data, the server gradients $\nabla_{\theta_m} L_s$ and $\nabla_{\phi_m} L_s$ implicitly carry information about how other clients’ data influence the server loss. When combined with local gradients, these updates guide $\theta_m$ and $\phi_m$ to represent cross-client effects, enabling the augmented model to capture interdependencies that the proprietary model cannot. A theoretical proof is provided in Section~\ref{Sec:Causal_Disentanglement}, with empirical validation in Section~\ref{Sec:Experiments}.

\textbf{Comm. to the Server:} Each client $m$ sends $\{\hat{h}_{m,c}^t,\hat{h}_{m,a}^t,h_{m,a}^t,u_m^t\}$ to the server. 

\subsection{Server Model}\label{Sec:Server}
\begin{assumption}\label{assumption_server_blocks}
    The diagonal blocks $A_{mm}$ and $B_{mm}$ are known at the server. 
\end{assumption}

\textbf{\textit{Rationale behind Assumption \ref{assumption_server_blocks}.}}  
From Assumption~\ref{assumption_blocks}, each client $m$ already has access to $A_{mm}$ and $B_{mm}$. Their effects are purely local and do not influence cross-client counterfactual reasoning, hence they are not central to our analysis. Communicating them to the server, therefore, incurs only a fixed one-time cost during initialization.  

After receiving $\hat{h}_{m,c}^t, \hat{h}_{m,a}^t, h_{m,a}^t,$ and $u_m^t$ from client $m$, together with the known diagonal blocks $\{ A_{mm}, B_{mm}\}$ and the current estimates of $\{\hat A_{mn}, \hat B_{mn}\}_{n \neq m}$, the server predicts the future state $h^{\,t}_{m,s}$, and computes a residual $r_{m, s}^t$ given by, 
\begin{align}
h^{\,t}_{m,s} &= A_{mm}\,\hat h^{\,t-1}_{m,c}
+ \sum_{\substack{n\neq m}} \hat A_{mn}\,\hat h^{\,t-1}_{n,c}
\nonumber \\&+ B_{mm}\,u_m^{t-1}
+ \sum_{\substack{n\neq m}} \hat B_{mn}\,u_n^{t-1}, \hspace{0.25cm} \text{and} \label{eq: server_state}\\
r_{m,s}^{t} &= h^{\,t}_{m,a} - h^{\,t}_{m,s}. \label{eq:server_residual}
\end{align}

To learn an estimate of the off-diagonal blocks $\{A_{mn}, B_{mn}\}_{n \neq m}$, the server minimizes a loss, 
\begin{equation}
\label{eq:server_loss}
\begin{split}
L_s &:=\frac{1}{T}\sum_{t=1}^{T}\sum_{m=1}^{M} \Big(\| r^t_{m,s} \|_2^2
\\&+ 
\xi_m\underset{\mathcal{D}}{\underbrace{
\,
\big\|
A_{mm}\big(\hat h^{\,t-1}_{m,a} - \hat h^{\,t-1}_{m,c}\big)
- \sum_{n \neq m} \hat A_{mn}\,\hat h^{\,t-1}_{n,c}
\big\|_2^2
\Big)}}
\end{split}
\end{equation}
\begin{claim}\label{claim:disentanglement}
With large $\xi_m$, $\mathcal{D}$ disentangles cross-client effects of $A_{mn}$ (endogenous) and $B_{mn}$ (exogeneous).
\end{claim}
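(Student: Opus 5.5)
The approach is to make the claim precise as a statement about the minimizers of $L_s$ in \eqref{eq:server_loss} as $\xi_m\to\infty$, and to show that in this limit the residual $r^t_{m,s}$ attributes the state-driven part of the cross-client discrepancy to $\hat A_{mn}$ and the input-driven part to $\hat B_{mn}$. First I substitute the augmented prediction \eqref{eq:aug_state_pred} into the server residual \eqref{eq:server_residual}. Using \eqref{eq: server_state}, the terms $B_{mm}u_m^{t-1}$ cancel, and I get
\begin{align*}
r^t_{m,s} ={}& \Big[A_{mm}\big(\hat h^{t-1}_{m,a}-\hat h^{t-1}_{m,c}\big) - \sum_{n\neq m}\hat A_{mn}\hat h^{t-1}_{n,c}\Big] \\
&+ \Big[\phi_m - \sum_{n\neq m}\hat B_{mn}u_n^{t-1}\Big].
\end{align*}
The first bracket is exactly the argument of $\mathcal{D}$, and the second involves only inputs. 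So $L_s$ separates into a ``state channel'' and an ``input channel'', plus a cross term between them.

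In the second step I take $\xi_m\to\infty$ (a penalty-method argument). Any sequence of minimizers must drive $\mathcal{D}\to 0$. Since $\hat h^{t-1}_{m,a}-\hat h^{t-1}_{m,c}=\theta_m y_m^{t-1}$ by \eqref{eq:aug_state_estimate}, the limit satisfies the constraint $A_{mm}\theta_m y^{t-1}_m=\sum_{n\neq m}\hat A_{mn}\hat h^{t-1}_{n,c}$ for all $t$. So the $\theta_m$-correction, and only it, carries the endogenous cross-client propagation, and $\hat A_{mn}$ is pinned to it. On this constraint set the first bracket vanishes identically. The remaining objective is then $\frac1T\sum_t\|\phi_m-\sum_{n\neq m}\hat B_{mn}u_n^{t-1}\|^2$, a least-squares regression of the $\phi_m$ channel on other clients' inputs alone. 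I would then invoke the data-generating model \eqref{eq:lti}, with the true split $h_m^t = A_{mm}h^{t-1}_m+\sum_{n\neq m} A_{mn}h^{t-1}_n + B_{mm}u_m^{t-1}+\sum_{n\neq m} B_{mn}u_n^{t-1}+w$. Under a persistence-of-excitation and no-collinearity condition between the stacked $\{\hat h_{n,c}\}$ and $\{u_n\}$ sequences (inputs exogenous, so uncorrelated with past states beyond what $A$ transmits), the population minimizer gives $\hat A_{mn}\to A_{mn}$ and $\hat B_{mn}\to B_{mn}$. It also makes $\phi_m$ and $\theta_m$ carry the $B$- and $A$-effects respectively, which connects to Claim~\ref{claim_phi_theta_combined}.

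Finally, I would contrast this with the case $\xi_m=0$. There only the sum of the two brackets is penalized, so any reallocation $(\hat A_{mn}+\Delta_A,\hat B_{mn}+\Delta_B)$ with $\Delta_A\hat h_{n,c}^{t-1}=-\Delta_B u_n^{t-1}$ along the data, together with compensating $\theta_m,\phi_m$, leaves $L_s$ unchanged. This shows that $\mathcal{D}$ is what removes the non-identifiability.

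The main obstacle is the second step. I need to justify that the penalty limit is well behaved: epi-convergence or $\Gamma$-convergence of the penalized objectives to the constrained problem, which is straightforward because $L_s$ is a convex quadratic in $(\hat A,\hat B)$ for fixed client parameters. I also need to justify that the constraint set is nonempty, i.e.\ that $\theta_m y_m^{t-1}$ can realize $A_{mm}^{-1}\sum_n \hat A_{mn}\hat h^{t-1}_{n,c}$. For that I would assume $A_{mm}$ is invertible and that $y_m$ is rich enough ($D_m\gg P_m$). Failing exact feasibility, I would settle for showing that the $\hat A$-component of the cross term vanishes at rate $O(1/\xi_m)$.
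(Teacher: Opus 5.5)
Your first step is correct. Substituting \eqref{eq:aug_state_pred} into \eqref{eq:server_residual} gives $r^t_{m,s}=\big[A_{mm}(\hat h^{t-1}_{m,a}-\hat h^{t-1}_{m,c})-\sum_{n\neq m}\hat A_{mn}\hat h^{t-1}_{n,c}\big]+\big[\phi_m-\sum_{n\neq m}\hat B_{mn}u^{t-1}_n\big]$, and this identity underlies the paper's proofs. There, $r^t_m-x^t_m$ is the full residual and $z^t_m-x^t_m$ is your first bracket. However, the paper supports the claim only through Theorem~\ref{thm:disentanglement_main_thm} and Corollary~\ref{cor:disentanglement}, which are first-moment statements at stationary points of the server--client iteration:
\begin{itemize}[leftmargin=*, itemsep=0.5pt, topsep=1pt]
\item The server's stationarity in $\hat A_{mn}$, time-averaged and passed through ergodicity, gives $(1+\xi)\mathrm{E}[x^t_m]=\mathrm{E}[r^t_m]+\xi\,\mathrm{E}[z^t_m]$. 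Letting $\xi\to\infty$ yields $\mathrm{E}[A_{mm}\theta^*_m y^t_m]=\mathrm{E}[\sum_{n\neq m}\hat A^*_{mn}\hat h^t_{n,c}]$.
\item Stationarity in $\hat B_{mn}$, combined with this identity and the augmented model \eqref{eq:aug_state_pred}, then gives $\phi^*_m=\mathrm{E}[\sum_{n\neq m}\hat B^*_{mn}u^t_n]$.
\end{itemize}
``Disentanglement'' there means that, in expectation, the $\theta_m$-channel matches the $\hat A$-block and $\phi_m$ matches the $\hat B$-block. Recovery of $A_{mn},B_{mn}$ is never asserted, and Section~\ref{sec:causal_id} explicitly disclaims it. Your second step aims at that stronger statement, and that is where the argument fails.

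\textbf{Minimizers of $L_s$ are the wrong object.} For every $\xi_m$, $L_s=0$ at $\theta_m=\phi_m=\hat A_{mn}=\hat B_{mn}=0$: then $h^t_{m,a}=h^t_{m,s}$ and the argument of $\mathcal{D}$ vanishes. So the penalty limit of minimizers is trivial. What keeps the client parameters away from zero is $L_{m,a}$ in the coupled iteration, and your argument never uses it.

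\textbf{The identification step does not follow.} On $\{\mathcal{D}=0\}$, your reduced objective $\frac1T\sum_t\|\phi_m-\sum_{n\neq m}\hat B_{mn}u^{t-1}_n\|_2^2$ contains neither measurements nor $B_{mn}$. For zero-mean, persistently exciting inputs, its minimizer in $\hat B_{mn}$ tends to $0$ whatever $\phi_m$ is. No excitation condition can yield $\hat B_{mn}\to B_{mn}$ from an objective in which $B_{mn}$ never appears.

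The pathwise constraint $A_{mm}\theta_m y^{t-1}_m=\sum_{n\neq m}\hat A_{mn}\hat h^{t-1}_{n,c}$ likewise contains no $A_{mn}$, and in population it is degenerate. Suppose $R_v\succ0$ and measurement noise is independent across clients. Then $v^{t-1}_m$ is independent of $(h^{t-1}_m,\{\hat h^{t-1}_{n,c}\}_{n\neq m})$. The constraint therefore forces $A_{mm}\theta_m v^{t-1}_m$ to be a.s.\ constant, so $A_{mm}\theta_m=0$. It follows that $\hat A_{mn}=0$ whenever the stacked $\hat h^{t-1}_{n,c}$ have nonsingular covariance. The hard-constraint limit thus collapses the cross-client channel rather than pinning it to $A_{mn}$. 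Your feasibility concern is also backwards: the origin is always feasible, and in population it is generically the only feasible point.

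\textbf{The $\xi_m=0$ contrast is vacuous.} It requires an exact relation $\Delta_A\hat h^{t-1}_{n,c}=-\Delta_B u^{t-1}_n$ along the data, which your own no-collinearity assumption excludes.

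\textbf{How to repair it.} To prove what the paper proves, drop identification and the pathwise constraint. Work instead with the server's first-order conditions in $\hat A_{mn}$ and $\hat B_{mn}$ plus ergodicity, as above. State that step carefully: the exact $\hat A_{mn}$-gradient is $\frac1T\sum_t\nabla_{x^t_m}L^t_s\,(\hat h^t_{n,c})^\top$. It is not the bare time average of $\nabla_{x^t_m}L^t_s$, which is what the paper sets to zero.
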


Claim~\ref{claim:disentanglement} ensures that each client separates the contributions of $A_{mn}$ and $B_{mn}$ when learning its parameters $\theta_m$ and $\phi_m$, respectively. Theoretical proof of Claim~\ref{claim:disentanglement} is provided in Section~\ref{Sec:Causal_Disentanglement}.

\textbf{Parameter Updating.} The server updates the off-diagonal blocks by block gradient descent, 
\begin{align}
\hat A_{mn}^{k+1} &= \hat A_{mn}^k - \alpha_A \,\nabla_{\hat A_{mn}^k} L_s, \hspace{0.2cm} \text{and} \\
\hat B_{mn}^{k+1} &= \hat B_{mn}^{k} - \alpha_B \,\nabla_{\hat B_{mn}^k} L_s, \qquad \forall n \neq m. 
\end{align}
\textbf{Comm. to the Clients.} The server communicates gradients $\nabla_{h^{\,t}_{m,a}} L_s$ to each client $m$. 

\subsection{Overhead}
\textbf{Communication.} At each training round, every client transmits latent states and inputs $\{\hat{h}_{m,c}^t,\hat{h}_{m,a}^t,h_{m,a}^t,u_m^t\}$ of dimension $\mathcal{O}(P_m + U_m)$ to the server, and receives gradient updates of size $\mathcal{O}(P_m)$, leading to a per-round communication cost of $\mathcal{O}(M(P+U))$, where $P = \sum_m P_m$ and $U = \sum_m U_m$. 

\textbf{Computation.} The server performs block gradient descent over matrices $\{A_{mn}, B_{mn}\}$, with per-round complexity $\mathcal{O}(M^2 P^2 + M^2 U^2)$ in the worst case. Each client’s local updates (Eqs.~\ref{eq:client_update}) require $\mathcal{O}(TP_m^2)$ operations per round, dominated by matrix-vector products with dimension $P_m$.

\section{Causal Representations}\label{Sec:Causal_Disentanglement}
\label{sec:disentanglement}
The training in Section~\ref{Sec: Federated_training_methodology} is iterative:  
\textbf{(i)} the server learns cross-client dependencies via off-diagonal blocks $A_{mn},B_{mn}$ ($n\!\neq\!m$), and  
\textbf{(ii)} clients encode them in local parameters $\theta_m,\phi_m$.  
Cross-client information flows to the server through low-dimensional states, and to clients via server-loss gradients $L_s$ (Claim~\ref{claim_phi_theta_combined}). We now show these gradients explicitly encode $A_{mn},B_{mn}$:

The gradients sent from the server to the client $m$ are $\nabla_{h_{m,a}^t} L_s$ defined as, 
\begin{align}
\nabla_{h_{m,a}^t} L_s \nonumber
&:= \frac{2}{T}\!\Big(
h_{m,a}^t
- \big[
A_{mm}\,\hat h^{\,t-1}_{m,c}
+ \sum_{\substack{n\neq m}} \mathbf{{\hat A}}_{mn}\,\hat h^{\,t-1}_{n,c}
\\& + B_{mm}\,u_m^{t-1}
+ \sum_{\substack{n\neq m}} \mathbf{{\hat B}}_{mn}\,u_n^{t-1}
\big] \Big). \label{eq:L_s_local_gradients}
\end{align}
Eq.~\ref{eq:L_s_local_gradients} shows that $\nabla_{h_{m,a}^t} L_s$ depends on both $A_{mn}$ and $B_{mn}$. By Eqs.~\ref{eq:L_s_grad_chain_rule}–\ref{eq:client_update}, $\theta_m$ and $\phi_m$ thus learn an \textbf{entangled representation} of these effects. Yet counterfactuals in Eq.~\ref{eq_counterfactual} require isolating $B_{mn}$ from $A_{mn}$, making entanglement a barrier to answering query \textbf{Q1}. The next section addresses disentanglement via Claim~\ref{claim:disentanglement}.

For all clients $m$, at any time $t$, the following hold:
\begin{theorem} \label{thm:disentanglement_main_thm}
In Eq.~\ref{eq:server_loss}, as $\xi \rightarrow \infty$, the stationary points $\hat{A}_{mn}^*$ of the server and $\theta_m^*$ of the clients satisfy  
\begin{equation}
\mathrm{E}\Big[A_{mm}\theta_m^* y_m^{t}\Big] = \mathrm{E}\Big[\sum_{n \neq m} \hat A_{mn}^*\,\hat h^{\,t}_{n,c}\Big] \quad \forall m \neq n.
\end{equation}
\end{theorem}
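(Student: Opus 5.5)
The plan is to read off the stationarity conditions of the joint objective and let $\xi_m\to\infty$ force the disentanglement term $\mathcal{D}$ to its minimum. First I would substitute the augmented estimate from Eq.~\eqref{eq:aug_state_estimate}, $\hat h^{t-1}_{m,a}-\hat h^{t-1}_{m,c}=\theta_m y^{t-1}_m$, into $\mathcal{D}$. The penalty for client $m$ then becomes
\[
\xi_m\,\frac{1}{T}\sum_{t}\Big\|A_{mm}\theta_m y^{t-1}_m-\sum_{n\neq m}\hat A_{mn}\hat h^{t-1}_{n,c}\Big\|_2^2 .
\]
Call the vector inside the norm $e^{t-1}_m(\theta_m,\hat A_{m\cdot})$. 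This depends only on $\theta_m$ and the row block $\{\hat A_{mn}\}_{n\neq m}$, and is linear in each. Shifting the time index $t-1\mapsto t$ gives exactly the quantity appearing in the theorem.

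Next I would write the first-order conditions at a joint stationary point $(\hat A^*,\hat B^*,\theta^*,\phi^*)$. For the server, $\nabla_{\hat A_{mn}}L_s=0$ gives
\[
\frac{2}{T}\sum_t\Big(-r^t_{m,s}-\xi_m e^{t-1}_m\Big)(\hat h^{t-1}_{n,c})^\top=0 .
\]
For the client, the combined update of $\theta_m$ is stationary, so $\eta_1\nabla_{\theta_m}L_{m,a}+\eta_2\nabla_{\theta_m}L_s=0$. The part of $\nabla_{\theta_m}L_s$ coming from $\mathcal{D}$ is $\frac{2\xi_m}{T}\sum_t A_{mm}^\top e^{t-1}_m (y^{t-1}_m)^\top$. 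I would divide both conditions by $\xi_m$ and let $\xi_m\to\infty$, assuming the stationary points remain bounded, so that limits exist along a subsequence. All terms not multiplied by $\xi_m$ then vanish: the reconstruction gradient, the residual $r_{m,s}$, and the $\phi_m$/$B$ terms. In the limit, $e_m$ is orthogonal to the regressors $\hat h_{n,c}$ and, through $A_{mm}^\top$, to $y_m$.

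The cleaner way to finish is to use the penalty argument directly. Every stationary point of $L_{\text{rest}}+\xi\mathcal{D}$ is a stationary point of $\mathcal{D}/\xi+\ldots$. Because $\mathcal{D}$ is a convex quadratic whose infimum $0$ is attained, e.g.\ at $\theta_m=0,\hat A_{mn}=0$, standard penalty-method results imply that any limit point minimizes $\mathcal{D}$, so $\mathcal{D}\to0$. Then $\frac1T\sum_t\|e^t_m\|^2=0$, so $e^t_m=0$ for each $t$. Taking the expectation, understood as the empirical time average or as $T\to\infty$ under ergodicity, gives $\mathrm{E}[A_{mm}\theta^*_m y^t_m]=\mathrm{E}[\sum_{n\neq m}\hat A^*_{mn}\hat h^t_{n,c}]$. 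The statement only asks for equality in expectation, so the weaker orthogonality conditions already suffice. If the regressors include a constant or have nonzero mean, the orthogonality of $e_m$ to the regressors gives $\mathrm{E}[e_m]=0$ directly.

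The main obstacle is justifying the limit: showing that stationary points, not only global minimizers, drive the penalty to zero, and that $(\theta^*_m,\hat A^*_{mn})$ stay bounded as $\xi\to\infty$. I would first try to exploit the bilinear structure. For fixed $\theta_m$, the problem in $\hat A_{m\cdot}$ is least squares, and for fixed $\hat A$ it is least squares in $\theta_m$. Stationarity in both blocks therefore means $e_m$ is the residual of a projection onto both regressor spans. Combined with the $\xi$-scaled conditions, this yields $\mathrm{E}[e_m]=0$, provided a mean or intercept component is present in $\hat h_{n,c}$ or $y_m$. I would state that as a mild assumption if needed.
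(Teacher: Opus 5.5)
Your argument is correct, modulo the boundedness of the stationary points as $\xi_m\to\infty$. You flag this, and the paper's proof assumes it tacitly as well. However, you take a different route from the paper.

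The paper uses only the server condition. With $x_m^t=\sum_{n\neq m}\hat A_{mn}\hat h^{\,t}_{n,c}$ it computes $\nabla_{x_m^t}L_s^t=2[(1+\xi)x_m^t-(r_m^t+\xi z_m^t)]$ and sets $\frac1T\sum_t\nabla_{x_m^t}L_s^t=0$. It then passes to expectations by ergodicity, giving $(1+\xi)\mathrm{E}[x_m^t]=\mathrm{E}[r_m^t]+\xi\,\mathrm{E}[z_m^t]$, and lets $\xi\to\infty$. The stationarity condition actually available is $\frac1T\sum_t\nabla_{x_m^t}L_s^t\,(\hat h^{\,t}_{n,c})^\top=0$. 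The paper's collapse to the unweighted average is therefore exactly the step you noticed needs a constant component among the regressors, and the paper does not justify it.

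You avoid this step by also using the $\xi$-scaled client condition on $\theta_m$. Note that $e_m^t$ is jointly linear and homogeneous in $(\theta_m,\{\hat A_{mn}\}_{n\neq m})$, not merely linear in each block, so your ``bilinear''/alternating least-squares discussion is unnecessary. Euler's identity
$2\mathcal D_m=\langle\nabla_{\theta_m}\mathcal D_m,\theta_m\rangle+\sum_{n\neq m}\langle\nabla_{\hat A_{mn}}\mathcal D_m,\hat A_{mn}\rangle$
then turns your two limiting orthogonality conditions directly into $\mathcal D_m=0$, i.e.\ $e_m^t=0$ for every $t$. Make this one-line step explicit rather than appealing to ``standard penalty-method results'', which concern global minimizers, not stationary points.

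What each route buys:
\begin{itemize}
\item \textbf{Your route} gives a stronger conclusion: $e_m^t=0$ pointwise, not merely in mean, with no intercept assumption. The cost is that the $\theta_m$ update must actually contain the $\nabla_{\hat h^{t-1}_{m,a}}L_s$ term. That term appears in the main-text gradient formula, but Algorithm~\ref{algo:algorithm_1} and the ``Comm.\ to the Clients'' paragraph drop it.
\item \textbf{The paper's route} needs only server-side stationarity. However, it reaches only the mean statement, and only via the unjustified collapse described above.
\end{itemize}

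One side remark of yours is wrong. A nonzero mean of the regressors does not by itself turn $\mathrm{E}[e_m\hat h_{n,c}^\top]=0$ into $\mathrm{E}[e_m]=0$; you need a constant in the regressor span.
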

\begin{corollary} \label{cor:disentanglement}
As $\xi \rightarrow \infty$, the stationary points $B_{mn}^*$ of the server and $\phi_m^*$ of the clients satisfy  
\begin{equation}
    \phi_m^* = \mathrm{E}\Big[\sum_{n \neq m} \hat{B}_{mn}^* u_n^{t}\Big] \quad \forall m \neq n.
\end{equation}
\end{corollary}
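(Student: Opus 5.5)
The idea is to combine the stationarity condition for \(\phi_m\) with Theorem~\ref{thm:disentanglement_main_thm}, so that the state-driven part of the server residual cancels and only the input-driven part remains.

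\textbf{Step 1: rewrite the server residual.} Substitute the augmented client model into the server residual. By Eqs.~\eqref{eq:aug_state_pred}--\eqref{eq:aug_state_estimate} and with \(\hat h^{t-1}_{m,a}=\hat h^{t-1}_{m,c}+\theta_m y^{t-1}_m\), we have \(h^t_{m,a}=A_{mm}\hat h^{t-1}_{m,c}+A_{mm}\theta_m y^{t-1}_m+B_{mm}u^{t-1}_m+\phi_m\). Subtracting the server prediction in Eq.~\eqref{eq: server_state} cancels the local \(A_{mm}\) and \(B_{mm}\) terms, leaving
\begin{equation*}
r^t_{m,s}=\Big(A_{mm}\theta_m y^{t-1}_m-\sum_{n\neq m}\hat A_{mn}\hat h^{t-1}_{n,c}\Big)+\Big(\phi_m-\sum_{n\neq m}\hat B_{mn}u^{t-1}_n\Big).
\end{equation*}
The first bracket is exactly the argument of the penalty \(\mathcal{D}\) in Eq.~\eqref{eq:server_loss}.

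\textbf{Step 2: stationarity in \(\phi_m\).} Write down the first-order condition for \(\phi_m\). The server gradient is \(\nabla_{\phi_m}L_s=\sum_t\nabla_{h^t_{m,a}}L_s=\frac{2}{T}\sum_t r^t_{m,s}\) by Eqs.~\eqref{eq:L_s_grad_chain_rule} and~\eqref{eq:L_s_local_gradients}. Note that \(\mathcal{D}\) does not depend on \(\phi_m\). At a joint stationary point, the combined update in Eq.~\eqref{eq:client_update} must vanish. I would interpret this as saying that, in the regime \(\xi\to\infty\) where server terms dominate (or with \(\gamma_1\) negligible, as implicitly used for the theorem), the expected server residual is zero: \(\mathrm{E}[r^t_{m,s}]=0\).

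\textbf{Step 3: invoke Theorem~\ref{thm:disentanglement_main_thm}.} By stationarity (time-invariance of the expectations), the theorem gives \(\mathrm{E}[A_{mm}\theta^*_m y^{t-1}_m-\sum_{n\neq m}\hat A^*_{mn}\hat h^{t-1}_{n,c}]=0\). Taking expectations of the decomposition in Step~1, the first bracket therefore vanishes, and \(\mathrm{E}[r^t_{m,s}]=0\) yields \(\phi^*_m=\mathrm{E}[\sum_{n\neq m}\hat B^*_{mn}u^{t-1}_n]\). Since \(\phi^*_m\) is a constant, the time index can be shifted to match the statement.

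\textbf{Main obstacle.} The delicate step is Step~2: justifying that the client's own reconstruction gradient \(\nabla_{\phi_m}L_{m,a}\) does not shift the stationary point. I would first try to argue that, as \(\xi\to\infty\), the relevant equilibrium is the one selected by the server loss, mirroring how the theorem itself is established. Failing that, I would show that \(\mathrm{E}[r^t_{m,a}]=0\) holds simultaneously at the oracle, because the true state satisfies both equations.
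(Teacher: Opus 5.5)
Your argument has the same skeleton as the paper's. You split the server residual into the $\mathcal{D}$-argument plus $\phi_m-\sum_{n\neq m}\hat B_{mn}u_n^{t-1}$, use a first-order condition to get $\mathrm{E}[r^t_{m,s}]=0$, and let Theorem~\ref{thm:disentanglement_main_thm} remove the $\mathcal{D}$-part in expectation. The paper substitutes the augmented model at the end rather than up front, which is immaterial.

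The one real difference is where $\frac1T\sum_t r^t_{m,s}=0$ comes from. The paper treats $p_m^t=\sum_{n\neq m}\hat B_{mn}u_n^t$ as a free variable of the server's $B$-block and sets the time average of $\nabla_{p_m^t}L_s^t$ to zero. You get the same equation from $\nabla_{\phi_m}L_s=\frac{2}{T}\sum_t r^t_{m,s}$. Yours is the better-grounded source. Genuine stationarity in $\hat B_{mn}$ only yields $\frac1T\sum_t r^t_{m,s}(u_n^{t-1})^\top=0$, i.e.\ orthogonality to a zero-mean regressor, and that does not force the residual mean to vanish. It is the intercept $\phi_m$ that pins the mean down. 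So do not try to sidestep your obstacle by switching to the server-side condition.

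Your proposed resolution of Step~2, however, does not work as written. Since $\mathcal{D}$ does not depend on $\phi_m$, $\xi$ never enters $\nabla_{\phi_m}L_s$, so large $\xi$ gives the server loss no dominance in the $\phi_m$ direction. A property of the oracle also says nothing about the federated stationary point.

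The vanishing of the combined update only gives $\mathrm{E}[r^t_{m,s}]=\tfrac{\gamma_1}{\gamma_2}C_{mm}^\top\mathrm{E}[r^t_{m,a}]$. What you actually need is therefore $C_{mm}^\top\mathrm{E}[r^{*,t}_{m,a}]=0$ at the federated stationary point, which is exactly the second normal equation of Lemma~\ref{lem:normal_eqs}. The paper gets this from its convention that at a stationary point each party's own gradient vanishes, in particular $\nabla_{\phi_m}L_{m,a}=0$. Adopt that definition explicitly. Then the vanishing combined update with $\gamma_2>0$ forces $\nabla_{\phi_m}L_s=0$, and your Step~2, hence the whole argument, goes through.
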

With sufficiently large $\xi_m$ (Eq.~\ref{eq:server_loss}), Theorem~\ref{thm:disentanglement_main_thm} and Corollary~\ref{cor:disentanglement} ensure client parameters disentangle $A_{mn}$ and $B_{mn}$, even though the server communicated gradients $\nabla_{h_{m,a}^t} L_s$ remain entangled (Eq~\eqref{eq:L_s_local_gradients}).

We have established that the client $m$ implicitly learns $\sum_{n \neq m} B_{mn} u_n^t$ (in expectation) through its augmented parameter $\phi_m$, while the server explicitly learns $B_{mn}$ for all $n \neq m$. Consequently, both can answer counterfactual queries, albeit with different fidelity:  

\vspace{0.1cm}
\textbf{(1) Server.} The server evaluates counterfactuals at the state level, estimating the following quantity: 
\begin{align}
    \mathbb{E}[h_{m,s}^t \mid \mathrm{do}(u_n^{t-1} = u_{n_1})] & - 
    \mathbb{E}[h_{m,s}^t \mid \mathrm{do}(u_n^{t-1} = u_{n_0})] \nonumber
    \\& = B_{mn}(u_{n_1} - u_{n_0}),\label{eq_counterfactual_server}
\end{align}
answering:
\textbf{Q2.} \textit{``What would $h_{m,s}^t$ have been if client $n$’s input $u_n^{t-1}$ had changed?''}  

\vspace{0.1cm}
\textbf{(2) Client.} The client evaluates counterfactuals at the measurement level, essentially estimating:
\begin{align}
    \mathbb{E}[\tilde{y}^{t}_{m, a} & \mid \mathrm{do}(\phi_m  = \phi_{m_1})] - 
    \mathbb{E}[\tilde{y}^{t}_{m, a} \mid \mathrm{do}(\phi_m = \phi_{m_0})] \nonumber
    \\& = C_{mm}(\phi_{m_1} - \phi_{m_0}), \nonumber\\
    & = C_{mm} \cdot \mathrm{E}\Big[\sum_{n \neq m}B_{mn}(u_{n_1}-u_{n_0})\Big]
    \label{eq_counterfactual_client}
\end{align}
which, by Corollary \ref{cor:disentanglement}, answers the query: \\ 
\textbf{Q3.} \textit{``What would $\tilde{y}^{t}_{m, a}$ have been if the aggregated effects of inputs from other clients had changed?''}  

Thus, the server isolates the effect of a specific client’s input $u_n$, while the client reasons only through the aggregated influence $\sum_{n \neq m} B_{mn} u_n$ encoded in $\phi_m$.

\subsection{Causal Identifiability} \label{sec:causal_id}

A central question in causal representation learning is whether a 
model has truly learned the underlying causal structure or has merely 
fit the observed data. Ideally, 
one would verify that the server learned parameters $\hat{A}_{mn}$ and 
$\hat{B}_{mn}$ match the ground-truth structural parameters $(A_mn, B_mn)$. However, 
this comparison is complicated by a fundamental identifiability issue: 
The structural parameters of an LTI system are \textbf{not} uniquely recoverable 
from input $u$ and measurements $y$ (also called \textit{input-output} data). 

For example, if the latent state $h^t_m$ 
is reparameterized by any invertible transformation $T$ as $h_t' = 
Th_t$, then the triple $(A' = TAT^{-1},\ B' = TB,\ C' = CT^{-1})$ 
defines a different-looking set of structural matrices that generates 
identical observations $y^t$. Consequently, different parameterizations 
of $(A, B, C)$ are all equally consistent with the input-output data. This
makes a direct comparison of learned parameters to a specific 
ground-truth realization, an ill-posed criterion.

That brings us to the important question: \textit{What is uniquely 
determined by the input-output data?} That quantity is the 
\textit{transfer function} $G(z) = C(zI - A)^{-1}B$, which encodes 
how interventions on $u^t$ propagate to measurements $y^t$. Given the 
representation of LTI systems as SCMs discussed in Sections 
\ref{sec:Prelim} and \ref{Sec:Counterfactual}, the \textbf{transfer function 
is precisely the object that carries causal meaning}. It is invariant 
to basis changes in the latent state and uniquely determines the ATE 
of Proposition~\ref{prop:counterfactual_lti}. Our framework implicitly learns this transfer function $G(z)$ 
without any data centralization.

\section{Privacy Analysis}
We provide theoretical results on the differential privacy analysis of the proposed framework in the Appendix \ref{appendix:privacy}.

\section{Convergence to an Oracle}
 \begin{figure*}[t]
  \centering
  \begin{subfigure}[t]{0.32\textwidth}
    \centering
    \includegraphics[width=\linewidth]{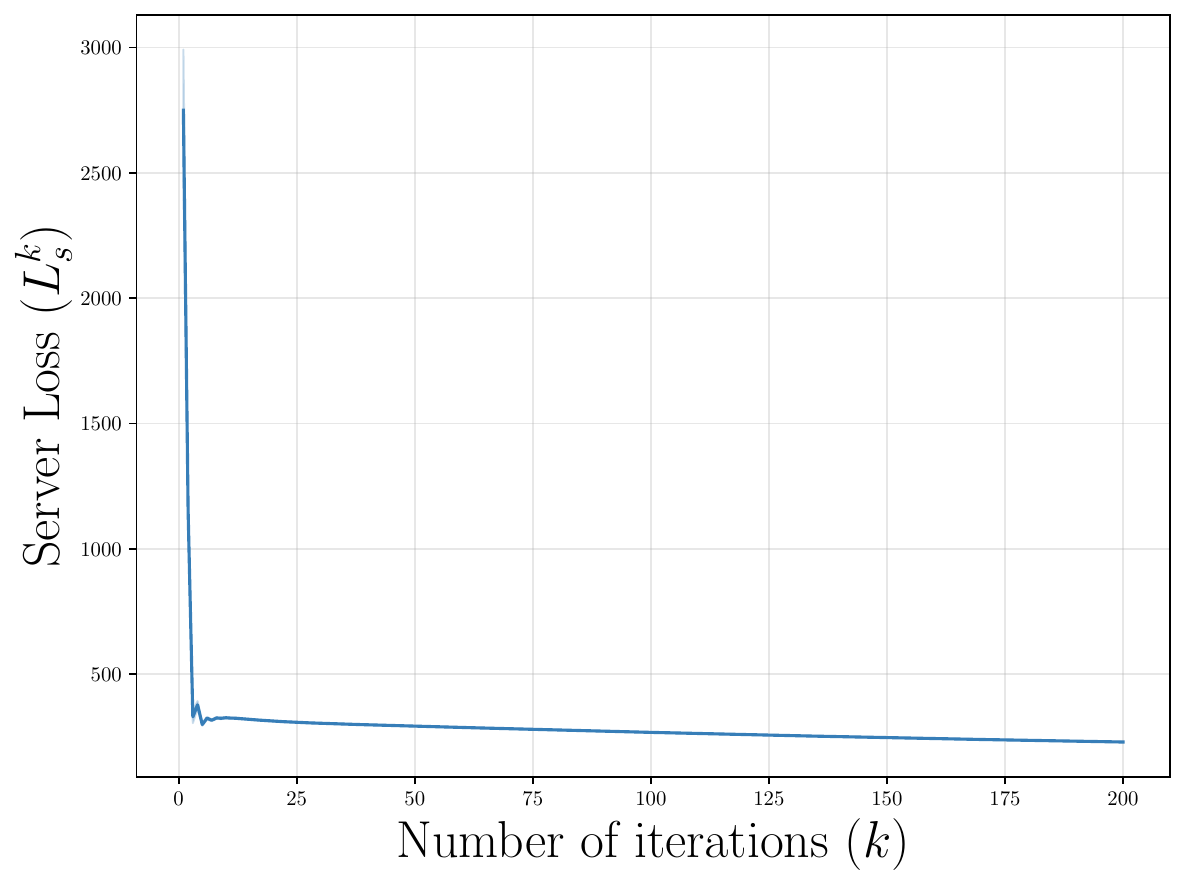}
    \caption{Global loss $L_s^k$}
    \label{fig:syn_global_loss}
  \end{subfigure}
    \hfill
  \begin{subfigure}[t]{0.32\textwidth}
    \centering
    \includegraphics[width=\linewidth]{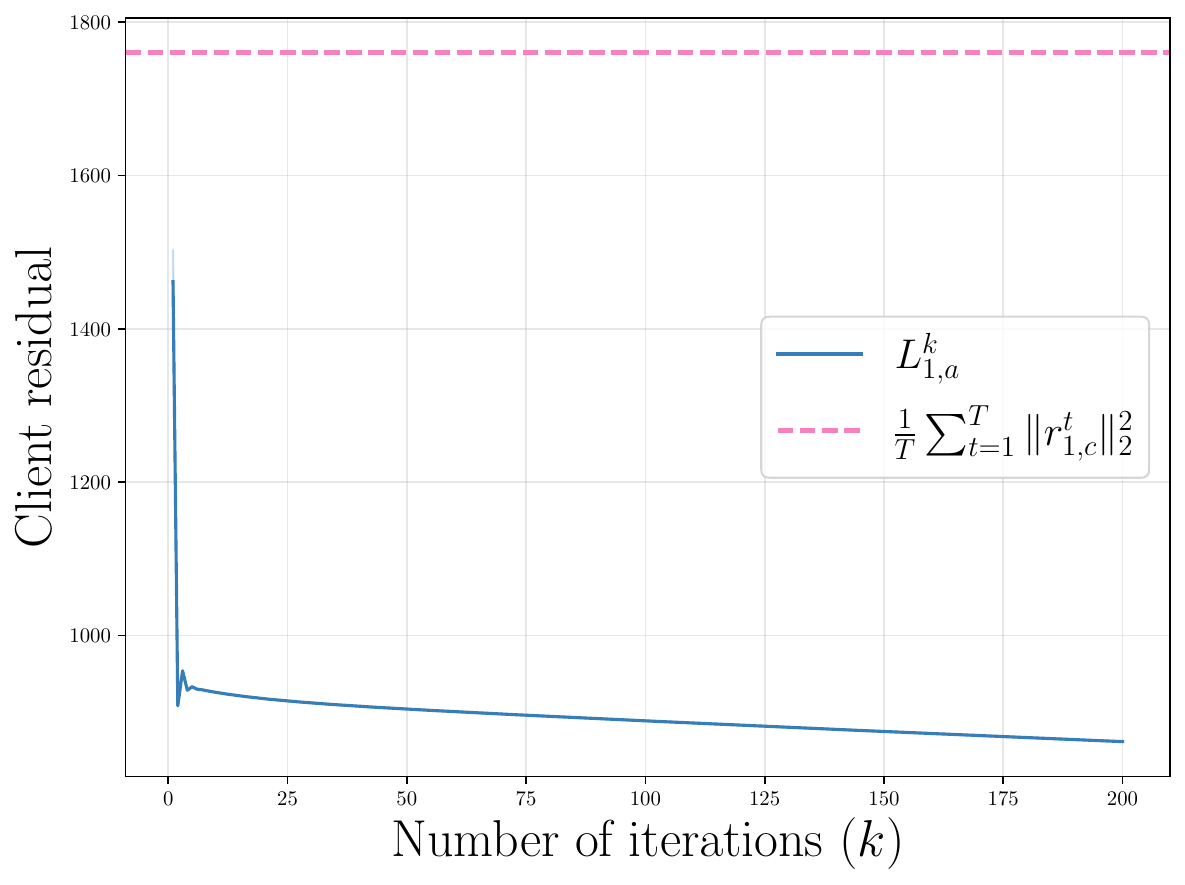}
    \caption{Client 1: losses}
    \label{fig:syn_client1_residual}
  \end{subfigure}
  \hfill
  \begin{subfigure}[t]{0.32\textwidth}
    \centering
    \includegraphics[width=\linewidth]{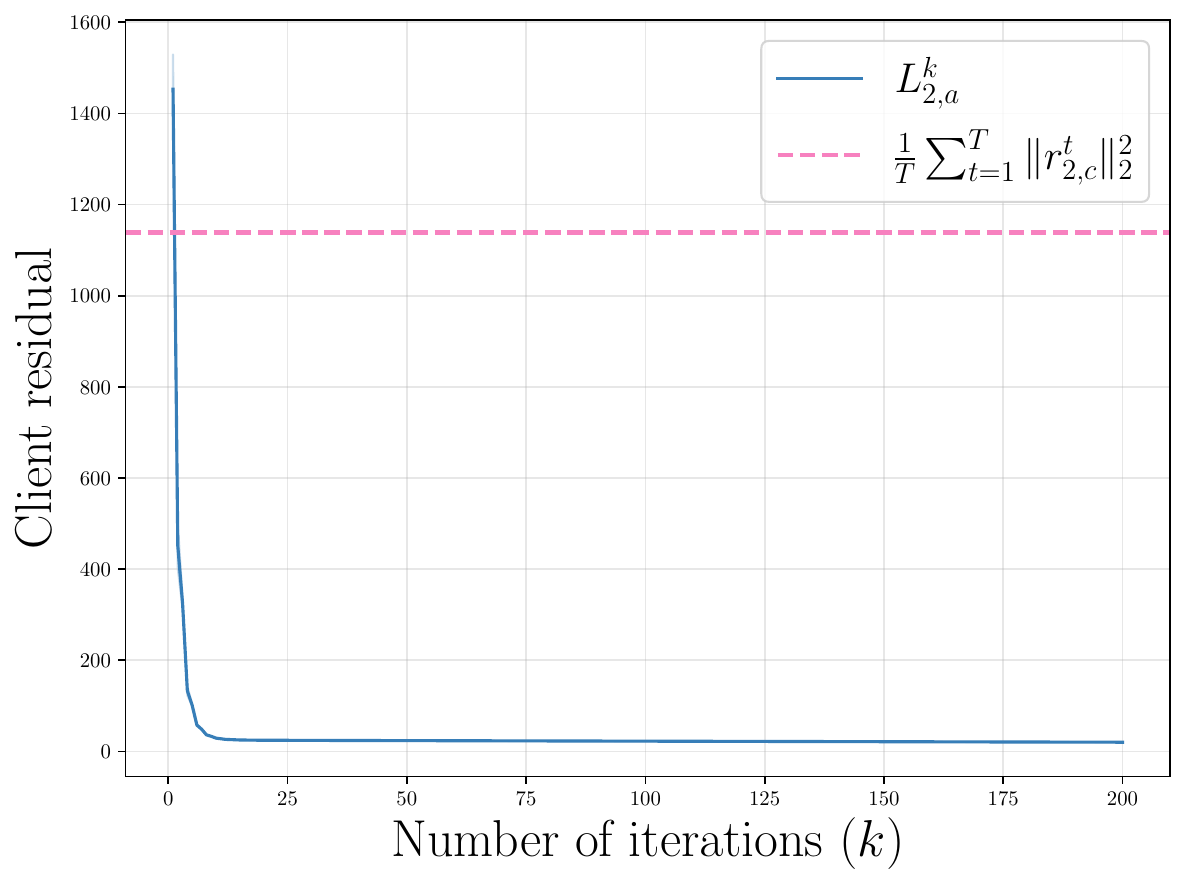}
    \caption{Client 2: losses}
    \label{fig:syn_client2_residual}
  \end{subfigure}
  \caption{Global loss at the server and losses at the client from different models (augmented client loss: $L_{m,a}^k$, proprietary client loss: $\frac{1}{T} \sum_{t = 1}^T {||{r^t_{m,c}}_2||}^2$ are plotted against the number of iterations ($k$).}
  \label{fig:syn_losses}
\end{figure*}
We consider a centralized model as the \textit{oracle} for our approach. Since each proprietary client model is a local Kalman filter (KF), a natural oracle that encodes full cross-client causality is a centralized KF with access to all clients’ data and full knowledge of $(A,B)$. Thus the oracle's dynamics are governed as: 
\begin{align}
h_o^{t} &= A\,\hat h_o^{t-1} + B\,u^{t-1}; \hspace{0.5cm}
\tilde y_o^{t} = C\,h_o^{t}, \\
r_o^{t} &= y^{t} - \tilde y_o^{t}; \hspace{0.25cm} \text{and} \hspace{0.25cm}
\hat h_o^{t} = h_o^{t} + K_o^{t}\,r_o^{t},
\end{align}
where $K_o^{t}$ is the centralized Kalman gain. 
\begin{assumption}[\textbf{Exogenous Input}]
\label{ass:erg_pe}
The input $u^{t}$ is zero-mean, \textit{stationary}, \textit{ergodic}, and independent of noises $w^{t},$ and $v^{t}$ (Eq~\eqref{eq:lti}), with $\mathbb{E}\|u^{t}\|^{2}<\infty$. Moreover, $u^{t}$ is \emph{persistently excited} of order $L$ if there exist $L\in\mathbb{N}$ and $\alpha>0$ such that
\begin{equation}
\frac{1}{L}\sum_{k=t}^{t+L-1} u^{k}u^{k\!\top}\ \succeq\ \alpha I_{d_u}\qquad\text{for all }t.
\end{equation}
\end{assumption}
\begin{lemma}[\textbf{Normal equations}]
\label{lem:normal_eqs}
Under Assumption~\ref{ass:erg_pe}, at stationary points we have,
\begin{align}
\mathrm{E}\!\big[(C_{mm}A_{mm})^\top r_{m,a}^{*,t}(y_m^{t-1})^\top\big] = 0, \nonumber\\ 
\mathrm{E}\!\big[C_{mm}^\top r_{m,a}^{*,t}\big] = 0.
\end{align}
where, $r_{m,a}^{*,t} = y_m^t - \tilde{y}_{m, a}^t(\theta_m^*, \phi_m^*)$, same as Eq.~\eqref{eq:client_residual}.
\end{lemma}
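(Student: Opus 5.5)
The plan is to read Lemma~\ref{lem:normal_eqs} as the first-order optimality conditions of the expected client reconstruction loss $L_{m,a}$ with respect to $(\theta_m,\phi_m)$. Ergodicity from Assumption~\ref{ass:erg_pe} lets us replace time averages by expectations. We would work with the population objective $\bar L_{m,a}(\theta_m,\phi_m)=\mathrm{E}\|r_{m,a}^t\|_2^2$. This is legitimate because stationarity of $u^t$ and of the filtered/measurement processes, together with $\mathrm{E}\|u^t\|^2<\infty$ and Gaussian noises, gives finite second moments. The ergodic theorem then gives $\frac{1}{T}\sum_t L_{m,a}^t\to\bar L_{m,a}$ and likewise for gradients, uniformly on compacts, since the loss is quadratic in the parameters. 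A stationary point of the client update, with the server gradient contribution vanishing at the joint stationary point or treated separately, therefore satisfies $\nabla_{\theta_m}\bar L_{m,a}=0$ and $\nabla_{\phi_m}\bar L_{m,a}=0$.

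The core computation is the chain rule through Eqs.~\eqref{eq:aug_state_pred}--\eqref{eq:client_residual}. We have $\tilde y_{m,a}^t=C_{mm}h_{m,a}^t$ and $h_{m,a}^t=A_{mm}\hat h_{m,a}^{t-1}+B_{mm}u_m^{t-1}+\phi_m$, with $\hat h_{m,a}^{t-1}=\hat h_{m,c}^{t-1}+\theta_m y_m^{t-1}$. Since $\hat h_{m,c}^{t-1}$ comes from the fixed proprietary model, it does not depend on $(\theta_m,\phi_m)$. The residual is therefore affine in the parameters:
\begin{equation*}
r_{m,a}^t = y_m^t - C_{mm}A_{mm}\hat h_{m,c}^{t-1} - C_{mm}A_{mm}\theta_m y_m^{t-1} - C_{mm}B_{mm}u_m^{t-1} - C_{mm}\phi_m .
\end{equation*}
Differentiating $\|r\|^2$ gives $\nabla_{\phi_m}=-2C_{mm}^\top r_{m,a}^t$ and $\nabla_{\theta_m}=-2(C_{mm}A_{mm})^\top r_{m,a}^t (y_m^{t-1})^\top$, the latter from the matrix identity $\partial\|a-M\theta b\|^2/\partial\theta=-2M^\top(a-M\theta b)b^\top$. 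Taking expectations and setting both to zero at $(\theta_m^*,\phi_m^*)$ yields the two displayed equations. These are exactly the normal equations of a linear least-squares problem in $(\theta_m,\phi_m)$.

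The main obstacle is justifying that the server-loss term in the client update, $\eta_2\nabla_{\theta_m}L_s$ and $\gamma_2\nabla_{\phi_m}L_s$, does not spoil these equations at stationarity. I would first try to argue that at a joint stationary point of the alternating scheme, each component gradient vanishes separately. This holds if the step sizes are chosen so that the client effectively reaches a stationary point of $L_{m,a}$ on its own block. Failing that, I would read the lemma as stating the client-loss stationarity conditions, i.e.\ the stationary points of $L_{m,a}$, which is what the residual $r_{m,a}^{*,t}$ indicates. A secondary point is that interchanging the limit and the gradient needs uniform integrability; I would get it from the quadratic structure and the finite second moments, with persistent excitation invoked only to guarantee that the stationary point exists and is well defined (nonsingular Gram matrices).
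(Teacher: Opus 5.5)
Your proposal is correct and follows the paper's proof: compute $\nabla_{\theta_m}L_{m,a}=-\frac{2}{T}\sum_t (C_{mm}A_{mm})^\top r_{m,a}^t (y_m^{t-1})^\top$ and $\nabla_{\phi_m}L_{m,a}=-\frac{2}{T}\sum_t C_{mm}^\top r_{m,a}^t$, set both to zero at the stationary point, and pass from time averages to expectations by ergodicity. The paper differentiates the empirical loss first and takes the ergodic limit afterwards, rather than starting from the population objective as you do; on the server-gradient issue you raise, it simply adopts your fallback reading that stationarity means $\nabla_{\theta_m}L_{m,a}=\nabla_{\phi_m}L_{m,a}=0$, which is the reading to use, since your step-size argument would not by itself split $\eta_1\nabla L_{m,a}+\eta_2\nabla L_s=0$ into two separate zeros.
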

\begin{theorem}[\textbf{Convergence to the oracle}]
\label{thm:equality_oracle_expectation}
Assume $C_{mm}A_{mm}$ is full column rank and Assumption~\ref{ass:erg_pe} holds. Then in the regime $\xi_m\to\infty$ we have,
\begin{equation}
\mathrm{E}\big[\tilde y_{m,o}^{t} - \tilde y_{m,a}^{t}(\theta_m^{*},\phi_m^{*})\big]
\;=\;J_m\,\mathrm{E}[z_m^{t-1}].
\end{equation}
where, $z_m^{t-1}:=[y_m^{t-1};u^{t-1};\hat h_{m,c}^{t-1}]$,
$\Sigma_z:=\mathrm{E}[z_m^{t-1}z_m^{t-1\!\top}]$,
$e_t:=C_{mm}\!\Big(A_{mm}\theta_m^* y_m^{t-1}-\!\!\sum_{n\neq m}\hat A_{mn}^* \hat h_{n,c}^{t-1}
+\phi_m^*-\!\!\sum_{n\neq m}\hat B_{mn}^* u_n^{t-1}\Big)$, and
$J_m:=\mathrm{E}[\,e_t z_m^{t-1\!\top}\,]\Sigma_z^{-1}$.
\end{theorem}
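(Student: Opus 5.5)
The plan is to write both reconstructed measurements in closed form, subtract them, and read off the difference as the error term $e_t$. The least-squares structure from Lemma~\ref{lem:normal_eqs}, with Assumption~\ref{ass:erg_pe}, then turns the expected error into $J_m\,\mathrm{E}[z_m^{t-1}]$.

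\textbf{Step 1: closed forms.} For the augmented model, combine \eqref{eq:aug_state_pred} and \eqref{eq:aug_state_estimate} at the stationary point:
\[
\tilde y^t_{m,a}=C_{mm}\big(A_{mm}\hat h^{t-1}_{m,c}+A_{mm}\theta_m^*y_m^{t-1}+B_{mm}u_m^{t-1}+\phi_m^*\big).
\]
For the oracle, use that $C$ is block diagonal (Assumption~\ref{assumption_C}) to take the $m$-th block:
\[
\tilde y^t_{m,o}=C_{mm}\Big(A_{mm}\hat h^{t-1}_{o,m}+\sum_{n\ne m}A_{mn}\hat h^{t-1}_{o,n}+B_{mm}u_m^{t-1}+\sum_{n\ne m}B_{mn}u_n^{t-1}\Big).
\]
To compare the two, I identify the oracle's state-level prediction with the server prediction $h^t_{m,s}$ of \eqref{eq: server_state}, evaluated at the server's stationary point $(\hat A^*_{mn},\hat B^*_{mn})$. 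This is the step I am least sure of. It needs two facts: that in the $\xi_m\to\infty$ regime the server blocks act as the oracle's cross-client blocks, and that the local refined estimates $\hat h_{n,c}$ take the place of the corresponding blocks of $\hat h_o$. I would justify it by noting that as $\xi_m\to\infty$ the residual $r_{m,s}$ and the penalty $\mathcal D$ vanish at optimality. Under that identification, the difference of the two reconstructions is exactly $\tilde y^t_{m,o}-\tilde y^t_{m,a}=-e_t$; the sign is absorbed in the convention used for $e_t$ (equivalently, in $J_m$).

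\textbf{Step 2: linear projection onto $z_m^{t-1}$.} Decompose
\[
e_t=J_m z_m^{t-1}+\varepsilon_t,\qquad J_m=\mathrm{E}[e_tz_m^{t-1\top}]\Sigma_z^{-1}.
\]
By construction $\mathrm{E}[\varepsilon_t z_m^{t-1\top}]=0$. Invertibility of $\Sigma_z$ needs two ingredients. Persistent excitation and ergodicity (Assumption~\ref{ass:erg_pe}) make the $u$-block nondegenerate. Independence of $u$ from the noises $w,v$ then separates the $u$-block from the $y$ and $\hat h$ blocks, which I would argue are nondegenerate because the process and measurement noise covariances are positive definite.

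\textbf{Step 3: show the residual has zero mean.} Taking expectations gives $\mathrm{E}[e_t]=J_m\mathrm{E}[z_m^{t-1}]+\mathrm{E}[\varepsilon_t]$, so it remains to show $\mathrm{E}[\varepsilon_t]=0$. This is the main obstacle, since orthogonality to $z$ does not by itself give zero mean unless a constant lies in the span of the regressors. I would handle it in three parts.
\begin{itemize}
\item[(a)] The second normal equation in Lemma~\ref{lem:normal_eqs} gives $\mathrm{E}[C_{mm}^\top r^{*,t}_{m,a}]=0$. This pins down the mean, i.e.\ the intercept role played by $\phi_m^*$.
\item[(b)] The first normal equation gives $\mathrm{E}[(C_{mm}A_{mm})^\top r^{*,t}_{m,a}y_m^{t-1\top}]=0$. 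Since $C_{mm}A_{mm}$ has full column rank, this can be inverted to $\mathrm{E}[\Pi\, r^{*,t}_{m,a}y_m^{t-1\top}]=0$ on the range of $C_{mm}A_{mm}$.
\item[(c)] Assumption~\ref{ass:erg_pe} says $u$ is zero-mean, stationary and independent of the noises. This kills the cross terms between $u^{t-1}$ and the residual, and Corollary~\ref{cor:disentanglement} ties $\phi^*_m$ to $\mathrm{E}[\sum_{n\ne m}\hat B^*_{mn}u_n]$.
\end{itemize}
Combining (a)–(c), the component of $e_t$ not explained by $z$ has zero mean. Hence $\mathrm{E}[\tilde y^t_{m,o}-\tilde y^t_{m,a}]=J_m\,\mathrm{E}[z^{t-1}_m]$.

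As a consistency check on the $y$- and $\hat h$-components, Theorem~\ref{thm:disentanglement_main_thm} shows the $A$-part of $e_t$ has zero mean as $\xi\to\infty$.
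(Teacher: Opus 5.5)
Your Step~1 identification is a genuine gap, and it cannot be patched within your strategy. The oracle's $m$-th block uses the true blocks $A_{mn},B_{mn}$ and the centralized estimate $\hat h_o^{t-1}$ (gain $K_o^t$, all clients' data). Equating it with $C_{mm}h^t_{m,s}$ at $(\hat A^*,\hat B^*)$ requires the server prediction to reproduce the centralized filter's prediction. But the server prediction is built from the local proprietary estimates $\hat h^{t-1}_{n,c}$, including $\hat h^{t-1}_{m,c}$ in the diagonal term. This generically fails even if $\hat A^*_{mn}=A_{mn}$ and $\hat B^*_{mn}=B_{mn}$, because the local filters see only local blocks and data. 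Theorem~\ref{thm:disentanglement_main_thm} and Corollary~\ref{cor:disentanglement} relate client parameters to server blocks in expectation and say nothing about the oracle. Your proposed justification also defeats itself. At the stationary point the statement's error is exactly $e_t=C_{mm}(h^t_{m,a}-h^t_{m,s})=C_{mm}r^t_{m,s}$, so if $r_{m,s}$ vanished you would have $e_t\equiv 0$ and $J_m=0$. The sign cannot be absorbed either, since $J_m$ is defined from $e_t$: $\tilde y^t_{m,o}-\tilde y^t_{m,a}=-e_t$ yields $-J_m\mathrm{E}[z_m^{t-1}]-\mathrm{E}[\varepsilon_t]$. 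Step~3 does not close. Lemma~\ref{lem:normal_eqs} is about $r^{*,t}_{m,a}$, not the projection residual $\varepsilon_t$, and you never connect the two. Item~(c) fails for $r^{*,t}_{m,a}$: the augmented model carries $\sum_{n\neq m}B_{mn}u^{t-1}_n$ only through the constant $\phi_m$. So when $B_{mn}\neq 0$ the residual generically stays correlated with $u^{t-1}_n$, whatever its independence from $(w,v)$. For $\varepsilon_t$, (c) holds by construction but says nothing about $\mathrm{E}[\varepsilon_t]$. The one zero-mean fact you have, $\mathrm{E}[e_t]=0$ (your consistency check plus Corollary~\ref{cor:disentanglement}), only gives $\mathrm{E}[\varepsilon_t]=-J_m\mathrm{E}[z_m^{t-1}]$. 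Your route therefore reduces the theorem to $J_m\mathrm{E}[z_m^{t-1}]=0$, i.e.\ to the case where the claimed bias is itself zero.

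The paper's proof sidesteps Step~1 by never using the Kalman-filter recursion. It takes the oracle to be the population linear predictor on the same regressors, $\tilde y^t_{m,o}:=\Gamma_{yz}\Sigma_z^{-1}z_m^{t-1}$ with $\Gamma_{yz}:=\mathrm{E}[y^t_m z_m^{t-1\top}]$. It then writes $\tilde y^t_{m,a}=M_{\mathrm{fed}}z_m^{t-1}+e_t$, with $M_{\mathrm{fed}}=\big[C_{mm}A_{mm}\theta^*_m,\ C_{mm}[\hat B^*_{m1},\dots,B_{mm},\dots,\hat B^*_{mM}],\ C_{mm}A_{mm}\big]$. Here $e_t$ is only the mismatch $C_{mm}(\phi^*_m-\sum_{n\neq m}\hat B^*_{mn}u^{t-1}_n)$, which has zero mean by Corollary~\ref{cor:disentanglement}; this is narrower than the $e_t$ in the statement. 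Invoking the normal equations and full column rank of $C_{mm}A_{mm}$, the paper gets $M_{\mathrm{fed}}=\Gamma_{yz}\Sigma_z^{-1}-J_m$. Hence $\tilde y^t_{m,o}-\tilde y^t_{m,a}=J_m z_m^{t-1}-e_t$, and taking expectations gives the claim with the stated sign. If you follow this route, the identification of $M_{\mathrm{fed}}$ is where the work lies, and your reading in (b) is the correct one. For $D_m>P_m$ the first normal equation yields only $\Pi\,\mathrm{E}[r^{*,t}_{m,a}y_m^{t-1\top}]=0$, with $\Pi$ the projector onto the range of $C_{mm}A_{mm}$. Lemma~\ref{lem:normal_eqs} also gives no orthogonality to the $u^{t-1}$ or $\hat h^{t-1}_{m,c}$ blocks, and by the remark on (c) the $u$-block orthogonality generally fails. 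The paper's stacking step passes over both points. What does go through is a computation of means. Take Assumption~\ref{ass:erg_pe} with zero-mean noises, in the stationary, stable regime the ergodicity arguments already presuppose. Then $\mathrm{E}[z_m^{t-1}]=0$ and $\mathrm{E}[\tilde y^t_{m,o}]=0$. Moreover $\mathrm{E}[\tilde y^t_{m,a}]=C_{mm}\phi^*_m=0$ by Corollary~\ref{cor:disentanglement} with $\mathrm{E}[u]=0$. So both sides of the claim vanish.
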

Lemma~\ref{lem:normal_eqs} states that, at stationarity, the augmented client residual is orthogonal (in expectation) to the local regressors it uses ($y_m^t, u^t, \hat{h}_{m, c}^t$); i.e., no further linear correction along those directions can reduce the residual. Theorem~\ref{thm:equality_oracle_expectation} then says that, when the client models have causal disentanglement ($\xi_m \to \infty$ regime), the federated reconstruction matches the oracle in expectation up to a bias $J_m\,\mathrm{E}[z_m^{t-1}]$.

\section{Experiments}\label{Sec:Experiments}
\begin{figure*}[t]
  \centering

  \begin{subfigure}[t]{0.32\textwidth}
    \centering
    \includegraphics[width=\linewidth]{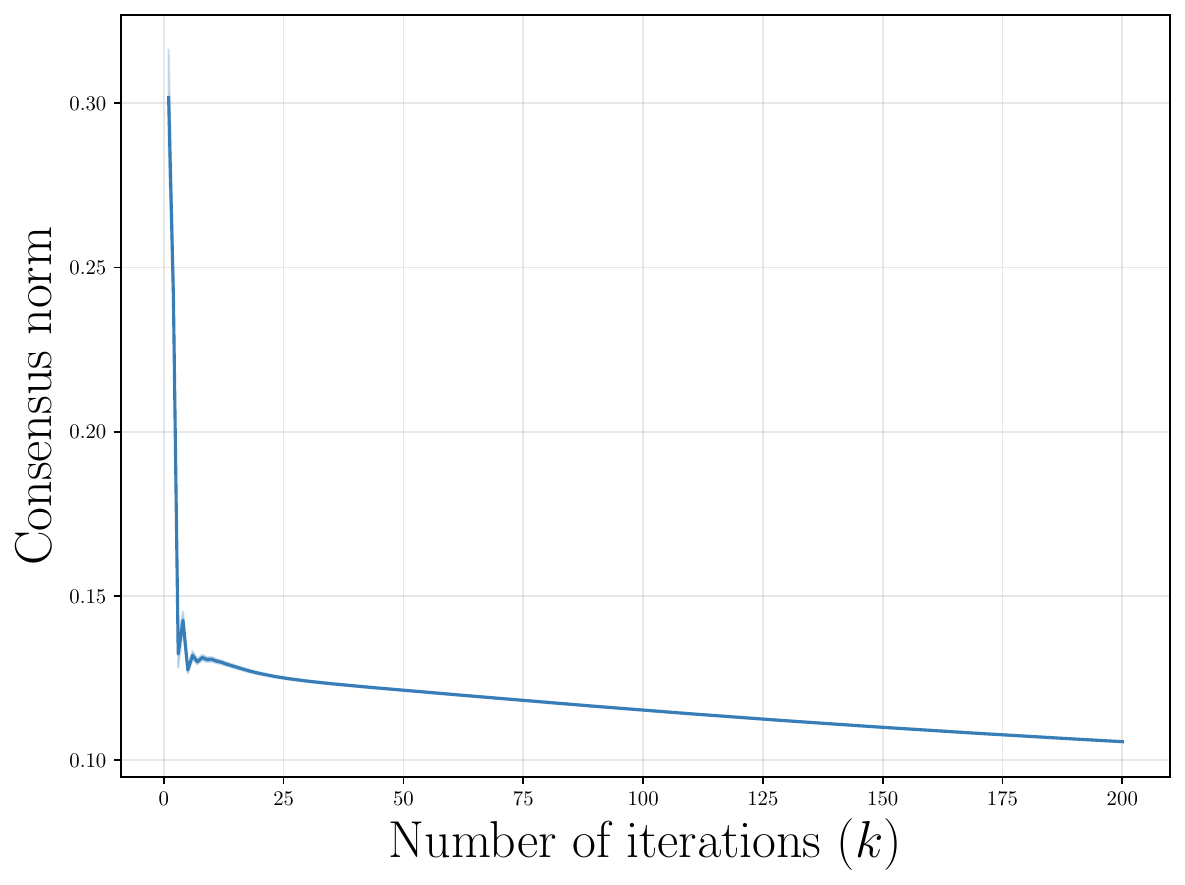}
    \caption{Disentang. penalty: $\mathcal{D}$}
    \label{fig:disentangle_penalty}
  \end{subfigure}
  \hfill
  \begin{subfigure}[t]{0.32\textwidth}
    \centering
    \includegraphics[width=\linewidth]{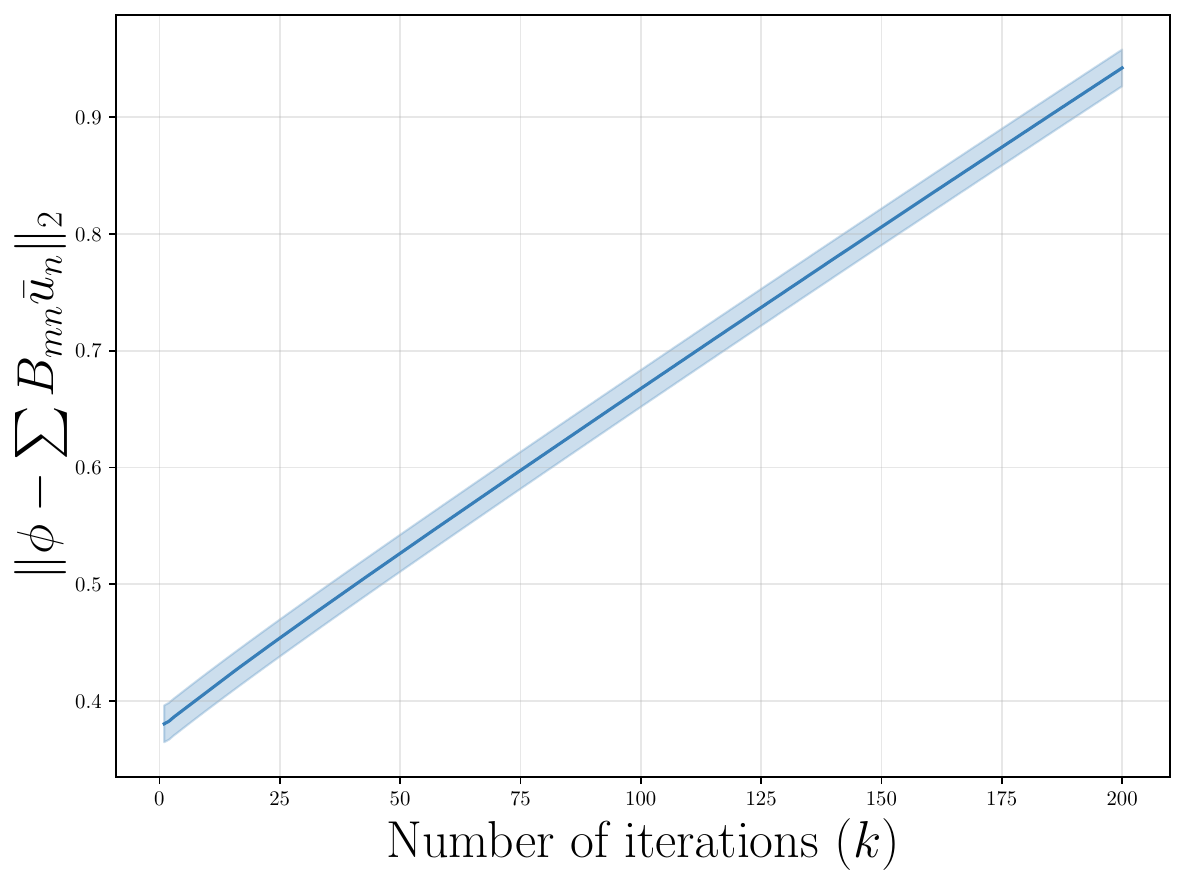}
    \caption{Client 1: $\delta_d$}
    \label{fig:delta_d_client1}
  \end{subfigure}
  \hfill
  \begin{subfigure}[t]{0.32\textwidth}
    \centering
    \includegraphics[width=\linewidth]{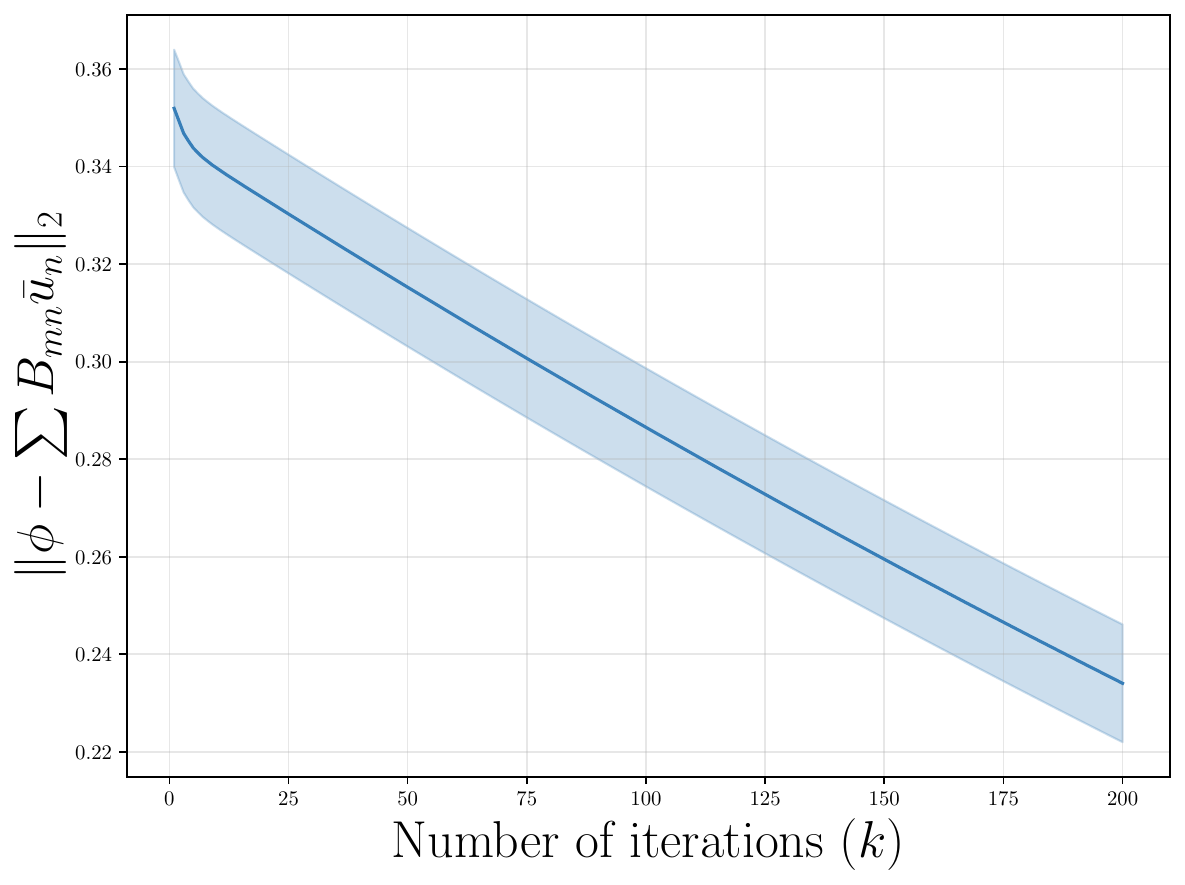}
    \caption{Client 2: $\delta_d$}
    \label{fig:delta_d_client2}
  \end{subfigure}

  \caption{Disentanglement penalty at the server and $\delta_d$ at the clients vs.\ number of iterations.}
  \label{fig:syn_losses}
\end{figure*}
\subsection{Synthetic Datasets}
We establish the performance of our algorithm on a multi-client LTI system described in Section \ref{subsec: multi_client}. Unless otherwise specified, we focus our experiments on a two-client system with $P_m = 2$, $U_m = 2$, and $D_m = 2$ for $m = \{1,2\}$. A one-way directed dependency with $A_{12} = 0$ and $A_{21} \neq 0$ is considered for the ease of interpretation. The elements of input vectors are i.i.d. samples from a normal distribution $\mathcal{N}(0,1)$. Clients use \eqref{eq:aug_state_pred} and \eqref{eq:aug_state_estimate} to compute augmented state predictions and estimates. The proprietary client state estimate $\hat{h}^t_{m,c}$ is estimated in our experiment using a Kalman filter that uses only the client-specific (local) diagonal blocks ($A_{mm}, B_{mm}, C_{mm})$. The communication and training proceed as detailed in Section \ref{Sec: Federated_training_methodology}. The server maintains its state estimate according to \eqref{eq: server_state}, computed using current estimates of $\{\hat{A}_{mn}\}_{n \neq m}$ and $\{\hat{B}_{mn}\}_{n \neq m}$. 


\textbf{Training.} Training curves are obtained after running 10 Monte-Carlo simulations. Each element of initial values of $\phi^0_m, \theta_m^0, \{\hat{A}_{mn}^0\}_{n \neq m}$ and $\{\hat{B}_{mn}^0\}_{n \neq m}$ for all $m = 1,2,  \cdots, M$ are independently sampled from normal distributions. Global loss and client loss for both clients during training are plotted in Fig.
 \ref{fig:syn_global_loss}-\ref{fig:syn_client2_residual}.
The baseline used for comparison is the time-averaged norm of residuals of the proprietary client model ($r^t_{m,c}:= y^t_m - C_{mm} h^t_{m,c}$). As training progresses, the augmented client loss $L_{m,a}$ outperforms the proprietary client loss. Thus, the parameters $\theta_m$ and $\phi_m$, through the augmented client, encode cross-client interdependencies from off-diagonal blocks of $A$ and $B$ - which is not captured by the proprietary client model, validating Claim \ref{claim_phi_theta_combined}.

\textbf{Validation.} The estimated system is validated by plotting the prediction error on a new set of data generated from the same system (Figure \ref{fig:pred_error_valid}). Both our models at the client (ACM) and the server performs better than the proprietary client model on the new data. Validation data contains a mean shift and is non-stationary, which causes a sudden jump in the performance of the model. Performance deterioration in our models due to mean shift is less than the proprietary model and the recovery is better. 
\begin{figure}
  \centering
  \begin{subfigure}[t]{0.48\columnwidth}
    \centering
    \includegraphics[width=\linewidth]{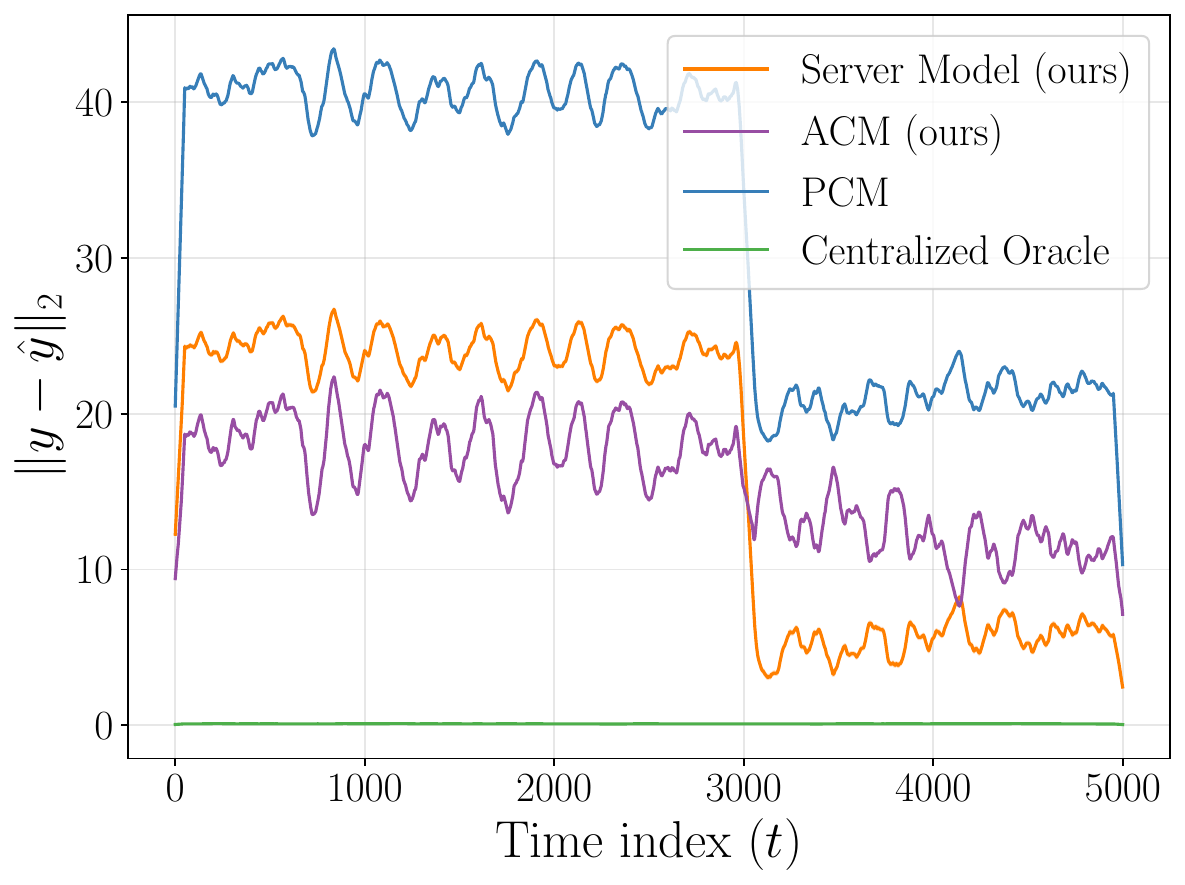}
    \caption{Client 1: Prediction error}
    \label{fig:pred_error_valid_1}
  \end{subfigure}\hfill
  \begin{subfigure}[t]{0.48\columnwidth}
    \centering
    \includegraphics[width=\linewidth]{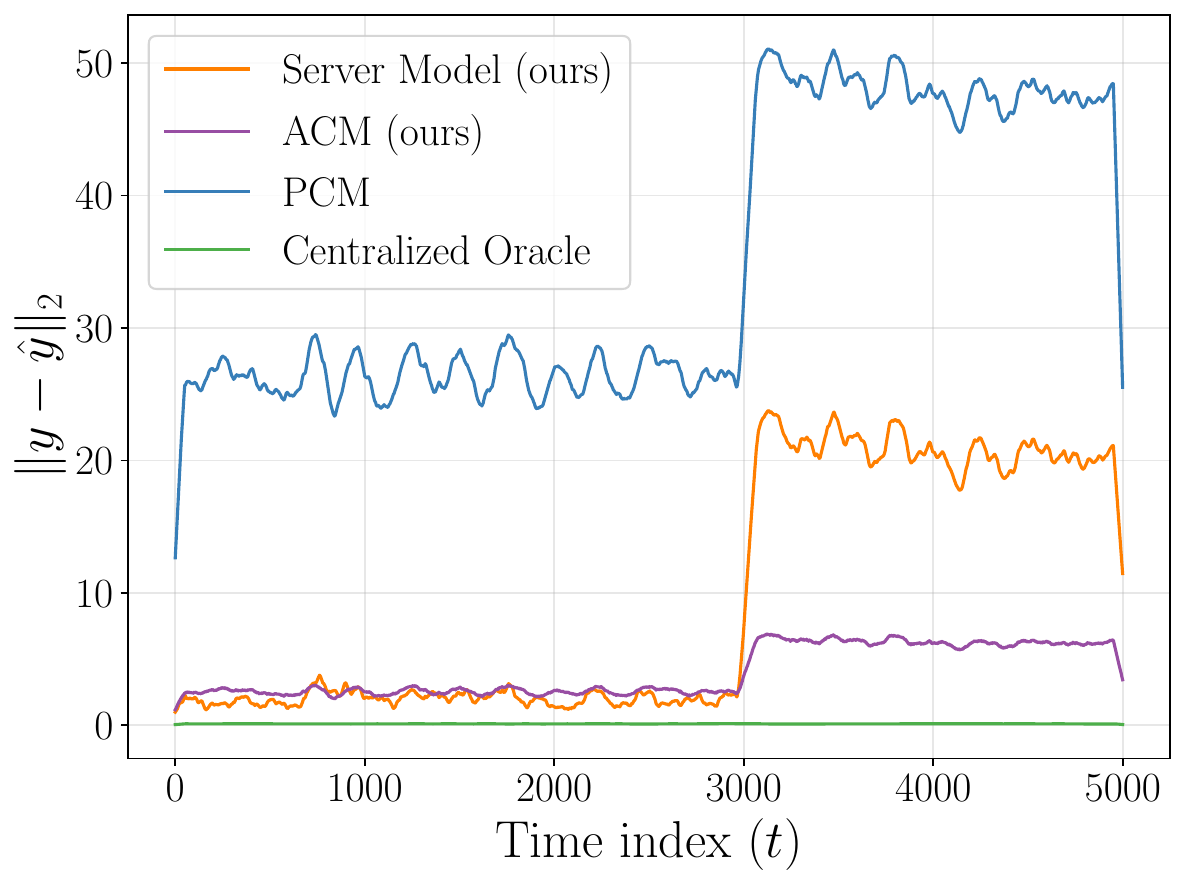}
    \caption{Client 2: Prediction error}
    \label{fig:pred_error_valid_2}
  \end{subfigure}
  \caption{Prediction error on the validation data for different models and the centralized oracle. ACM: Augmented Client Model, PCM: Proprietary Client Model}
  \label{fig:pred_error_valid}
\end{figure}

\textbf{Transfer Function.} Following Section~\ref{sec:causal_id}, the poles, zeros, and singular value spectra of the true and estimated transfer functions \cite{ljung1998system} are shown in Fig.~\ref{fig:tf_plots}. The estimated poles lie within the unit circle, confirming that the decentralized model preserves the stable dynamic structure of the ground truth. While some dispersion in the zeros and deviations in low-frequency singular values remain, the overall frequency-response profile is retained. Combined with the prediction performance in Fig.~\ref{fig:pred_error_valid}, these results indicate that our framework is approximating the underlying causal dynamics, while not yet fully recovering the centralized oracle, as expected under decentralized constraints.

\textbf{Disentanglement.} The disentanglement penalty $\mathcal{D}$ at the server is plotted in Fig. \ref{fig:disentangle_penalty}, and the difference $\delta d := \norm{\phi_m - \frac{1}{T}\sum_{t = 1}^T \left(\sum_{n \neq m}\hat{B}_{mn} u_n^{t-1} \right)}_2$ is plotted for both clients in Fig. \ref{fig:delta_d_client1} and \ref{fig:delta_d_client2}. This demonstrates that not only server effectively disentangle the effect of inputs from the states, but it is also propagated to the respective clients. It was observed that there exists a trade-off between optimality and stability when employing the penalty $\mathcal{D}$. Details are provided in the Appendix~\ref{appendix:Synthetic}. 

\textbf{Scalabillity.} (i) We fix the number of clients $M$, state dimension $P_m$ for each client and vary the measurement dimension $D_m$. (ii) The state dimension $P_m$ and measurement dimension $D_m$ for each client are fixed, and the number of clients is varied. Results of the scalability experiments can be found in the Appendix~\ref{appendix:Synthetic}. 

\subsection{Real-world Datasets}
\textbf{Experimental Setting.} We evaluate our approach using two industrial cybersecurity datasets. The \textbf{HAI} dataset is generated using an industrial control system testbed integrated with a Hardware-in-the-Loop (HIL) simulator that models steam-turbine power generation and pumped-storage hydropower. The system includes four processes: (P1) Boiler, (P2) Turbine, (P3) Water Treatment, and (P4) HIL Simulation, each serving as a client in our experiments. The \textbf{SWaT} dataset originates from a six-stage water treatment plant testbed. Each stage is treated as a client in our experiments. We apply the following preprocessing steps to HAI/SWaT:  
\begin{itemize}[
    leftmargin=*,
    itemsep=0.5pt,
    topsep=1pt
]
\item \textbf{HAI.} All sensor measurements are first normalized to ensure consistent scaling. A subspace identification method is then applied to fit an LTI system, yielding a state-transition matrix $A$, input matrix $B$, and an observation matrix $C$. The dimensionality of the state space $P_m$ is determined using the singular value decay of the Hankel matrix. Since the initial $C$ is not block diagonal, we perform $L_2$-norm based thresholding to assign each state variable to a specific process and construct a block diagonal $C$. Then, $A$ and $B$ are re-estimated via least squares.  

\item \textbf{SWaT.} For each client, only those variables with a Pearson correlation greater than 0.3 with other clients are retained. Then, the same subspace identification and re-estimation steps as in the HAI case are performed.  
\end{itemize}

\textbf{Results.}
Due to space constraints, we provide results for both real-world datasets in the Appendix~\ref{appendix:realWorld}.

\section{Limitations}
We assume LTI models, which may not capture real-world scenarios such as nonlinear or time-varying dynamics. Similarly, reliance on accurate proprietary client models and known local matrices may be unrealistic in practice. Clients can only infer the aggregated effects of counterfactual input. Convergence results only hold up to a bias in reconstruction.

\clearpage
\bibliography{ref.bib}

\begin{thebibliography}{39}
\providecommand{\natexlab}[1]{#1}
\providecommand{\url}[1]{\texttt{#1}}
\expandafter\ifx\csname urlstyle\endcsname\relax
  \providecommand{\doi}[1]{doi: #1}\else
  \providecommand{\doi}{doi: \begingroup \urlstyle{rm}\Url}\fi

\bibitem[Arjovsky et~al.(2019)Arjovsky, Bottou, Gulrajani, and Lopez-Paz]{arjovsky2019irm}
Martin Arjovsky, L{\'e}on Bottou, Ishaan Gulrajani, and David Lopez-Paz.
\newblock Invariant risk minimization.
\newblock \emph{arXiv preprint arXiv:1907.02893}, 2019.

\bibitem[Ascher et~al.(1995)Ascher, Ruuth, and Wetton]{AsherRuuthIMEX1995}
Uri~M. Ascher, Steven~J. Ruuth, and Brian T.~R. Wetton.
\newblock Implicit-explicit methods for time-dependent partial differential equations.
\newblock \emph{SIAM Journal on Numerical Analysis}, 32\penalty0 (3):\penalty0 797--823, 1995.
\newblock \doi{10.1137/0732037}.
\newblock URL \url{https://doi.org/10.1137/0732037}.

\bibitem[Barnett et~al.(2009)Barnett, Barrett, and Seth]{barnett2009granger}
Lionel Barnett, Adam~B Barrett, and Anil~K Seth.
\newblock Granger causality and transfer entropy are equivalent for gaussian variables.
\newblock \emph{Physical Review Letters}, 103\penalty0 (23):\penalty0 238701, 2009.

\bibitem[Bertsekas and Rheinboldt(2014)]{bertsekas2014constrained}
D.P. Bertsekas and W.~Rheinboldt.
\newblock \emph{Constrained Optimization and Lagrange Multiplier Methods}.
\newblock Computer science and applied mathematics. Academic Press, 2014.
\newblock ISBN 9781483260471.
\newblock URL \url{https://books.google.com/books?id=j6LiBQAAQBAJ}.

\bibitem[Brehmer et~al.(2022)Brehmer, De~Haan, Lippe, and Cohen]{brehmer2022weakly}
Johann Brehmer, Pim De~Haan, Phillip Lippe, and Taco~S Cohen.
\newblock Weakly supervised causal representation learning.
\newblock \emph{Advances in Neural Information Processing Systems}, 35:\penalty0 38319--38331, 2022.

\bibitem[Eichler(2010)]{eichler2010}
Michael Eichler.
\newblock Granger causality and path diagrams for multivariate time series.
\newblock \emph{Statistical Modelling}, 10\penalty0 (3):\penalty0 233--255, 2010.

\bibitem[Fraccaro et~al.(2017)Fraccaro, Kamronn, Paquet, and Winther]{fraccaro2017kvae}
Marco Fraccaro, Simon Kamronn, Ulrich Paquet, and Ole Winther.
\newblock A disentangled recognition and nonlinear dynamics model for unsupervised learning.
\newblock \emph{Advances in neural information processing systems}, 30, 2017.

\bibitem[Geweke(1982)]{geweke1982}
John Geweke.
\newblock Measurement of linear dependence and feedback between multiple time series.
\newblock \emph{Journal of the American Statistical Association}, 77\penalty0 (378):\penalty0 304--313, 1982.

\bibitem[Ghosh et~al.(2025)Ghosh, Dwivedi, Tajer, Yeo, and Gifford]{Ghosh}
Shiuli~Subhra Ghosh, Anmol Dwivedi, Ali Tajer, Kyongmin Yeo, and Wesley~M. Gifford.
\newblock Cascading failure prediction via causal inference.
\newblock \emph{IEEE Transactions on Power Systems}, 40\penalty0 (4):\penalty0 3361--3373, 2025.
\newblock \doi{10.1109/TPWRS.2024.3516477}.

\bibitem[Granger(1969)]{granger1969}
Clive~WJ Granger.
\newblock Investigating causal relations by econometric models and cross-spectral methods.
\newblock \emph{Econometrica}, 37\penalty0 (3):\penalty0 424--438, 1969.

\bibitem[Guo et~al.(2024)Guo, Yu, Liu, and Li]{Guo_Yu_Liu_Li_2024}
Xianjie Guo, Kui Yu, Lin Liu, and Jiuyong Li.
\newblock Fedcsl: A scalable and accurate approach to federated causal structure learning.
\newblock \emph{Proceedings of the AAAI Conference on Artificial Intelligence}, 38\penalty0 (11):\penalty0 12235--12243, Mar. 2024.

\bibitem[Huang et~al.(2019)Huang, Zhang, Gong, and Glymour]{huang2019causal}
Biwei Huang, Kun Zhang, Mingming Gong, and Clark Glymour.
\newblock Causal discovery and forecasting in nonstationary environments with state-space models.
\newblock In \emph{International conference on machine learning}, pages 2901--2910. Pmlr, 2019.

\bibitem[Kalman(1960)]{og_kalman}
Rudolph~Emil Kalman.
\newblock A new approach to linear filtering and prediction problems.
\newblock \emph{Transactions of the ASME--Journal of Basic Engineering}, 82\penalty0 (Series D):\penalty0 35--45, 1960.

\bibitem[Krishnan et~al.(2015)Krishnan, Shalit, and Sontag]{krishnan2015dkf}
Rahul~G Krishnan, Uri Shalit, and David Sontag.
\newblock Deep kalman filters.
\newblock \emph{arXiv preprint arXiv:1511.05121}, 2015.

\bibitem[Li et~al.(2024{\natexlab{a}})Li, Pan, and Bareinboim]{li2024disentangled}
Adam Li, Yushu Pan, and Elias Bareinboim.
\newblock Disentangled representation learning in non-markovian causal systems.
\newblock \emph{Advances in Neural Information Processing Systems}, 37:\penalty0 104843--104903, 2024{\natexlab{a}}.

\bibitem[Li et~al.(2024{\natexlab{b}})Li, Ng, Luo, Huang, Chen, Liu, Gu, and Zhang]{Li2024FedCD}
Loka Li, Ignavier Ng, Gongxu Luo, Biwei Huang, Guangyi Chen, Tongliang Liu, Bin Gu, and Kun Zhang.
\newblock Federated causal discovery from heterogeneous data.
\newblock \emph{arXiv preprint arXiv:2402.13241}, 2024{\natexlab{b}}.

\bibitem[Lippe et~al.(2022)Lippe, Magliacane, L{\"o}we, Asano, Cohen, and Gavves]{lippe2022citris}
Phillip Lippe, Sara Magliacane, Sindy L{\"o}we, Yuki~M Asano, Taco Cohen, and Stratis Gavves.
\newblock Citris: Causal identifiability from temporal intervened sequences.
\newblock In \emph{International Conference on Machine Learning}, pages 13557--13603. PMLR, 2022.

\bibitem[Ljung(1998)]{ljung1998system}
L.~Ljung.
\newblock \emph{System Identification: Theory for the User}.
\newblock Pearson Education, 1998.
\newblock ISBN 9780132440530.
\newblock URL \url{https://books.google.com/books?id=fYSrk4wDKPsC}.

\bibitem[Mastakouri et~al.(2021)Mastakouri, Rohde, and Sch{\"o}lkopf]{mastakouri2021necessary}
Atalanti Mastakouri, David Rohde, and Bernhard Sch{\"o}lkopf.
\newblock Necessary and sufficient conditions for causal feature selection in time series with latent confounders.
\newblock In \emph{NeurIPS}, 2021.

\bibitem[Mathur and Tippenhauer(2016)]{swat_dataset}
Aditya~P. Mathur and Nils~Ole Tippenhauer.
\newblock Swat: a water treatment testbed for research and training on ics security.
\newblock In \emph{2016 International Workshop on Cyber-physical Systems for Smart Water Networks (CySWater)}, pages 31--36, 2016.
\newblock \doi{10.1109/CySWater.2016.7469060}.

\bibitem[McMahan et~al.(2017)McMahan, Moore, Ramage, Hampson, and y~Arcas]{mcmahan2017}
Brendan McMahan, Eider Moore, Daniel Ramage, Seth Hampson, and Blaise y~Arcas.
\newblock Communication-efficient learning of deep networks from decentralized data.
\newblock In \emph{Artificial Intelligence and Statistics (AISTATS)}, 2017.

\bibitem[Mian et~al.(2022)Mian, Kaltenpoth, and Kamp]{Mian22Regret}
Osman Mian, David Kaltenpoth, and Michael Kamp.
\newblock Regret-based federated causal discovery.
\newblock In Thuc~Duy Le, Lin Liu, Emre Kıcıman, Sofia Triantafyllou, and Huan Liu, editors, \emph{Proceedings of The KDD'22 Workshop on Causal Discovery}, volume 185 of \emph{Proceedings of Machine Learning Research}, pages 61--69. PMLR, 15 Aug 2022.

\bibitem[Mian et~al.(2023)Mian, Kaltenpoth, Kamp, and Vreeken]{Mian23PrivacyFCD}
Osman Mian, David Kaltenpoth, Michael Kamp, and Jilles Vreeken.
\newblock Nothing but regrets—privacy-preserving federated causal discovery.
\newblock In \emph{International Conference on Artificial Intelligence and Statistics}, pages 8263--8278. PMLR, 2023.

\bibitem[Miladinovi{\'c} et~al.(2019)Miladinovi{\'c}, Gondal, Sch{\"o}lkopf, Buhmann, and Bauer]{miladinovic2019disentangled}
{\DJ}or{\dj}e Miladinovi{\'c}, Muhammad~Waleed Gondal, Bernhard Sch{\"o}lkopf, Joachim~M Buhmann, and Stefan Bauer.
\newblock Disentangled state space representations.
\newblock \emph{arXiv preprint arXiv:1906.03255}, 2019.

\bibitem[Mohanty et~al.(2025)Mohanty, Mohamed, Ramanan, and Gebraeel]{ICLR2025_fedGC}
Ayush Mohanty, Nazal Mohamed, Paritosh Ramanan, and Nagi Gebraeel.
\newblock Federated granger causality learning for interdependent clients with state space representation.
\newblock In \emph{The Thirteenth International Conference on Learning Representations}, 2025.

\bibitem[Murphy(2002)]{murphy2002}
Kevin~Patrick Murphy.
\newblock \emph{Dynamic Bayesian networks: representation, inference and learning}.
\newblock PhD thesis, UC Berkeley, 2002.

\bibitem[Pearl(2009)]{pearl2009causality}
J.~Pearl.
\newblock \emph{Causality}.
\newblock Causality: Models, Reasoning, and Inference. Cambridge University Press, 2009.
\newblock ISBN 9780521895606.
\newblock URL \url{https://books.google.com/books?id=f4nuexsNVZIC}.

\bibitem[Peters et~al.(2016)Peters, B{\"u}hlmann, and Meinshausen]{peters2016icp}
Jonas Peters, Peter B{\"u}hlmann, and Nicolai Meinshausen.
\newblock Causal inference by using invariant prediction: identification and confidence intervals.
\newblock \emph{Journal of the Royal Statistical Society Series B: Statistical Methodology}, 78\penalty0 (5):\penalty0 947--1012, 2016.

\bibitem[Pournaras et~al.(2020)Pournaras, Taormina, Thapa, Galelli, Palleti, and Kooij]{Pournaras}
Evangelos Pournaras, Riccardo Taormina, Manish Thapa, Stefano Galelli, Venkata Palleti, and Robert Kooij.
\newblock Cascading failures in interconnected power-to-water networks.
\newblock \emph{SIGMETRICS Perform. Eval. Rev.}, 47\penalty0 (4):\penalty0 16–20, May 2020.
\newblock ISSN 0163-5999.
\newblock \doi{10.1145/3397776.3397781}.
\newblock URL \url{https://doi.org/10.1145/3397776.3397781}.

\bibitem[Ruiz-Tagle et~al.(2022)Ruiz-Tagle, Lopez-Droguett, and Groth]{RUIZTAGLE2022108785}
Andres Ruiz-Tagle, Enrique Lopez-Droguett, and Katrina~M. Groth.
\newblock A novel probabilistic approach to counterfactual reasoning in system safety.
\newblock \emph{Reliability Engineering \& System Safety}, 228:\penalty0 108785, 2022.
\newblock ISSN 0951-8320.

\bibitem[Schölkopf et~al.(2021)Schölkopf, Locatello, Bauer, Ke, Kalchbrenner, Goyal, and Bengio]{scholkopf2021causal}
Bernhard Schölkopf, Francesco Locatello, Stefan Bauer, Nan~Rosemary Ke, Nal Kalchbrenner, Anirudh Goyal, and Yoshua Bengio.
\newblock Toward causal representation learning.
\newblock \emph{Proceedings of the IEEE}, 109\penalty0 (5):\penalty0 612--634, 2021.

\bibitem[Shin et~al.(2021)Shin, Lee, Yun, and Min]{hai_2103}
Hyeok-Ki Shin, Woomyo Lee, Jeong-Han Yun, and Byung-Gi Min.
\newblock Two ics security datasets and anomaly detection contest on the hil-based augmented ics testbed.
\newblock In \emph{Cyber Security Experimentation and Test Workshop}, CSET '21, page 36–40, New York, NY, USA, 2021. Association for Computing Machinery.
\newblock ISBN 9781450390651.
\newblock \doi{10.1145/3474718.3474719}.
\newblock URL \url{https://doi.org/10.1145/3474718.3474719}.

\bibitem[Tang et~al.(2023)Tang, Liu, Zhou, and Ding]{tang2023causality}
Wenbing Tang, Jing Liu, Yuan Zhou, and Zuohua Ding.
\newblock Causality-guided counterfactual debiasing for anomaly detection of cyber-physical systems.
\newblock \emph{IEEE Transactions on Industrial Informatics}, 20\penalty0 (3):\penalty0 4582--4593, 2023.

\bibitem[Todo et~al.(2023)Todo, Selmani, Laurent, and Loubes]{todo2023counterfactual}
William Todo, Merwann Selmani, B{\'e}atrice Laurent, and Jean-Michel Loubes.
\newblock Counterfactual explanation for multivariate times series using a contrastive variational autoencoder.
\newblock In \emph{ICASSP 2023-2023 IEEE International Conference on Acoustics, Speech and Signal Processing (ICASSP)}, pages 1--5. IEEE, 2023.

\bibitem[Wang et~al.(2024)Wang, Chen, Tang, Wu, and Zhu]{wang2024disentangled}
Xin Wang, Hong Chen, Si'ao Tang, Zihao Wu, and Wenwu Zhu.
\newblock Disentangled representation learning.
\newblock \emph{IEEE Transactions on Pattern Analysis and Machine Intelligence}, 46\penalty0 (12):\penalty0 9677--9696, 2024.

\bibitem[Weng et~al.(2025)Weng, Han, Jiang, and Liu]{weng2025sde}
Zixuan Weng, Jindong Han, Wenzhao Jiang, and Hao Liu.
\newblock Sde: A simplified and disentangled dependency encoding framework for state space models in time series forecasting.
\newblock In \emph{Proceedings of the 31st ACM SIGKDD Conference on Knowledge Discovery and Data Mining V. 2}, pages 3168--3179, 2025.

\bibitem[Yang et~al.(2024)Yang, He, Wang, Yu, Domeniconi, and Zhang]{Yang_He_Wang_Yu_Domeniconi_Zhang_2024}
Dezhi Yang, Xintong He, Jun Wang, Guoxian Yu, Carlotta Domeniconi, and Jinglin Zhang.
\newblock Federated causality learning with explainable adaptive optimization.
\newblock \emph{Proceedings of the AAAI Conference on Artificial Intelligence}, 38\penalty0 (15):\penalty0 16308--16315, Mar. 2024.

\bibitem[Ye et~al.(2024)Ye, Amini, and Zhou]{IEEE2024FCD1}
Qiaoling Ye, Arash~A. Amini, and Qing Zhou.
\newblock Federated learning of generalized linear causal networks.
\newblock \emph{IEEE Transactions on Pattern Analysis and Machine Intelligence}, 46\penalty0 (10):\penalty0 6623--6636, 2024.

\bibitem[Zhao et~al.(2025)Zhao, Yu, Xiang, Guo, and Cao]{IEEE2024FCD2}
Yongsheng Zhao, Kui Yu, Guodu Xiang, Xianjie Guo, and Fuyuan Cao.
\newblock Fedece: Federated estimation of causal effect based on causal graphical modeling.
\newblock \emph{IEEE Transactions on Artificial Intelligence}, 6\penalty0 (8):\penalty0 2327--2341, 2025.

\end{thebibliography}


\onecolumn
\begin{center}
    {\LARGE\bfseries
    Federated Causal Representation Learning in State-Space Systems\\
    for Decentralized Counterfactual Reasoning\par}
    \vspace{0.5em}
    {\huge\bfseries (Supplementary Material)\par}
    \vspace{1.5em}
    \rule{\textwidth}{2pt}
\end{center}
\vspace{1em}

\addtocontents{toc}{\protect\etocsettocdepth{2}}
\begingroup
\small
\setlength{\parskip}{10pt}
\setlength{\itemsep}{5pt}
\etocsettocstyle{}{}
\tableofcontents
\endgroup
\clearpage
\section{Privacy Analysis}\label{appendix:privacy}
\subsection{Notation and Sensitivity}
\label{app:dp:notation}
\begin{definition}[\textbf{Local dataset and horizon}]
For each client $m$, the private dataset is
$\mathcal{D}_m = \{ y_m^t \}_{t=1}^T,
$ collected over a fixed horizon $T \in \mathbb{N}$. The union $\mathcal{D} = \{\mathcal{D}_1, \dots, \mathcal{D}_M\}$ denotes all clients’ measurement data.
\end{definition}

\begin{definition}[\textbf{Neighboring datasets}]
Two global datasets $\mathcal{D}$ and $\mathcal{D}'$ are \emph{neighbors} if they differ at a single measurement $y_{m^\star}^{t^\star}$ for some client $m^\star$ and time $t^\star$.
\end{definition}

\begin{assumption}[\textbf{Bounded and clipped signals}]\label{assumption:bounded_clipped}
Input $u_m^t$ and output (or measurements) $y_m^t$ are bounded (to ensure stability of state-spaces), and are clipped to those known bounds:
$\|y_m^t\|_2 \le R_y, \hspace{0.1cm} \text{and} \hspace{0.1cm} \|u_m^t\|_2 \le R_u.$
The maximum magnitude across all input/output signals is denoted by
$R_{\max} = \max(R_y, R_u).$
\end{assumption}
\textbf{\textit{Rationale}:} Clipping in Assumption \ref{assumption:bounded_clipped} ensures bounded sensitivity prior to noise addition.
\begin{assumption}[\textbf{Local estimator contractivity}]\label{assumption:contractivity}
Each client’s augmented estimator is contractive with constants $L_m > 0$ and $\beta_m \in (0,1)$:
$\|h_{m,a}^{t} - h_{m,a}^{t\,\prime}\|_2
\le L_m \beta_m^{t-s} \|y_m^s - y_m^{s\,\prime}\|_2, \quad \forall s \le t.$
\end{assumption}
\textbf{\textit{Rationale}:} Assumption \ref{assumption:contractivity} ensures bounded propagation of perturbations in local measurements.

Let $z_m^t = [\hat{h}_{m,c}^t; \hat{h}_{m,a}^t; h_{m,a}^t; u_m^t] \in \mathbb{R}^{3P_m + U_m}$ be the message communicated from client $m$ to the server. 
\begin{definition}[\textbf{Message generation map}]
The deterministic, pre-noise mapping from measurements to the transmitted message is given as
$\mathcal{M}_{m,\mathrm{msg}}^t: (y_m^{1:t}) \mapsto z_m^t.$ This map depends on the internal dynamics of the proprietary and augmented estimators.
\end{definition}

\begin{lemma}[\textbf{$\ell_2$-Sensitivity of client messages}]
Under bounded measurements and contractive estimators, there exists $\kappa_m > 0$ such that for any neighboring datasets $\mathcal{D}_m$ and $\mathcal{D}_m'$ differing at one $y_m^{t^\star}$,
\[
\|\mathcal{M}_{m,\mathrm{msg}}^t(\mathcal{D}_m) - \mathcal{M}_{m,\mathrm{msg}}^t(\mathcal{D}_m')\|_2
\le
\Delta_{m,\mathrm{msg}} = \kappa_m R_{\max}.
\]
\end{lemma}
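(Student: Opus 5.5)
The plan is to bound each of the four blocks of the message $z_m^t = [\hat h_{m,c}^t;\hat h_{m,a}^t;h_{m,a}^t;u_m^t]$ separately. The squared $\ell_2$ norm of $z_m^t$ is the sum of the squared norms of its blocks, so once each block is bounded the message sensitivity follows from $\|(a,b,c,d)\|_2 \le \|a\|_2+\|b\|_2+\|c\|_2+\|d\|_2$.

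Fix neighboring $\mathcal{D}_m,\mathcal{D}_m'$ that differ only at $y_m^{t^\star}$. By Assumption~\ref{assumption:bounded_clipped}, $\|y_m^{t^\star}-y_m^{t^\star\,\prime}\|_2 \le 2R_y \le 2R_{\max}$.

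\textbf{Input block.} The input $u_m^t$ does not depend on the measurements, so this block contributes zero.

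\textbf{Predicted augmented state.} If $t<t^\star$, every state block is unchanged by causality. For $t\ge t^\star$, Assumption~\ref{assumption:contractivity} with $s=t^\star$ gives
\[
\|h_{m,a}^t-h_{m,a}^{t\,\prime}\|_2 \le L_m\beta_m^{t-t^\star}\cdot 2R_{\max} \le 2L_m R_{\max}.
\]

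\textbf{Refined augmented estimate.} By \eqref{eq:aug_state_estimate}, $\hat h_{m,a}^t=\hat h_{m,c}^t+\theta_m y_m^t$, so
\[
\|\hat h_{m,a}^t-\hat h_{m,a}^{t\,\prime}\|_2 \le \|\hat h_{m,c}^t-\hat h_{m,c}^{t\,\prime}\|_2 + \|\theta_m\|_{\mathrm{op}}\,\|y_m^t-y_m^{t\,\prime}\|_2 .
\]
The second term is at most $2\|\theta_m\|_{\mathrm{op}}R_{\max}$, and it is nonzero only when $t=t^\star$.

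\textbf{Proprietary estimate.} The proprietary Kalman filter is a stable linear time-invariant map of $y_m^{1:t}$ and $u_m^{1:t}$: it takes the form $\hat h_{m,c}^t=\sum_{s\le t}F_m^{t-s}K_m y_m^s+(\text{input terms})$ with $F_m=(I-K_mC_{mm})A_{mm}$ Schur stable. Hence
\[
\|\hat h_{m,c}^t-\hat h_{m,c}^{t\,\prime}\|_2 \le \|F_m^{t-t^\star}K_m\|\,2R_{\max} \le c_m\rho_m^{t-t^\star}\,2R_{\max}
\]
for some $c_m>0$ and $\rho_m<1$. This is the contractivity of Assumption~\ref{assumption:contractivity}, applied to the proprietary estimator.

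\textbf{Main obstacle.} The hard part is justifying that the proprietary filter and $\hat h_{m,a}$ are contractive uniformly over training iterates, since Assumption~\ref{assumption:contractivity} is stated only for $h_{m,a}$. I would handle this in one of two ways. The first option is to read the assumption as covering the whole augmented estimator. The second is to use stability of the steady-state Kalman gain, bound $\|\theta_m\|_{\mathrm{op}}\le\Theta_m$ (for instance by projecting the client iterates onto a bounded set), and note that the augmented recursion \eqref{eq:aug_state_pred} inherits stability from $A_{mm}$ acting on $\hat h_{m,a}^{t-1}$. The constants must not depend on $t$ or on the data. This holds because $\beta_m^{t-t^\star}\le 1$ and $\rho_m^{t-t^\star}\le1$.

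\textbf{Conclusion.} Summing the block bounds gives
\[
\kappa_m = 2\bigl(L_m + 2c_m + \Theta_m\bigr),
\]
or any larger constant. Therefore $\|\mathcal{M}_{m,\mathrm{msg}}^t(\mathcal{D}_m)-\mathcal{M}_{m,\mathrm{msg}}^t(\mathcal{D}_m')\|_2\le\kappa_m R_{\max}$, as claimed.
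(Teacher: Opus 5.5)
Your proposal is correct and follows essentially the same route as the paper: split $z_m^t$ into its blocks, apply the triangle inequality (the input block contributes nothing), use Assumption~\ref{assumption:contractivity} for $h_{m,a}^t$, and absorb the remaining estimator gains into $\kappa_m$. The paper does this last step in one line (``collecting constants from the observer/augmenter maps''); you make it explicit via the Schur-stable steady-state Kalman form of $\hat h_{m,c}^t$ and a uniform bound $\Theta_m$ on $\|\theta_m\|_{\mathrm{op}}$, and you correctly use $2R_{\max}$ rather than $R_{\max}$ for the difference of two clipped measurements.

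One minor correction to your ``main obstacle'' remark: \eqref{eq:aug_state_pred} is not a recursion in $\hat h_{m,a}$, since $h_{m,a}^t-h_{m,a}^{t\,\prime}=A_{mm}(\hat h_{m,a}^{t-1}-\hat h_{m,a}^{t-1\,\prime})$ with $\hat h_{m,a}^{t-1}=\hat h_{m,c}^{t-1}+\theta_m y_m^{t-1}$. So no stability of $A_{mm}$ is needed, and the $h_{m,a}$ bound in fact follows from your $\hat h_{m,c}$ and $\Theta_m$ bounds.
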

\begin{remark}[\textbf{Role of block-diagonal $C$}]
In our paper, the observation matrix $C$ is block-diagonal. When $C$ is block-diagonal (so that $y_m^t = C_{mm} h_m^t$), each client’s $\kappa_m$ depends only on its own local block. 
\end{remark}
The server communicates the gradient $g_m^t := \nabla_{h_{m,a}^t} L_s$ to client $m$. Given the expression of server loss $L_s$ in Eq (13), the analytical expression of $g_m$ can be computed as, 
\[
g_m^t = \frac{2}{T}
\Big(
h_{m,a}^t - [A_{mm} \hat{h}_{m,c}^{t-1}
+ \sum_{n \ne m} \hat{A}_{mn} \hat{h}_{n,c}^{t-1}
+ B_{mm} u_m^{t-1}
+ \sum_{n \ne m} \hat{B}_{mn} u_n^{t-1}]
\Big),
\]
\begin{lemma}[\textbf{$\ell_2$-Sensitivity of server gradients}]
Given the expression of $g_m^t$ above, the sensitivity with respect to one changed measurement $y_{m'}^{t^\star}$ (for some client $m'$ and time $t^\star$) is bounded by
\[
\|g_m^t - g_m^{t\,\prime}\|_2
\le
\Delta_{m,\mathrm{grad}}
= \frac{2}{T}(1+\|A_{mm}\|)\,\kappa_{m,\mathrm{mix}}\,R_{\max},
\]
where the constant $\kappa_{m,\mathrm{mix}}$ is given explicitly as
\[
\kappa_{m,\mathrm{mix}}
=
L_{m'}\!
\left(
\|C_{mm}\|
+
\sum_{n \neq m}
\!\big(
\|A_{mn}\|\beta_{n}^{t-t^\star}
+
\|B_{mn}\|
\big)
\right).
\]
where, $L_{m'}$ arises from the local contractivity bound in Assumption~\ref{assumption:contractivity}.
\end{lemma}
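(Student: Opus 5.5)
Since $g_m^t$ is an affine function of the communicated messages, I would bound its change by the triangle inequality, term by term. Fix a neighboring pair $\mathcal{D},\mathcal{D}'$ that differs only at $y_{m'}^{t^\star}$. Write $\delta(\cdot)$ for the difference of a quantity computed on the two datasets. The known blocks $A_{mm},B_{mm}$ and the current server iterates $\hat A_{mn},\hat B_{mn}$ are treated as fixed: sensitivity is measured for one round with the parameters held constant. The inputs $u_n^{t-1}$ are exogenous and are not part of the private dataset, so $\delta u_n^{t-1}=0$. This gives
\[
\delta g_m^t=\frac{2}{T}\Big(\delta h_{m,a}^t - A_{mm}\,\delta\hat h_{m,c}^{t-1}-\sum_{n\neq m}\hat A_{mn}\,\delta \hat h_{n,c}^{t-1}\Big),
\]
and hence
\[
\|\delta g_m^t\|_2\le \frac{2}{T}\Big(\|\delta h_{m,a}^t\|+\|A_{mm}\|\,\|\delta\hat h_{m,c}^{t-1}\|+\sum_{n\neq m}\|\hat A_{mn}\|\,\|\delta\hat h_{n,c}^{t-1}\|\Big).
\]

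Next I would control each state perturbation with Assumption~\ref{assumption:contractivity}. The change is localized at client $m'$, and by Assumption~\ref{assumption:bounded_clipped} we have $\|y_{m'}^{t^\star}-y_{m'}^{t^\star\prime}\|\le 2R_y\le 2R_{\max}$, with the factor $2$ absorbed into the constants. Each locally computed state of client $n$ then changes by at most $L_{m'}\beta_n^{t-t^\star}R_{\max}$ when $n=m'$, and by zero otherwise: the proprietary and augmented estimators of client $n$ read only $y_n$, and block-diagonal $C$ ensures there is no cross-client leakage through the measurement equation. I would apply the same contractivity bound to the proprietary estimate $\hat h_{m,c}$, treating it as an instance of the contractive local estimator, or else simply assume it separately. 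The own-client terms $\delta h_{m,a}^t$ and $A_{mm}\delta\hat h_{m,c}^{t-1}$ then give a factor $(1+\|A_{mm}\|)$ multiplied by $L_{m'}R_{\max}$. The $\|C_{mm}\|$ summand in $\kappa_{m,\mathrm{mix}}$ comes from bounding the state change induced by a measurement change through the output map, $\|\delta \tilde y\|\le\|C_{mm}\|\|\delta h\|$; equivalently, it can be viewed as a conservative constant covering $\theta_m y_m$-type dependence inside $\hat h_{m,a}$. I would keep it as an additive slack.

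Finally I would collect the cross terms. The quantities $\|\hat A_{mn}\|\,L_{m'}\beta_n^{t-t^\star}R_{\max}$ supply the $\|A_{mn}\|\beta_n^{t-t^\star}$ summands. To obtain the stated form I would bound the estimates by the true blocks, or equivalently state the result with the iterates' norms. The $\|B_{mn}\|$ summands can be included as a harmless upper bound: they vanish when $u$ is not private, and otherwise are bounded by $\|B_{mn}\|R_u\le\|B_{mn}\|R_{\max}$. Factoring out $(1+\|A_{mm}\|)$, which is $\ge 1$ and so can multiply every term without loss, gives $\frac{2}{T}(1+\|A_{mm}\|)\kappa_{m,\mathrm{mix}}R_{\max}$.

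The main obstacle is matching the exact constant, and in particular justifying the $\|C_{mm}\|$ and $\|B_{mn}\|$ summands and the common factor $(1+\|A_{mm}\|)$. I would handle this by deriving the clean bound first and then showing it is dominated by the stated expression, using $1+\|A_{mm}\|\ge1$ and the fact that all added terms are nonnegative.
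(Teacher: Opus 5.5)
Your route matches the paper's. You write $g_m^t$ as an affine function of the transmitted states, apply the triangle inequality, use contractivity for client $m'$'s states and zero for everyone else's, and pad up to the stated constant. Up to your \emph{clean bound} this is sound, and more careful than the paper about $\delta u=0$ and the $A_{mm}\,\delta\hat h_{m,c}^{t-1}$ term.

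The gap is the last step: the clean bound is not dominated by $\frac{2}{T}(1+\|A_{mm}\|)\kappa_{m,\mathrm{mix}}R_{\max}$ just because the added terms are nonnegative. Padding only helps if every term of the clean bound already has a counterpart in $\kappa_{m,\mathrm{mix}}$, and the own-client term has none. For $m=m'$ and $t=t^\star+1$ your own-client contribution is of order $L_m$, namely $L_m(\beta_m+\|A_{mm}\|)$ times $\frac{2}{T}\|\delta y\|$. The only matching term on the stated side is $(1+\|A_{mm}\|)L_m\|C_{mm}\|$, since the off-diagonal summands may vanish, and this is too small whenever $\|C_{mm}\|$ is small. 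Your rationale for $\|C_{mm}\|$ does not apply, because $g_m^t$ involves neither $C_{mm}$ nor $\tilde y$. For $m\neq m'$ you would also need:
\begin{itemize}
\item $\|\hat A_{mm'}\|\le\|A_{mm'}\|$, which nothing guarantees;
\item $\beta^{t-1-t^\star}\le\beta^{t-t^\star}$, which is reversed for $\beta<1$, since the server uses $\hat h^{t-1}$;
\item a way to handle the factor $2$ in $\|\delta y\|\le 2R_y$, which cannot be absorbed into $L_{m'}$ because that constant is fixed by the assumption.
\end{itemize}

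In fact no argument yields the stated constant. Since $h_{m,a}^t-A_{mm}\hat h_{m,c}^{t-1}-B_{mm}u_m^{t-1}=A_{mm}\theta_m y_m^{t-1}+\phi_m$, the proprietary estimate cancels exactly and
\[
\delta g_m^t=\frac{2}{T}\Big(A_{mm}\theta_m\,\delta y_m^{t-1}-\sum_{n\neq m}\hat A_{mn}\,\delta\hat h_{n,c}^{t-1}\Big).
\]
For $m=m'$ the sensitivity is therefore exactly $\frac{2}{T}\|A_{mm}\theta_m\,\delta y_m^{t^\star}\|_2$ at $t=t^\star+1$, and zero otherwise. Now take $A_{mn}=B_{mn}=0$ for $n\neq m$, a stable $A_{mm}$, and replace $C_{mm}$ by $\epsilon C_{mm}$. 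The Kalman gain becomes $O(\epsilon)$, so $L_m$ stays bounded, roughly $\|A_{mm}\theta_m\|/\beta_m$. The stated bound is then $O(\epsilon)$, while the true sensitivity does not shrink.

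The paper's own proof has the same holes. It asserts the $\|C_{mm}\|$ contribution and the factor $(1+\|A_{mm}\|)$, uses the reversed $\beta$-inequality, and silently replaces $\hat A_{mn}$ by $A_{mn}$.

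What you can actually prove is $\|\delta g_m^t\|_2\le\frac{4}{T}\|A_{mm}\theta_m\|R_{\max}$ for $m=m'$. For $m\neq m'$ and $t>t^\star$ it is $\|\delta g_m^t\|_2\le\frac{4}{T}\|\hat A_{mm'}\|\,\bar L_{m'}\bar\beta_{m'}^{\,t-1-t^\star}R_{\max}$, with zero change otherwise. Here $(\bar L_{m'},\bar\beta_{m'})$ are contractivity constants for the proprietary estimator $\hat h_{m',c}$. These must be assumed separately, since the paper's assumption covers only $h_{m',a}$. The Gaussian noise should be calibrated to that bound instead.
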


\subsection{Noise Mechanisms and Calibration}
\label{app:dp:mechanisms}

\begin{definition}[\textbf{Clipping constants}]
Each client enforces $\ell_2$-norm clipping thresholds
$C_{\mathrm{msg}}, C_{\mathrm{grad}} > 0,$
so that $\|z_m^t\|_2 \le C_{\mathrm{msg}}$ and $\|g_m^t\|_2 \le C_{\mathrm{grad}}$ for all $m,t$. These are chosen such that
\[
C_{\mathrm{msg}} \ge \Delta_{m,\mathrm{msg}}, \qquad
C_{\mathrm{grad}} \ge \Delta_{m,\mathrm{grad}}.
\]
where $\Delta_{m,\mathrm{msg}}$ and $\Delta_{m,\mathrm{grad}}$ denote the $\ell_2$-sensitivities of the client message and server gradients, respectively.
\end{definition}

\begin{definition}[\textbf{Perturbation}]
Privacy is achieved by adding independent Gaussian noise:
\[
\tilde{z}_m^t = z_m^t + \mathcal{N}(0, \sigma_{\mathrm{msg}}^2 C_{\mathrm{msg}}^2 I), \quad
\tilde{g}_m^t = g_m^t + \mathcal{N}(0, \sigma_{\mathrm{grad}}^2 C_{\mathrm{grad}}^2 I).
\]
Noise multipliers $\sigma_{\mathrm{msg}}$ and $\sigma_{\mathrm{grad}}$ are selected to achieve target privacy parameters $(\varepsilon, \delta)$.
\end{definition}

\begin{proposition}[\textbf{Per-release Gaussian DP}]
For the client $m$'s message and server gradient release mechanisms, 
the sufficient noise multipliers satisfying $(\varepsilon,\delta)$-DP are
\[
\sigma_{\mathrm{msg}}
\ge
\frac{\Delta_{m,\mathrm{msg}}}{C_{\mathrm{msg}}}
\sqrt{2\ln\!\left(\frac{1.25}{\delta}\right)}\,
\frac{1}{\varepsilon_{\mathrm{msg}}},
\qquad
\sigma_{\mathrm{grad}}
\ge
\frac{\Delta_{m,\mathrm{grad}}}{C_{\mathrm{grad}}}
\sqrt{2\ln\!\left(\frac{1.25}{\delta}\right)}\,
\frac{1}{\varepsilon_{\mathrm{grad}}}.
\]
\end{proposition}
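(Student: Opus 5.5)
This is an instance of the classical Gaussian mechanism, applied separately to each of the two releases. The plan has three steps: fix the sensitivity of each release, compute the effective noise standard deviation, and invoke the standard Gaussian-mechanism theorem. The constraint $\varepsilon<1$ (per release) must be recorded wherever it is needed.

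\emph{Message release.} Consider neighbouring datasets $\mathcal{D}_m,\mathcal{D}_m'$ that differ in one measurement $y_m^{t^\star}$. By the lemma on $\ell_2$-sensitivity of client messages, the pre-noise map satisfies $\|\mathcal{M}_{m,\mathrm{msg}}^t(\mathcal{D}_m)-\mathcal{M}_{m,\mathrm{msg}}^t(\mathcal{D}_m')\|_2\le\Delta_{m,\mathrm{msg}}$. Clipping to norm $C_{\mathrm{msg}}$ is a projection onto a ball, which is $1$-Lipschitz. Hence clipping does not increase this sensitivity, and since $C_{\mathrm{msg}}\ge\Delta_{m,\mathrm{msg}}$ it leaves the sensitivity bound intact.

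The mechanism adds $\mathcal{N}(0,s^2I)$ with $s=\sigma_{\mathrm{msg}}C_{\mathrm{msg}}$. By the Gaussian mechanism (Dwork--Roth, Thm.~A.1), releasing $f(\mathcal{D})+\mathcal{N}(0,s^2I)$ is $(\varepsilon,\delta)$-DP for $\varepsilon\in(0,1)$ whenever
\begin{equation*}
s\ge \Delta_2 f\,\sqrt{2\ln(1.25/\delta)}/\varepsilon .
\end{equation*}
Substituting $s=\sigma_{\mathrm{msg}}C_{\mathrm{msg}}$ and $\Delta_2 f=\Delta_{m,\mathrm{msg}}$, then dividing by $C_{\mathrm{msg}}$, gives exactly the stated lower bound on $\sigma_{\mathrm{msg}}$ with $\varepsilon=\varepsilon_{\mathrm{msg}}$.

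\emph{Gradient release.} The argument is identical, with $\Delta_{m,\mathrm{grad}}$ from the lemma on $\ell_2$-sensitivity of server gradients and the clipping constant $C_{\mathrm{grad}}$. There is one subtlety. The gradient $g_m^t$ depends on a single changed measurement $y_{m'}^{t^\star}$ in two ways: through the messages of client $m$, and through those of other clients $n$ entering via $\hat{A}_{mn}\hat h_{n,c}$ and $\hat B_{mn}u_n$. The gradient-sensitivity lemma already aggregates both routes into $\kappa_{m,\mathrm{mix}}$, so I take it as given and need only note that clipping is again $1$-Lipschitz.

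The main obstacle is bookkeeping rather than mathematics, and it comes in two parts. First, the Gaussian-mechanism bound as stated requires $\varepsilon_{\mathrm{msg}},\varepsilon_{\mathrm{grad}}<1$, so I would state this restriction explicitly in the proof. Second, I would remark that the guarantee is per release. Privacy across the horizon $t=1,\dots,T$ and across training rounds then follows from composition, and the post-processing property covers everything the server or client subsequently computes from the noisy releases.
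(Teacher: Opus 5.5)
Your proposal is correct and follows the same route as the paper, which likewise applies the classical Gaussian mechanism with $\ell_2$-sensitivity $\Delta$ and noise standard deviation $\sigma C$, then divides by $C$ for the message and gradient channels. Your explicit restriction $\varepsilon_{\mathrm{msg}},\varepsilon_{\mathrm{grad}}\in(0,1)$ and your remark that norm-clipping is nonexpansive are worthwhile tightenings rather than departures: the $\sqrt{2\ln(1.25/\delta)}$ bound genuinely requires the former, which the paper omits, and the paper leaves the latter implicit.
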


The Gaussian perturbation scales $\sigma_{\mathrm{msg}}$ and $\sigma_{\mathrm{grad}}$ quantify privacy strength for the message and gradient channels.  
Larger noise yields stronger privacy at the expense of higher variance in estimated parameters.  
Clipping thresholds $C_{\mathrm{msg}}$ and $C_{\mathrm{grad}}$ cap the maximum influence of any single measurement $y_m^t$, ensuring the injected noise dominates any one individual’s contribution.

\subsection{Composition and Post-Processing}
\label{app:dp:composition}

\begin{proposition}[\textbf{Sequential and joint composition of Gaussian mechanisms}]
Let each client $m$ release privatized messages $\tilde{z}_m^t$ and gradients $\tilde{g}_m^t$ over $R$ communication rounds.  
If each mechanism satisfies $(\varepsilon_{\mathrm{msg}}, \delta_{\mathrm{msg}})$-DP for messages and 
$(\varepsilon_{\mathrm{grad}}, \delta_{\mathrm{grad}})$-DP for gradients, then the composition across all rounds satisfies,
\[\big(\varepsilon_{\mathrm{total}}, \delta_{\mathrm{total}}\big)
=
\big(R\,(\varepsilon_{\mathrm{msg}} + \varepsilon_{\mathrm{grad}}),\; R\,(\delta_{\mathrm{msg}} + \delta_{\mathrm{grad}})\big).\]

\end{proposition}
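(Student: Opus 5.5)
The plan is to treat the statement as a direct consequence of two standard differential-privacy facts: basic (sequential) composition for adaptively chosen mechanisms, and the fact that running two mechanisms on the same dataset composes additively. First I will fix the accounting unit: a neighboring pair $\mathcal{D},\mathcal{D}'$ differing in a single $y_{m^\star}^{t^\star}$. The full transcript of the protocol is the sequence
\[
\big(\tilde z^{(1)},\tilde g^{(1)},\tilde z^{(2)},\tilde g^{(2)},\dots,\tilde z^{(R)},\tilde g^{(R)}\big).
\]
Each release in round $r$ is a Gaussian mechanism whose input is the private data together with the previously released, already noised, quantities. This holds for the client parameters $\theta_m,\phi_m$ and server estimates $\hat A_{mn},\hat B_{mn}$: they are deterministic functions of earlier noisy releases and local data, so the server-side updates are post-processing of noisy messages.

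The second step is to check that the per-release guarantees hold conditionally on the history. The preceding Proposition (per-release Gaussian DP) calibrates $\sigma_{\mathrm{msg}}$ and $\sigma_{\mathrm{grad}}$ using the $\ell_2$-sensitivities $\Delta_{m,\mathrm{msg}}$ and $\Delta_{m,\mathrm{grad}}$ from the sensitivity lemmas. Those bounds are uniform over the current parameter values, since they depend only on $R_{\max}$, the contraction constants, and clipping. Therefore, for every fixed history, the round-$r$ message mechanism is $(\varepsilon_{\mathrm{msg}},\delta_{\mathrm{msg}})$-DP and the gradient mechanism is $(\varepsilon_{\mathrm{grad}},\delta_{\mathrm{grad}})$-DP with respect to the neighboring pair.

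Third, I will compose by induction on the number of releases using the adaptive composition theorem. Write the joint output density ratio as a product of conditional ratios. Split the bad event into the union of the per-step bad events, which has probability at most the sum of the $\delta$'s. On the complement the log-ratio is bounded by the sum of the $\varepsilon$'s. Two releases per round over $R$ rounds gives $\varepsilon_{\mathrm{total}}=R(\varepsilon_{\mathrm{msg}}+\varepsilon_{\mathrm{grad}})$ and $\delta_{\mathrm{total}}=R(\delta_{\mathrm{msg}}+\delta_{\mathrm{grad}})$. I will make the standard coupling argument precise by the inductive step: if $\mathcal{M}_1$ is $(\varepsilon_1,\delta_1)$-DP and $\mathcal{M}_2(\cdot,o_1)$ is $(\varepsilon_2,\delta_2)$-DP for each $o_1$, then $(\mathcal{M}_1,\mathcal{M}_2)$ is $(\varepsilon_1+\varepsilon_2,\delta_1+\delta_2)$-DP. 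Post-processing then covers the final learned $\hat A,\hat B,\theta,\phi$.

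The main obstacle is the within-round structure and the multiplicity of releases per round. The messages $\tilde z_m^t$ are released for all $t=1,\dots,T$ and all clients, and a single $y^{t^\star}_{m^\star}$ affects many of them through the estimator recursion. I will handle this by interpreting "message mechanism" as the entire vector $(\tilde z_m^t)_{m,t}$ released in a round, with $\Delta_{m,\mathrm{msg}}$ read as the joint $\ell_2$-sensitivity of that vector. This reading is justified by the contractivity of Assumption~\ref{assumption:contractivity}, which gives a geometric sum $L_m\sum_{t\ge t^\star}\beta_m^{t-t^\star}\le L_m/(1-\beta_m)$ that can be absorbed into $\kappa_m$. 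The gradient vector is treated the same way. The count is then exactly two mechanisms per round. Independence of the injected noises across channels and rounds ensures the conditional-DP hypothesis holds.
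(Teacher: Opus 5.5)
Your proposal is correct and follows essentially the same route as the paper: compose the message release and the gradient release within a round by basic composition, then induct over the $R$ rounds. Conditioning on the released history, treating the whole per-round vector as the unit of release, and invoking post-processing are extra rigor that the paper leaves implicit. One small caution, which applies equally to the paper's own sketch: the ``union of per-step bad events'' argument needs the privacy-loss tail form of the guarantee (which the classical Gaussian analysis does supply) rather than bare $(\varepsilon,\delta)$-DP, so the adaptive two-step composition lemma you state is the correct result to cite for the induction.
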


\begin{corollary}[\textbf{Post-processing invariance}]
All downstream quantities, including the server’s learned matrices $\{\hat{A}_{mn}, \hat{B}_{mn}\}$, client parameters $\{\theta_m, \phi_m\}$, and counterfactual outputs, retain the same $(\varepsilon,\delta)$ privacy guarantee as the privatized training transcript.
\end{corollary}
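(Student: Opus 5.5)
The plan is to reduce the statement to the standard post-processing property of $(\varepsilon,\delta)$-differential privacy, applied to the privatized training transcript. Let $\mathcal{T}(\mathcal{D})$ denote the random transcript $\{\tilde z_m^t,\tilde g_m^t\}$ over all $R$ rounds, all clients $m$ and all times $t$. By the Sequential and joint composition proposition, $\mathcal{T}$ is $(\varepsilon_{\mathrm{total}},\delta_{\mathrm{total}})$-DP with respect to the neighboring relation above. I would then show that each downstream quantity can be written as $f(\mathcal{T}(\mathcal{D}),\omega)$. Here $f$ is a measurable map, and $\omega$ is randomness independent of $\mathcal{D}$, namely the random initializations of $\phi_m^0,\theta_m^0,\hat A_{mn}^0,\hat B_{mn}^0$ described in the Experiments section.

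The core inequality is short. First take $f$ deterministic and any measurable set $S$. Then
\[
\Pr[f(\mathcal{T}(\mathcal{D}))\in S]=\Pr[\mathcal{T}(\mathcal{D})\in f^{-1}(S)]\le e^{\varepsilon}\Pr[\mathcal{T}(\mathcal{D}')\in f^{-1}(S)]+\delta,
\]
which gives the claim in that case. For randomized $f$, condition on $\omega$, apply the deterministic bound for each fixed $\omega$, and integrate over $\omega$. This is valid because $\omega$ is independent of $\mathcal{D}$, and the additive $\delta$ survives integration since the law of $\omega$ is a probability measure.

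The real work is verifying the representation $f(\mathcal{T},\omega)$ for each quantity.
\begin{itemize}
\item \textbf{Server matrices.} The server updates of $\hat A_{mn},\hat B_{mn}$ are gradient steps on $L_s$. These are computed only from the received $\tilde z_m^t$, the diagonal blocks known by Assumption~\ref{assumption_server_blocks}, and the previous iterates, so they are functions of $\mathcal{T}$ and $\omega$.
\item \textbf{Counterfactual outputs.} The server counterfactual of Eq.~\eqref{eq_counterfactual_server} uses only the learned $\hat B_{mn}$, so it is a function of the server matrices.
\item \textbf{Client parameters.} The updates of $\theta_m,\phi_m$ use both the noisy gradients $\tilde g_m^t$ and client $m$'s own measurements $y_m^t$ through $\nabla L_{m,a}$. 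So $\theta_m$ is not a function of $\mathcal{T}$ alone with respect to $\mathcal{D}_m$. I would handle this by stating the guarantee from the viewpoint of any party other than the data owner. For neighbors differing at $y_{m^\star}^{t^\star}$ with $m^\star\neq m$, the data $\mathcal{D}_m$ is fixed and common to $\mathcal{D}$ and $\mathcal{D}'$. Hence $(\theta_m,\phi_m)$, and the client counterfactual of Eq.~\eqref{eq_counterfactual_client}, are post-processings of $\mathcal{T}$ with $\mathcal{D}_m$ treated as a fixed auxiliary input.
\end{itemize}

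The step I expect to be the main obstacle is adaptivity. Messages and gradients in round $k+1$ depend on parameters produced from earlier noisy releases, so the transcript is an adaptive composition rather than a batch release. I would argue that the composition proposition holds for adaptive sequential composition. This works because each per-release sensitivity bound holds uniformly over the current parameter values: the constants $\kappa_m$ and $\kappa_{m,\mathrm{mix}}$ depend only on system quantities and the clipping bounds. The standard adaptive composition theorem then applies to the whole transcript, and post-processing finishes the proof.
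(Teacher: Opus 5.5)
Your core step is the paper's own proof. The paper also writes every downstream quantity as a map $\mathcal{T}_{\mathrm{post}}$ applied to $\mathcal{T}_{\mathrm{priv}}$, and pulls the $(\varepsilon,\delta)$ inequality back through the preimage $\mathcal{T}_{\mathrm{post}}^{-1}(S)$. For $\hat A_{mn},\hat B_{mn}$ and the server counterfactual, your argument and the paper's coincide.

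Where you diverge, you are right and the paper is not. The paper simply asserts that the updates of $\theta_m,\phi_m$ are functions of $\mathcal{T}_{\mathrm{priv}}$ only. But $\nabla_{\theta_m}L_{m,a}$ and $\nabla_{\phi_m}L_{m,a}$ are built from $r^t_{m,a}=y^t_m-C_{mm}h^t_{m,a}$, and even $\nabla_{\theta_m}L_s$ carries the raw factor ${y^{t-1}_m}^{\top}$. So post-processing says nothing about $\theta_m,\phi_m$, or the client-side counterfactual, for neighbors that alter $\mathcal{D}_m$. Your guarantee restricted to $m^\star\neq m$ is exactly what the argument yields. You should state plainly that this is strictly weaker than the corollary as written: you are correcting the claim, not proving it. Recovering the full claim would require privatizing the local updates themselves, or never releasing $\theta_m,\phi_m$.

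\textbf{Two smaller points.}

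\emph{Randomized post-processing.} The initialization $\omega$ enters both the transcript and $f$. Integrating over $\omega$ therefore requires the transcript to be $(\varepsilon,\delta)$-DP \emph{conditionally on each fixed} $\omega$. Independence of $\omega$ from $\mathcal{D}$ is not the operative condition: for a uniform bit $\omega$, the release $\omega\oplus x$ is perfectly private on its own but reveals $x$ together with $\omega$. Conditional DP is the natural form of the guarantee here, but you should state it, for example by declaring the initializations public and fixed before training.

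\emph{Adaptivity paragraph.} It is not needed, since the corollary takes the transcript's guarantee as its hypothesis. Its justification is also off. The difficulty is not uniformity over parameter values; it is that $\theta_m^k,\phi_m^k$ are hidden state depending on $\mathcal{D}_m$ (your own third-bullet observation). Conditioning on earlier releases therefore does not freeze them. A change in $y_m^{t^\star}$ then shifts every $z_m^t$ of later rounds, including $t<t^\star$, and the contractivity-based $\kappa_m$ does not cover this. If you want to address it, use the clipping bound: replace-one sensitivity is at most $2C_{\mathrm{msg}}$ per clipped release, whatever the hidden state. You would also need to count every release a single measurement affects within a round, not one release per round.
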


\clearpage
\section{Gradients Communicated by the Server}
\begin{proposition}
    Using the chain rule, the derivatives of the server loss $L_s$ with respect to the augmented client parameters of client $m$ (i.e., $\phi_m$ and $\theta_m$) are
    \begin{align*}
        \nabla_{\theta_m} L_s &= \sum_{t=1}^T \big(A_{mm}^\top \nabla_{h^t_{m,a}} L_s + \nabla_{\hat h^{t-1}_{m,a}} L_s\big)\,{y_m^{t-1}}^{\!\top}, \hspace{0.1cm} \text{and} \\
        \nabla_{\phi_m} L_s &= \sum_{t=1}^T \nabla_{h^t_{m,a}} L_s .
    \end{align*}
\end{proposition}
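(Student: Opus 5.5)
We use the chain rule through the two places where $\theta_m$ and $\phi_m$ enter the server loss $L_s$ in Eq.~\eqref{eq:server_loss}, namely the augmented refined estimate $\hat h^{t-1}_{m,a}$ and the augmented predicted state $h^t_{m,a}$. The server only ever sees the client's quantities through $h^t_{m,a}$ (which enters $r^t_{m,s}$) and $\hat h^{t-1}_{m,a}$ (which enters the penalty $\mathcal{D}$). The proprietary quantities $\hat h^{t-1}_{m,c}$, the inputs, and the other clients' states do not depend on $(\theta_m,\phi_m)$. So we regard $L_s$ as a function of the collection $\{h^t_{m,a},\hat h^{t-1}_{m,a}\}_{t=1}^T$, and these in turn as functions of the client parameters through Eqs.~\eqref{eq:aug_state_pred}--\eqref{eq:aug_state_estimate}.

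\textbf{Step 1: Jacobians of the augmented quantities.} Following the one-step (truncated) dependence used in the augmented model, we treat each $\hat h^{t-1}_{m,a}$ appearing in $L_s$ as determined by Eq.~\eqref{eq:aug_state_estimate}. This gives $\partial \hat h^{t-1}_{m,a} = (d\theta_m)\, y^{t-1}_m$, and the estimate is independent of $\phi_m$. Substituting this into Eq.~\eqref{eq:aug_state_pred} gives
\[
h^t_{m,a} = A_{mm}\big(\hat h^{t-1}_{m,c} + \theta_m y^{t-1}_m\big) + B_{mm}u^{t-1}_m + \phi_m .
\]
Hence $d h^t_{m,a} = A_{mm}(d\theta_m) y^{t-1}_m + d\phi_m$.

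\textbf{Step 2: Assemble the differential.} Next we write
\[
dL_s = \sum_{t=1}^T \Big( \langle \nabla_{h^t_{m,a}}L_s,\, dh^t_{m,a}\rangle + \langle \nabla_{\hat h^{t-1}_{m,a}}L_s,\, d\hat h^{t-1}_{m,a}\rangle \Big),
\]
with the partial gradients taken holding the other argument fixed. These are exactly what the server communicates; $\nabla_{h^t_{m,a}}L_s$ is given in Eq.~\eqref{eq:L_s_local_gradients}, and $\nabla_{\hat h^{t-1}_{m,a}}L_s$ comes from the $\mathcal{D}$ term. Inserting Step 1 and using $\langle g, M(d\theta) y\rangle = \mathrm{tr}\big((M^\top g\, y^\top)^\top d\theta\big)$ with $M=A_{mm}$ or $M=I$, we read off
\[
\nabla_{\theta_m}L_s=\sum_t \big(A_{mm}^\top \nabla_{h^t_{m,a}}L_s+\nabla_{\hat h^{t-1}_{m,a}}L_s\big){y_m^{t-1}}^{\!\top}.
\]
Similarly $\langle g, d\phi_m\rangle$ yields $\nabla_{\phi_m}L_s=\sum_t \nabla_{h^t_{m,a}}L_s$.

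\textbf{Main obstacle.} The delicate point is the bookkeeping of dependencies, not the calculus. Strictly, $\hat h^{t-1}_{m,a}$ also feeds the next prediction $h^{t}_{m,a}$, and the same variable appears both in $r^t_{m,s}$ and in $\mathcal{D}$. We must fix the convention that $\nabla_{\hat h^{t-1}_{m,a}}L_s$ denotes only the direct (penalty) partial, so that the path through $h^t_{m,a}$ is captured separately by the $A_{mm}^\top$ term and nothing is double-counted. We must also check that $\hat h_{m,c}$ is a fixed input, as stipulated by the assumption that proprietary outputs are given. With that convention stated explicitly, the matrix-calculus identity above finishes the proof.
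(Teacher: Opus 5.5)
Your proposal is correct and is essentially the paper's proof. The paper also treats the loss as a function of the two independent arguments $h^t_{m,a}$ and $\hat h^{t-1}_{m,a}$ (via a proxy loss $\ell(h^t,\hat h^{t-1})$), holds $\hat h_{m,c}$ fixed, and applies the chain rule through $\hat h^{t-1}_{m,a}=\hat h^{t-1}_{m,c}+\theta_m y^{t-1}_m$ and $h^t_{m,a}=A_{mm}\hat h^{t-1}_{m,a}+B_{mm}u^{t-1}_m+\phi_m$; the only difference is that it writes this in Einstein index notation rather than your differential/trace form. Your explicit convention that $\nabla_{\hat h^{t-1}_{m,a}}L_s$ is the direct (penalty-only) partial is exactly what that proxy encodes, and your handling of the $1/T$ factor (kept inside the communicated partials) is cleaner than the paper's, which carries a $1/T$ through the computation and then drops it as an inconsequential constant.
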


\begin{proof}
From the augmented client model,
\[
h^t_{m,a}=A_{mm}\hat h^{t-1}_{m,a}+b_m^{t-1}+\phi_m,\qquad
\hat h^{t-1}_{m,a}=c_m^{t-1}+\theta_m y_m^{t-1}.
\]
Consider the proxy loss, 
\[
\ell(h^t_{m,a},\hat h^{t-1}_{m,a}) \coloneqq \frac1T \sum_{t=1}^T
\Big(\,\|h^t_{m,a}-X^t_m\|_2^2+\xi\|A_{mm}\hat h^{t-1}_{m,a}-Z^{t-1}_m\|_2^2\Big),
\]
This proxy loss follows the same computational graph as the client model with, 
\[
h^t = A_{mm}\hat h^{t-1}+b^{t-1}+\phi,\qquad
\hat h^{t-1}=c^{t-1}+\theta y^{t-1}.
\]

Using \textbf{Einstein index notation}, repeated Latin indices are summed over automatically.  
Indices $p,k,r\in\{1,\dots,P_m\}$ denote state components and $j,s\in\{1,\dots,D_m\}$ denote measurement components.  
The Kronecker delta is $\delta_{ij}$, and subscripts on a vector or matrix denote its components.

\paragraph{Derivative with respect to $\theta_m$.}
By the chain rule,
\begin{align*}
\frac{\partial \ell}{\partial \theta_{m,rs}}
&=\frac1T\sum_{t=1}^T\sum_{p=1}^{P_m}
\left(
\frac{\partial \ell}{\partial h^t_{m,a,p}}\frac{\partial h^t_{m,a,p}}{\partial \theta_{m,rs}}
+
\frac{\partial \ell}{\partial \hat h^{t-1}_{m,a,p}}\frac{\partial \hat h^{t-1}_{m,a,p}}{\partial \theta_{m,rs}}
\right).
\end{align*}
From 
\[
h^t_{m,a,p}=A_{mm,pk}\hat h^{t-1}_{m,a,k}+b^{t-1}_{m,p}+\phi_{m,p}, 
\qquad 
\hat h^{t-1}_{m,a,k}=c^{t-1}_{m,k}+\theta_{m,kj}y^{t-1}_{m,j},
\]
we have
\[
\frac{\partial \hat h^{t-1}_{m,a,k}}{\partial \theta_{m,rs}}=\delta_{kr}y^{t-1}_{m,s},
\qquad
\frac{\partial h^t_{m,a,p}}{\partial \theta_{m,rs}}
=A_{mm,pk}\frac{\partial \hat h^{t-1}_{m,a,k}}{\partial \theta_{m,rs}}
=A_{mm,pk}\delta_{kr}y^{t-1}_{m,s}=A_{mm,pr}y^{t-1}_{m,s}.
\]
Hence,
\begin{align*}
\frac{\partial \ell}{\partial \theta_{m,rs}}
&=\frac1T\sum_{t=1}^T
\left[
\Big(\tfrac{\partial \ell}{\partial h^t_{m,a,p}}\Big)A_{mm,pr}
+
\Big(\tfrac{\partial \ell}{\partial \hat h^{t-1}_{m,a,p}}\Big)\delta_{pr}
\right] y^{t-1}_{m,s}\\
&=\frac1T\sum_{t=1}^T
\left[
\big(A_{mm}^\top\nabla_{h^t_{m,a}}\ell\big)_r
+
\big(\nabla_{\hat h^{t-1}_{m,a}}\ell\big)_r
\right] y^{t-1}_{m,s}.
\end{align*}
Collecting the $(r,s)$-entries, the matrix form is
\[
\nabla_{\theta_m}\ell
=\frac1T\sum_{t=1}^T \big(A_{mm}^\top\nabla_{h^t_{m,a}}\ell+\nabla_{\hat h^{t-1}_{m,a}}\ell\big)\,{(y^{t-1}_m)}^\top.
\]

\paragraph{Derivative with respect to $\phi_m$.}
Again by the chain rule,
\begin{align*}
\frac{\partial \ell}{\partial \phi_{m,a}}
&=\frac1T\sum_{t=1}^T\sum_{p=1}^{P_m}
\frac{\partial \ell}{\partial h^t_{m,a,p}}\frac{\partial h^t_{m,a,p}}{\partial \phi_{m,a}}
=\frac1T\sum_{t=1}^T\sum_{p=1}^{P_m}
\frac{\partial \ell}{\partial h^t_{m,a,p}}\,\delta_{pa}
=\frac1T\sum_{t=1}^T [\nabla_{h^t_{m,a}}\ell]_a,
\end{align*}
or in matrix form,
\[
\nabla_{\phi_m}\ell=\frac1T\sum_{t=1}^T \nabla_{h^t_{m,a}}\ell.
\]

Finally, replacing $\ell$ by $L_s$ (the factor $1/T$ is an inconsequential constant) yields
\[
\nabla_{\theta_m} L_s = \sum_{t=1}^T \big(A_{mm}^\top \nabla_{h^t_{m,a}} L_s + \nabla_{\hat h^{t-1}_{m,a}} L_s\big)\,{y_m^{t-1}}^{\!\top},
\qquad
\nabla_{\phi_m} L_s = \sum_{t=1}^T \nabla_{h^t_{m,a}} L_s,
\]
as claimed.
\end{proof}

\clearpage
\section{Experiments}
\subsection{Additional Results on Synthetic Datasets}\label{appendix:Synthetic}
The results plotted in Fig. \ref{fig:syn_losses} are obtained by directly optimizing the server loss (Eq~\ref{eq:server_loss}) with the disentanglement penalty $\xi$ and tuning the learning rates. It was observed that direct implementation of the penalty-based disentanglement could make the framework unstable for large values $\xi$ - which is a requirement for effective disentanglement. This issue can be mitigated by implementing an Augmented Lagrangian method (\cite{bertsekas2014constrained}), which results in better disentanglement while keeping the framework stable. However, the Augmented Lagrangian method might converge sub-optimal solution compared to the penalty-based disentanglement. We explain this in the next paragraph.

\paragraph{Augmented Lagrangian with IMEX}
To robustify training under large penalty $\xi$ and improve local disentanglement, we also experimented with an Augmented Lagrangian (AL) formulation \cite{bertsekas2014constrained} combined with an implicit--explicit (IMEX) time discretization \cite{AsherRuuthIMEX1995}. In AL (as opposed to the \textbf{Penalty Method (PM)} which optimizes $L_s$ at the server), the hard alignment/consensus condition is enforced via a quadratic penalty and a dual term $\lambda$ (similar to a Lagrangian multiplier), i.e., we update the server and client variables by minimizing the AL server loss,
\[
\mathcal L_{\mathrm{AL}}
= \frac{1}{T}\sum_{t=1}^{T}\sum_{m=1}^{M}
\Big(
  \|r_{m,s}^{\,t}\|_2^2
  + \lambda_m^{\,t\top}\mathcal D_m^{\,t}
  + \tfrac{\rho}{2}\,\|\mathcal D_m^{\,t}\|_2^2
\Big)\!,
\]
where
\begin{equation*}
    \mathcal D_m^{\,t}
:= A_{mm}\!\big(\hat h_{m,a}^{\,t-1}-\hat h_{m,c}^{\,t-1}\big)
   - \sum_{n\neq m} \hat A_{mn}\,\hat h_{n,c}^{\,t-1}
\end{equation*}
and $\rho>0$ is the penalty. Practically, this stabilizes training under large penalties and promotes cleaner separation of client-specific and cross-client effects (``disentanglement''). The trade-off is that the IMEX/AL step solves a proximal, constraint-regularized surrogate rather than the original unconstrained gradient step at each iteration; hence the fixed point can be slightly biased and, in that sense, suboptimal relative to our primary (non-AL) method, even though it is often better behaved numerically. 

For all our experiments, other than for this part, we have implemented the Penalty Method. In Figure \ref{fig:PM_vs_AL}, we have compared the Penalty Method (PM) and Augmented Lagrangian (AL). The learning curve of AL exhibits larger oscillations but achieves better disentanglement quickly at both server and clients (Figures \ref{fig:comp_disentanglement}, \ref{fig:comp_delta_d_1} and \ref{fig:comp_delta_d_2}). This suggests in practice we can use a hybrid schedule: use AL as a warm start to quickly enforce disentanglement, then switch to PM to achieve a smoother descent of $L_s$ and a marginally better final objective.

\begin{figure*}[t]
\centering

\begin{subfigure}{0.32\textwidth}
  \includegraphics[width=\linewidth]{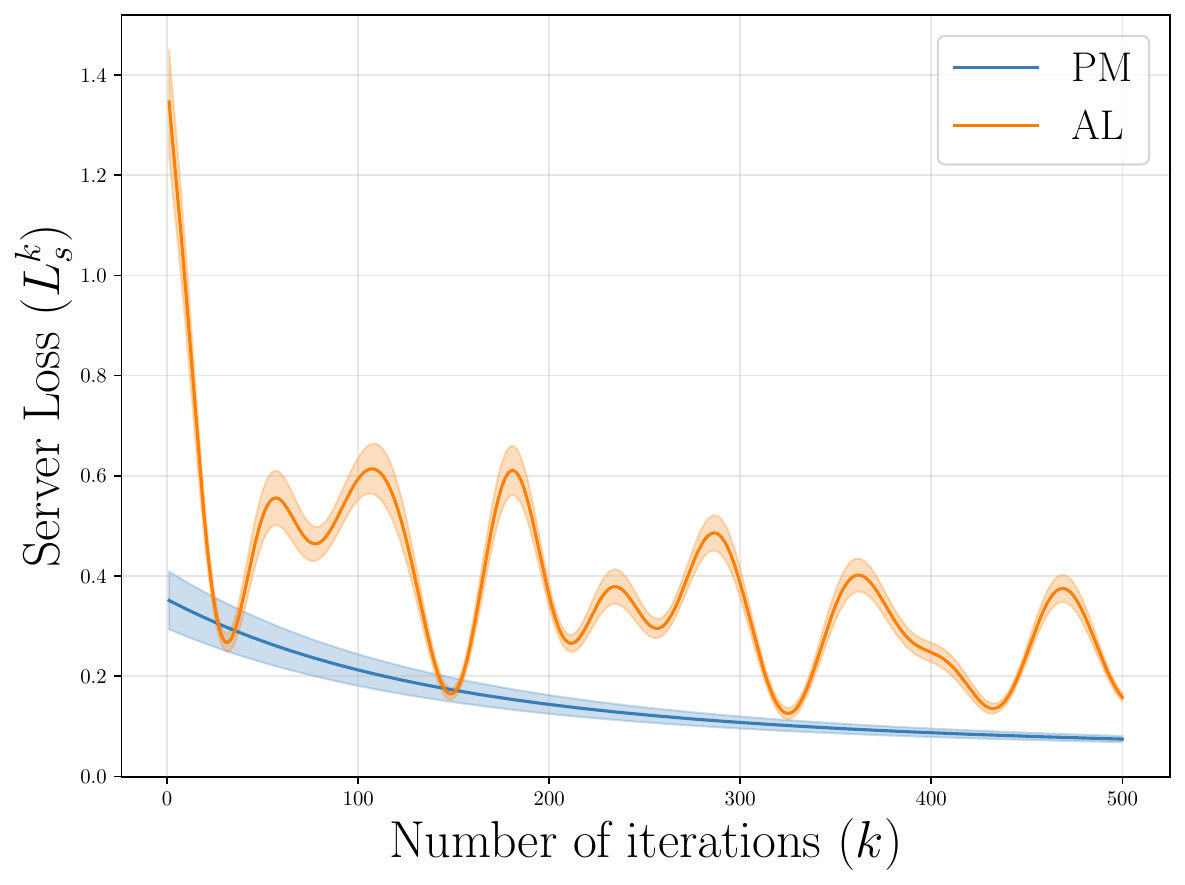}
  \caption{Server Loss}\label{fig:comparison_server_loss}
\end{subfigure}\hfill
\begin{subfigure}{0.32\textwidth}
  \includegraphics[width=\linewidth]{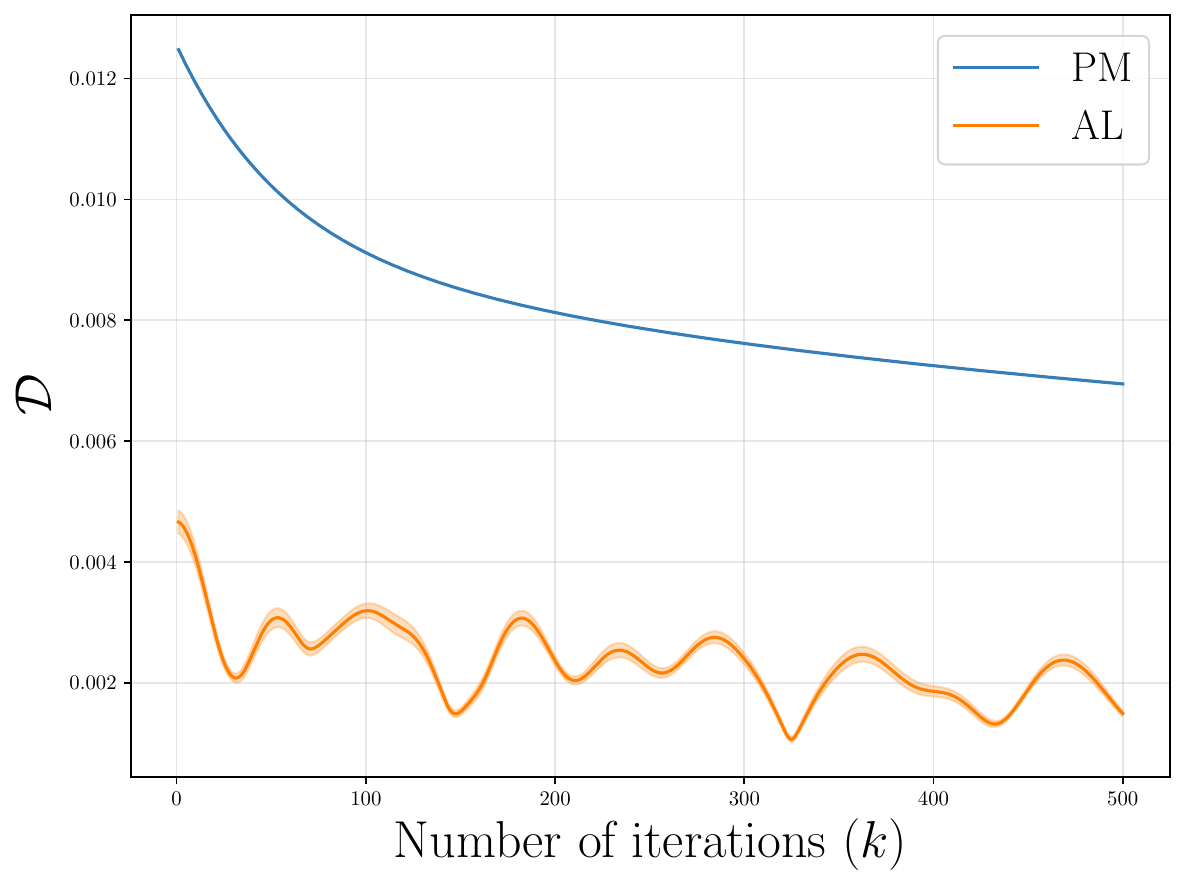}
  \caption{Disentanglement constraint $\mathcal{D}$}\label{fig:comp_disentanglement}
\end{subfigure}\hfill
\begin{subfigure}{0.32\textwidth}
  \includegraphics[width=\linewidth]{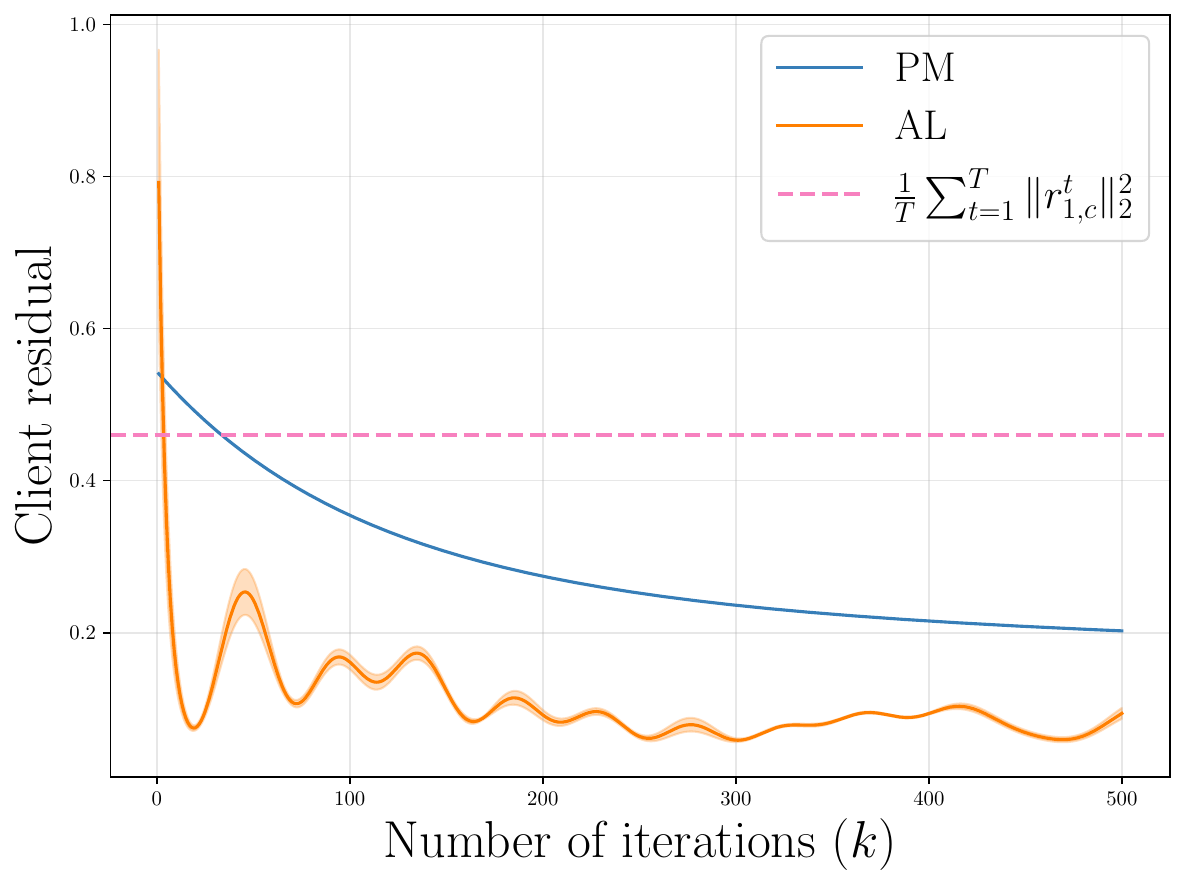}
  \caption{Client 1: $L_{1,a}^k$}\label{fig:comp_local_loss1}
\end{subfigure}

\vspace{0.6em}

\begin{subfigure}{0.32\textwidth}
  \includegraphics[width=\linewidth]{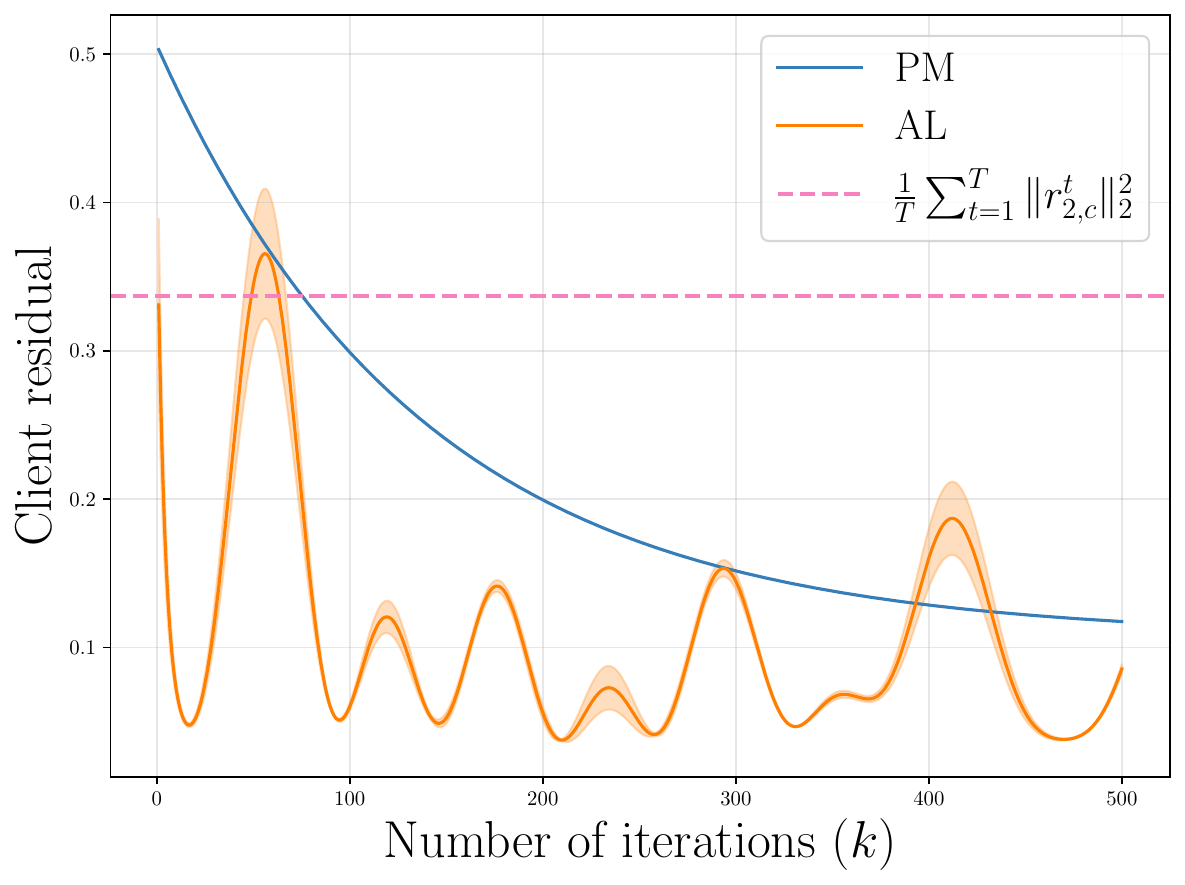}
  \caption{Client 2: $L_{2,a}^k$}\label{fig:comp_local_loss2}
\end{subfigure}\hfill
\begin{subfigure}{0.32\textwidth}
  \includegraphics[width=\linewidth]{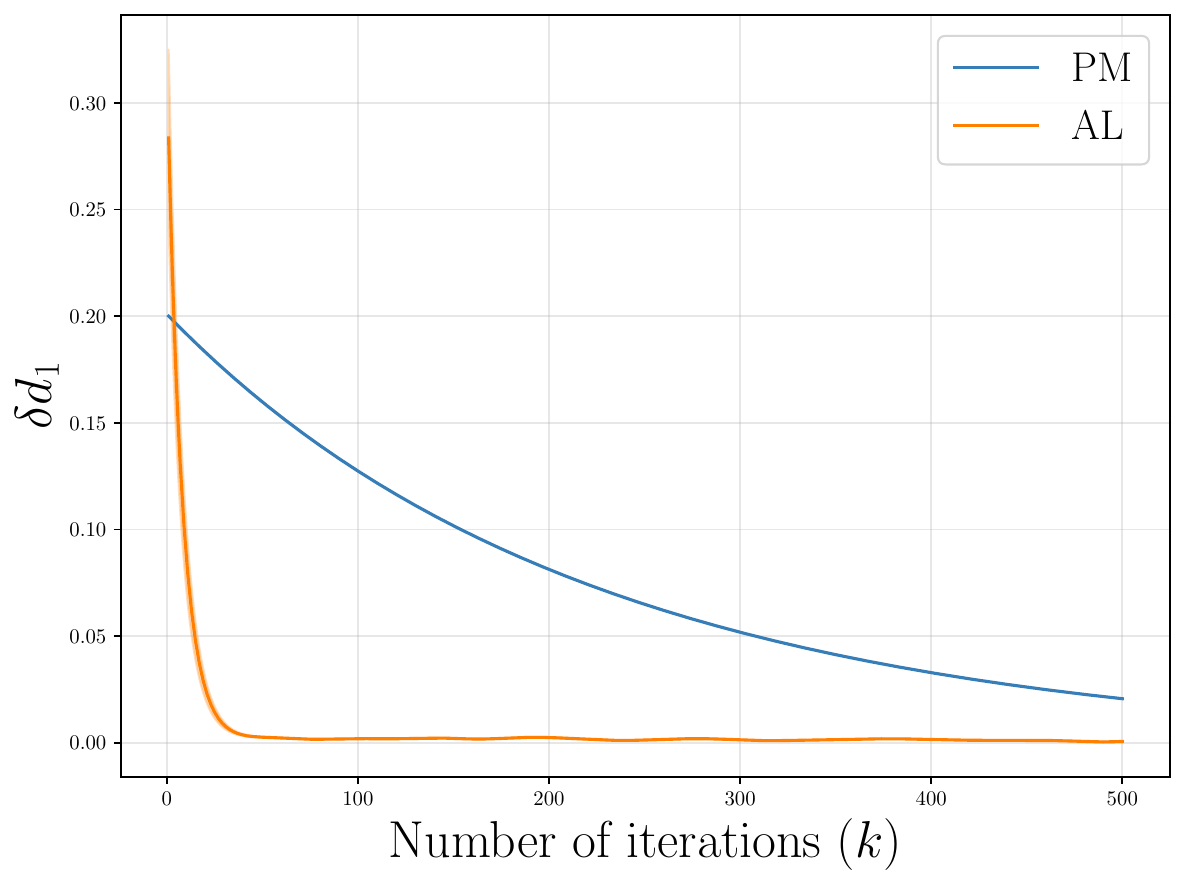}
  \caption{Client 1: $\delta d_1$}\label{fig:comp_delta_d_1}
\end{subfigure}\hfill
\begin{subfigure}{0.32\textwidth}
  \includegraphics[width=\linewidth]{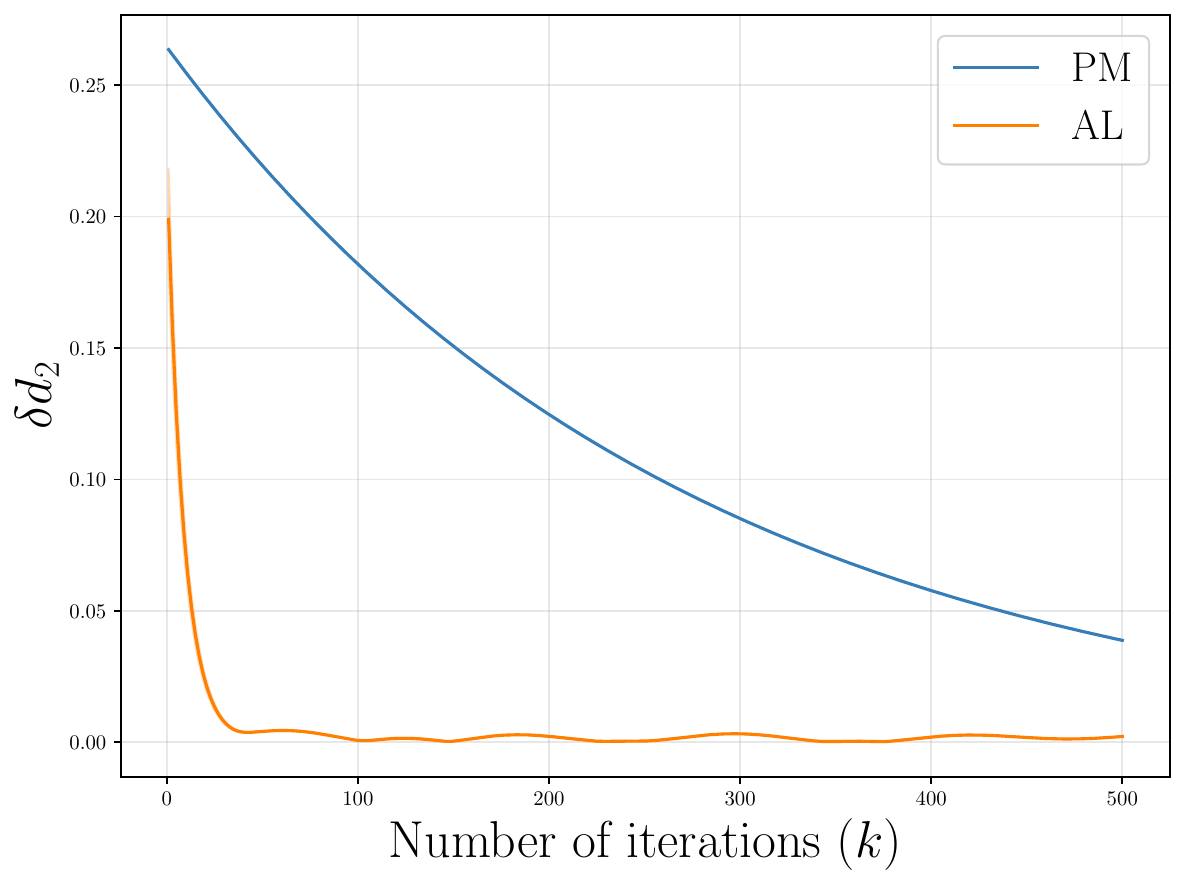}
  \caption{Client 1: $\delta d_1$}\label{fig:comp_delta_d_2}
\end{subfigure}

\caption{Comparison between Penalty Method (PM) and Augmented Lagrangian (AL) for the two client system. $L_s$, $\mathcal{D}$, Local loss (client residual) $L_{m,a}$ and $\delta d_m$ vs Number of iterations ($k$) are plotted. Note that we defined $\delta d_m := \norm{\phi_m - \frac{1}{T}\sum_{t = 1}^T \left(\sum_{n \neq m}\hat{B}_{mn} u_n^{t-1} \right)}_2$}
\label{fig:PM_vs_AL}
\end{figure*}

\paragraph{Scalability:}
We study the scalability of our framework in two ways:
\textbf{(i)} For a two component system, we fix the state dimension $P_m = 2$, input dimension $U_m = 2$ and generate stable LTI systems with measurement dimensions $D_m = \{16, 32, 64, 128\}$. We report the final server loss function $L_s$ and the disentanglement norm $\mathcal{D}$ in Table \ref{tab:scalability_D_dim}. \textbf{(ii)} The state, input and output dimensions are fixed at $P_m = 2$, $U_m = 2$ and $D_m = 8$ respectively. The number of clients are varied as $M = \{2,4,8,16\}$. As before, the final server loss $L_s$ and $\mathcal{D}$ are reported in Table \ref{tab:scalability_D_clients}

\begin{table}[h]
\centering
\caption{$L_s$ and $\mathcal{D}$ by scaling measurement dim. $D_m$}
\begin{tabular}{|cc|cc|cc|cc|}
\hline
\multicolumn{2}{|c|}{$D_m=16$} &
\multicolumn{2}{c|}{$D_m=32$} &
\multicolumn{2}{c|}{$D_m=64$} &
\multicolumn{2}{c|}{$D_m=128$} \\\hline
$L_s$ & $\mathcal{D}$ & $L_s$ & $\mathcal{D}$ & $L_s$ & $\mathcal{D}$ & $L_s$ & $\mathcal{D}$ \\\hline
0.7649 & 0.0034 & 1.0987 & 0.0041 & 1.4243 & 0.0046 & 1.1805 & 0.0047 \\\hline
\end{tabular}
\label{tab:scalability_D_dim}
\end{table}

\begin{table}[h]
\centering
\caption{Server loss $L_s$ by scaling number of clients $M$}
\begin{tabular}{|cc|cc|cc|cc|}
\hline
\multicolumn{2}{|c|}{$M=2$} &
\multicolumn{2}{c|}{$M=4$} &
\multicolumn{2}{c|}{$M=8$} &
\multicolumn{2}{c|}{$M=16$} \\\hline
$L_s$ & $\mathcal{D}$ & $L_s$ & $\mathcal{D}$ & $L_s$ & $\mathcal{D}$ & $L_s$ & $\mathcal{D}$ \\\hline
0.0744 & 0.0013 & 0.0412 & 0.0010 & 0.1714 & 0.0036 & 0.3825 & 0.0069 \\\hline
\end{tabular}
\label{tab:scalability_D_clients}
\end{table}

\noindent
From Tables~\ref{tab:scalability_D_dim} and~\ref{tab:scalability_D_clients}, it is clear that as the observation dimension $D_m$ and the number of clients $M$ increase, the performance degrades. Scaling $D_m$ impacts the objective more: with $P_m$ fixed, the model must explain an ever larger output space with a low-dimensional latent state. This makes the observation map $C$ increasingly tall, amplifies estimation variance and sensor noise aggregation, and exposes unmodeled sensor-specific dynamics; consequently $L_s$ rises even though $\mathcal{D}$ remains small. Increasing $M$ introduces more inter-client couplings (off-diagonal $A_{mn},B_{mn}$ blocks) and consensus constraints to disentangle; the larger, more heterogeneous optimization also raises gradient variance and complicates stepsize selection, leading to a milder but noticeable growth in $L_s$ and $\mathcal{D}$. Overall, the disentanglement metric remains low across all settings ($\mathcal{D}\!\le\!7\times10^{-3}$), indicating that the method preserves separation even as scale grows. 

\paragraph{Transfer Function:}
Poles, zeros, and the spectrum of singular values for the transfer function of estimated and true system are given in Figure~\ref{fig:tf_plots}

\begin{figure}[t]
  \centering

  \begin{subfigure}[t]{0.32\textwidth}
    \centering
    \includegraphics[width=\linewidth]{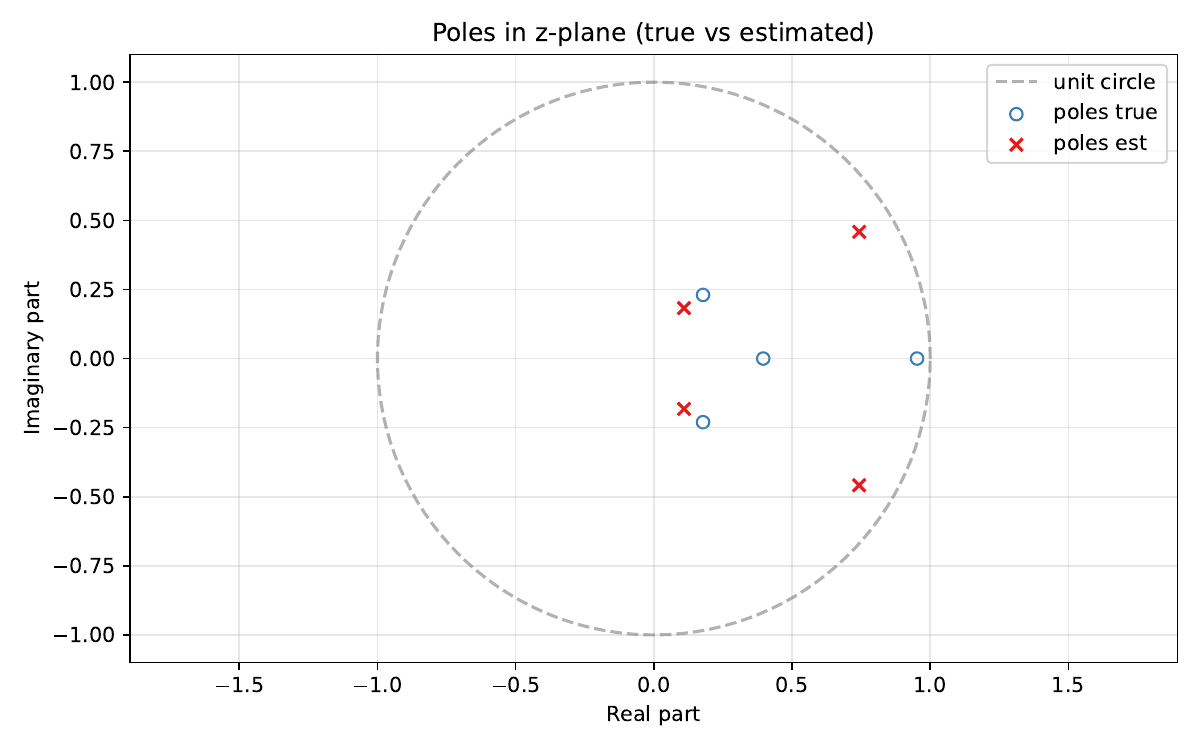}
    \caption{Tranfer Function - Poles}
    \label{fig:tf_poles}
  \end{subfigure}
  \hfill
  \begin{subfigure}[t]{0.32\textwidth}
    \centering
    \includegraphics[width=\linewidth]{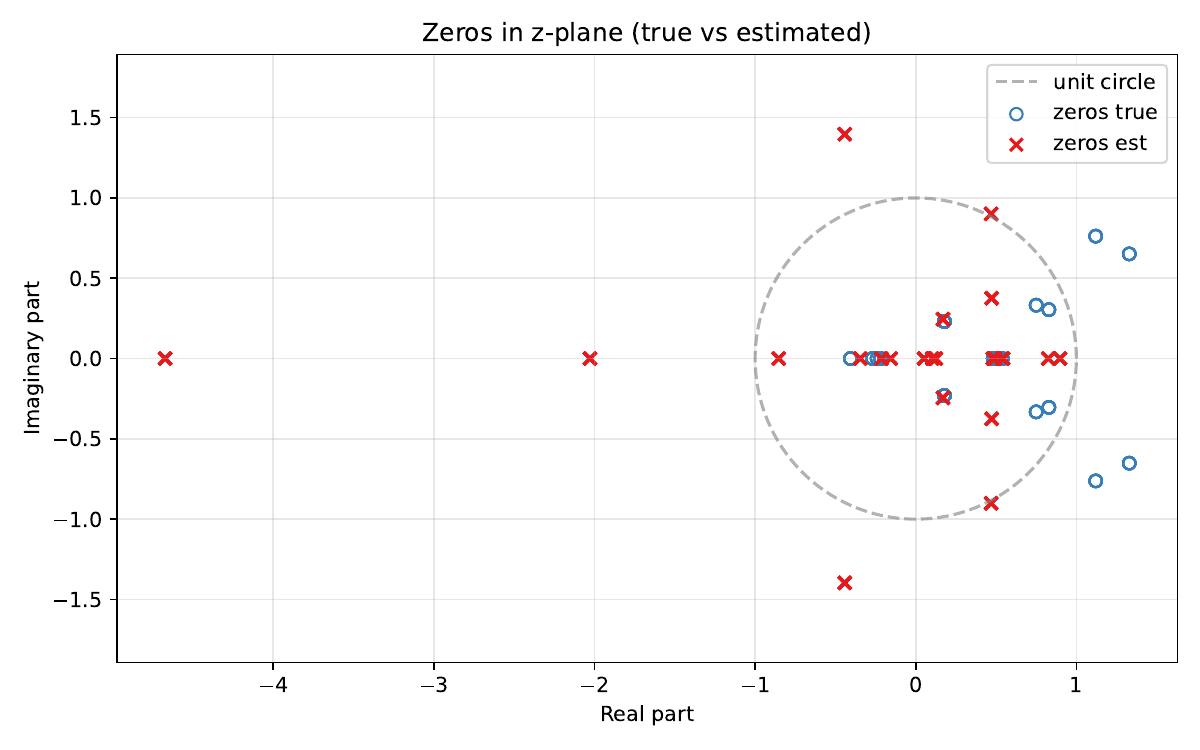}
    \caption{Transfer Function - Zeros}
    \label{fig:ft_zeros}
  \end{subfigure}
  \hfill
  \begin{subfigure}[t]{0.32\textwidth}
    \centering
    \includegraphics[width=\linewidth]{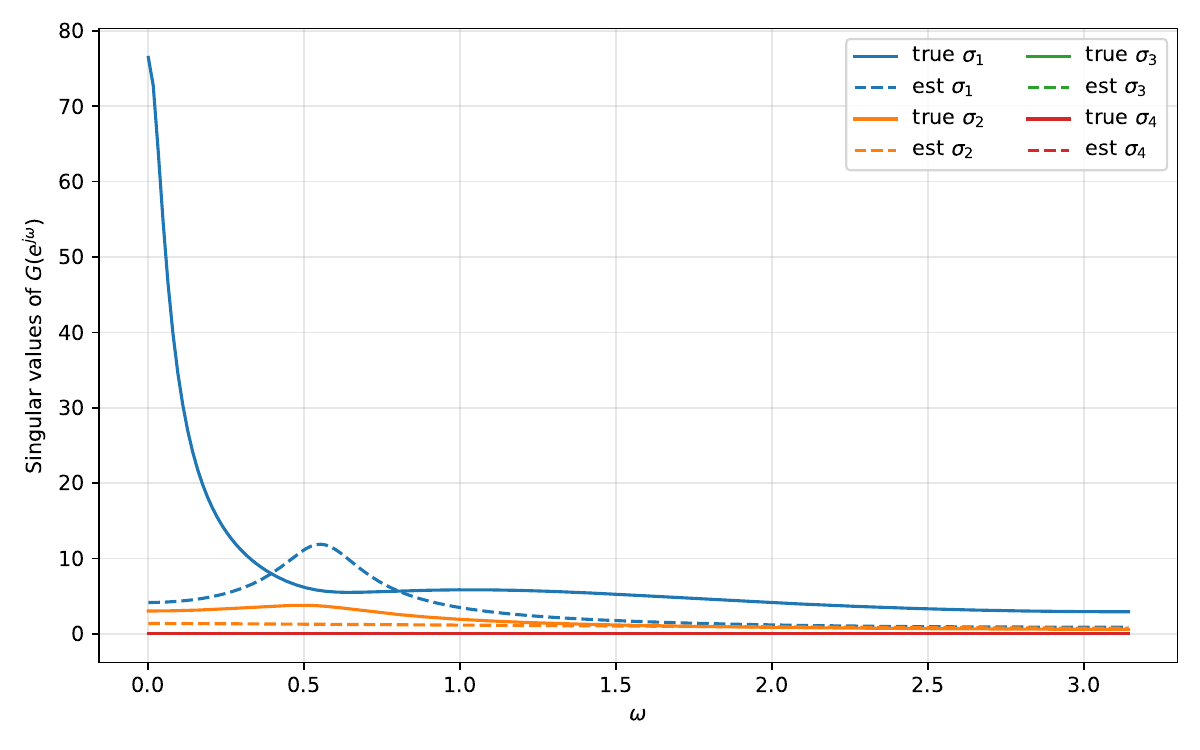}
    \caption{Tranfer Function - Singular Value ($\sigma$) spectrum}
    \label{fig:tf_sigma}
  \end{subfigure}
  \caption{Tranfer Function characteristics}
\label{fig:tf_plots}
\end{figure}

\subsection{Real-world Datasets}\label{appendix:realWorld}
\paragraph{HAI dataset (v21.03) \cite{hai_2103}:}
The HAI industrial control dataset logs a multi–unit process with four coupled subsystems (clients), which we refer to as \textbf{P1–P4}. Tags are prefixed by the process identifier (e.g., \texttt{P1\_}, \texttt{P2\_}, \texttt{P3\_}, \texttt{P4\_}). Within each process, \emph{sensor} channels (e.g., level/flow/pressure/temperature transmitters) are continuous, while \emph{actuator/command} channels (e.g., valve commands and pump toggles) are predominantly discrete. In our federated setting we map \emph{client} $k$ to \emph{process} $P_k$ simply by this prefix rule. Inputs $U$ are taken from actuator/command tags for that process (e.g., FCV/LCV/PCV/PP/\texttt{OnOff}/\texttt{AutoGO}/\texttt{RTR} families), and outputs $Y$ are the continuous transmitter tags (e.g., LIT/FIT/PIT/TIT/SIT/V*T* families), as specified in the HAI tag manual.

\paragraph{Preprocessing}
Starting from the vendor CSV logs, we (i) time-align and parse headers; (ii) remove \emph{constant} channels over the full run and re-check after truncation; (iii) prune discrete-like signals before modeling (drop boolean-like and quasi-discrete channels with low transition counts or extreme dominance); (iv) split variables into $U$ and $Y$ by tag family and process prefix; (v) standardize each column by $z$-scoring and save the means/stds; and (vi) fit a centralized LTI model with unconstrained $A,B$ while enforcing block-diagonal $C,Q,R$ by process. The identified states are then partitioned to processes by maximizing energy of the corresponding $C$-columns on each process’ outputs, with a fix-up step to guarantee at least one state per active process.

Figure~\ref{fig:hai_dataset} shows a consistent convergence story across all panels. The server loss $L_s^k$ (a) decays rapidly in the first few dozen iterations and then tapers off smoothly, indicating a stable approach to a low-loss regime without oscillations. The disentanglement constraint $\mathcal D$ (b) follows the same pattern, decreasing monotonically and confirming that the consensus/coupling conditions are being enforced as optimization proceeds. Overall, the results show fast, stable convergence on HAI with uniformly small constraints and residuals at the end, despite mild per-client differences in the transient


\begin{figure*}[hbt!]
\centering

\begin{subfigure}{0.32\textwidth}
  \includegraphics[width=\linewidth]{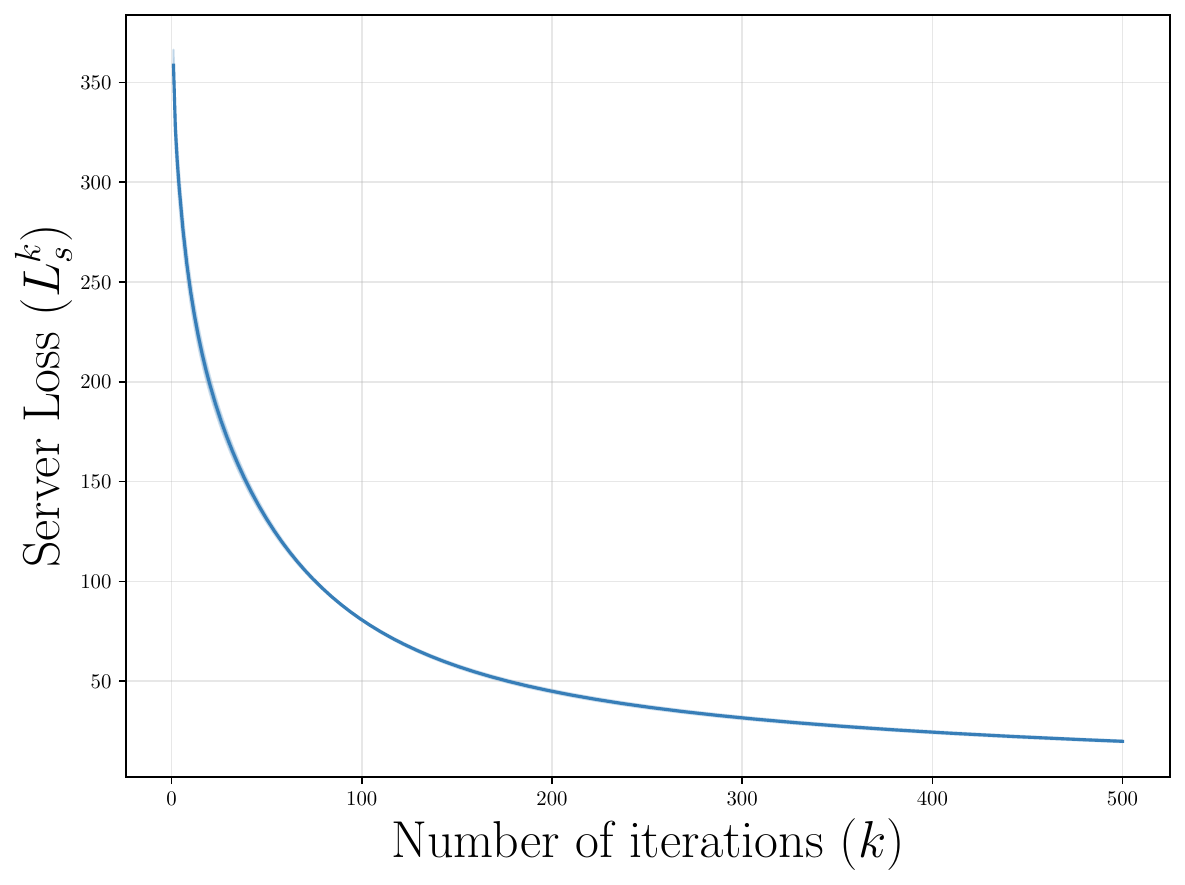}
  \caption{Server Loss}\label{fig:a}
\end{subfigure}\hfill
\begin{subfigure}{0.32\textwidth}
  \includegraphics[width=\linewidth]{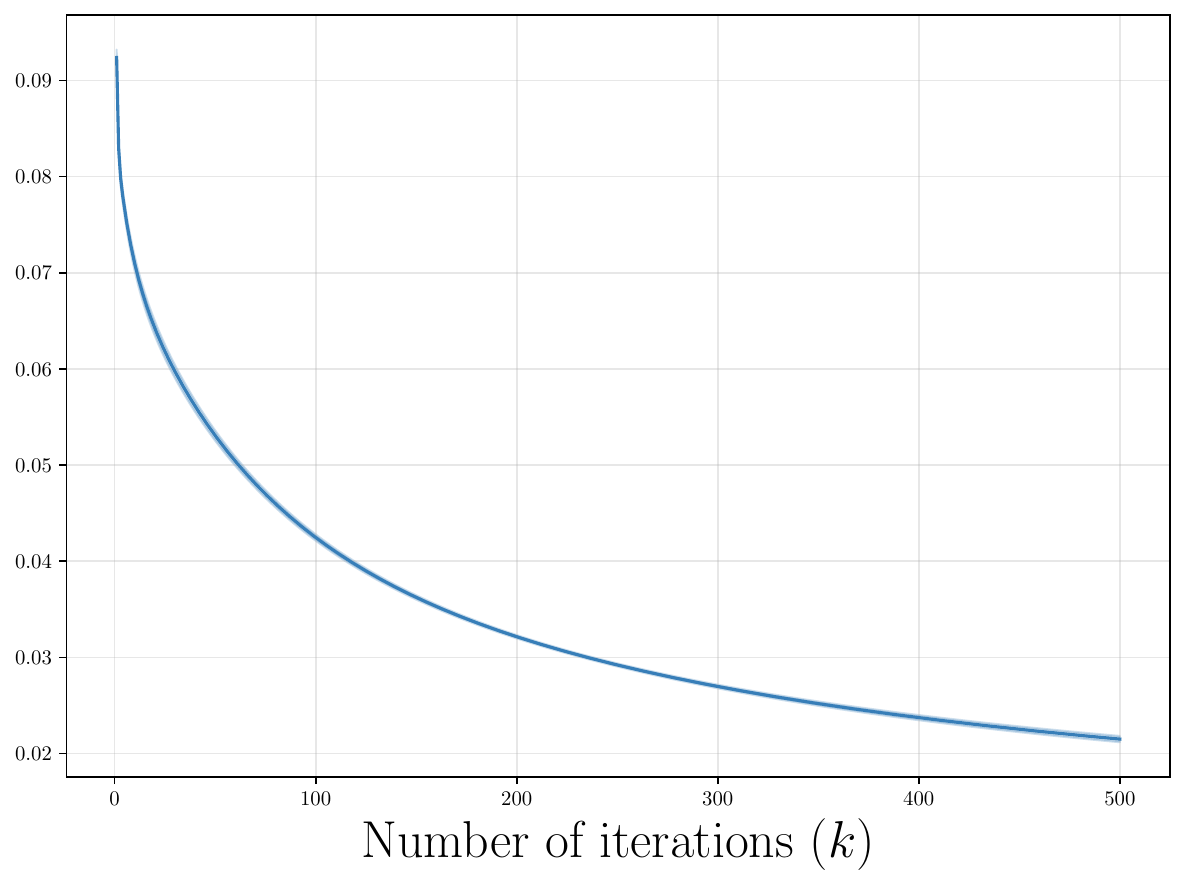}
  \caption{$\mathcal{D}$}\label{fig:b}
\end{subfigure}\hfill
\begin{subfigure}{0.32\textwidth}
  \includegraphics[width=\linewidth]{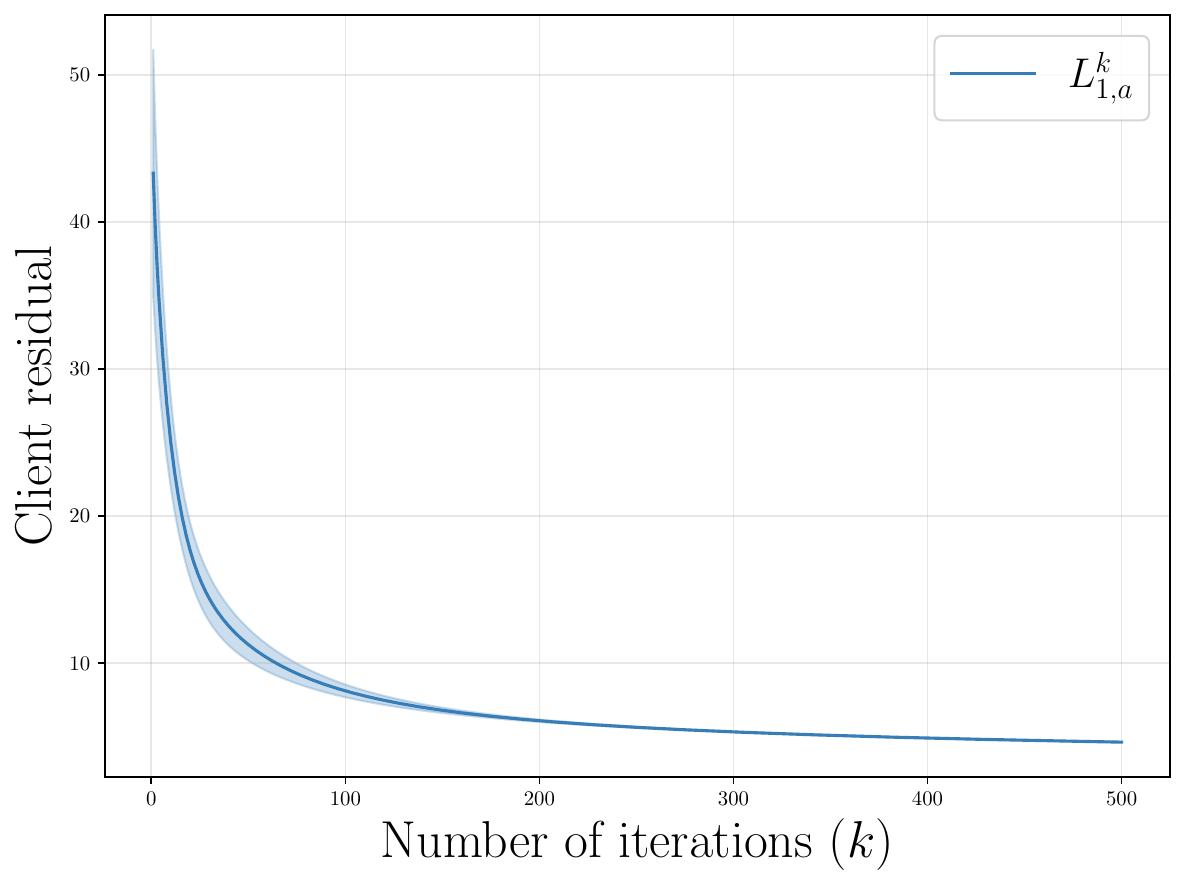}
  \caption{Client 1: $L_{1,a}^k$}\label{fig:c}
\end{subfigure}

\vspace{0.6em}

\begin{subfigure}{0.32\textwidth}
  \includegraphics[width=\linewidth]{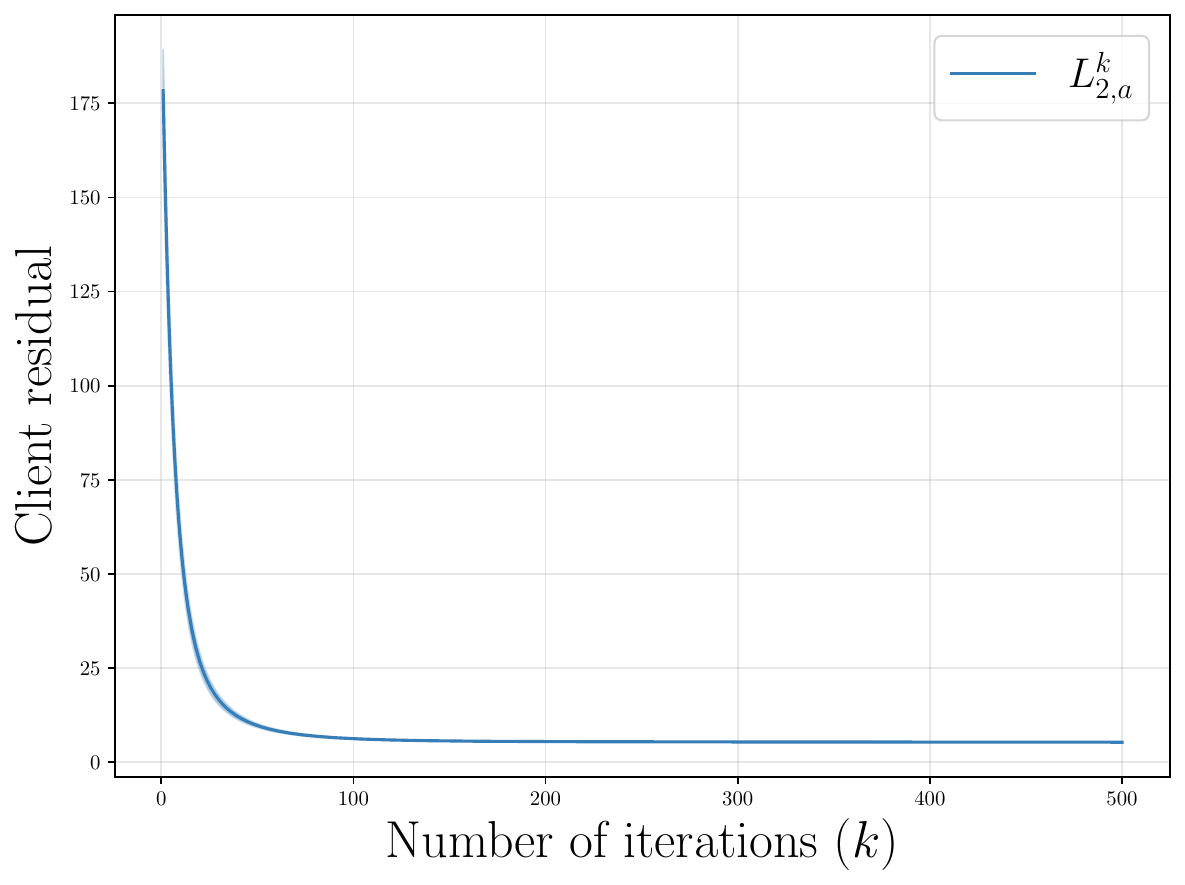}
  \caption{Client 2: $L_{2,a}^k$}\label{fig:d}
\end{subfigure}\hfill
\begin{subfigure}{0.32\textwidth}
  \includegraphics[width=\linewidth]{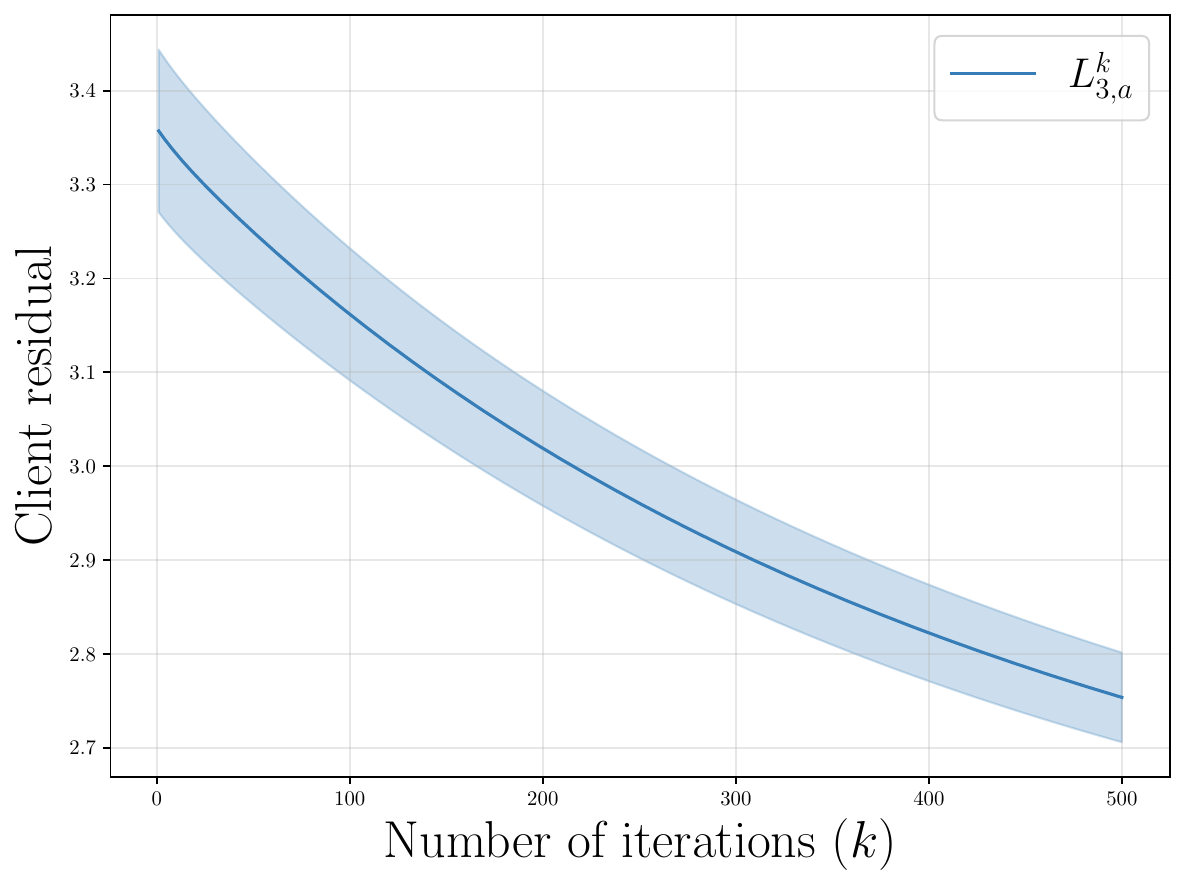}
  \caption{Client 3: $L_{3,a}^k$}\label{fig:e}
\end{subfigure}\hfill
\begin{subfigure}{0.32\textwidth}
  \includegraphics[width=\linewidth]{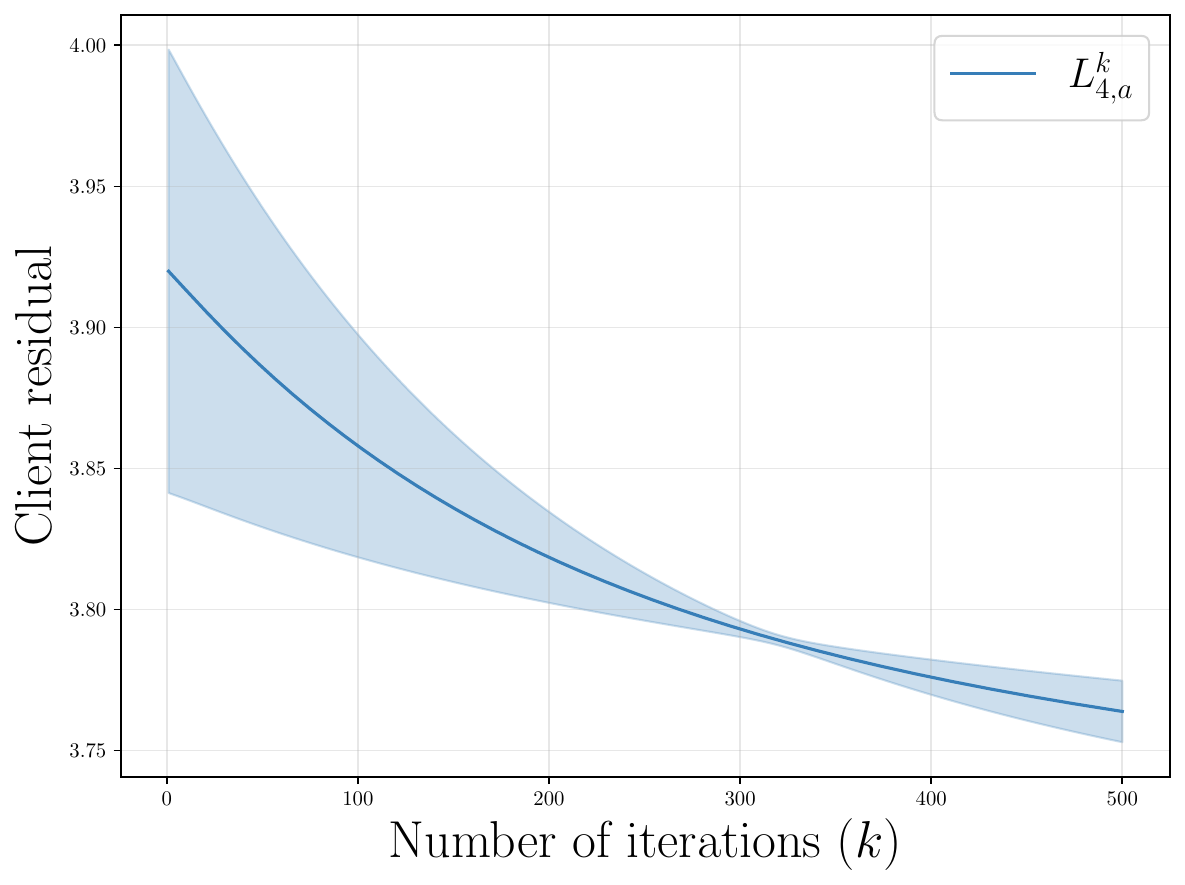}
  \caption{Client 4: $L_{4,a}^k$}\label{fig:f}
\end{subfigure}

\vspace{0.6em}

\begin{subfigure}{0.32\textwidth}
  \includegraphics[width=\linewidth]{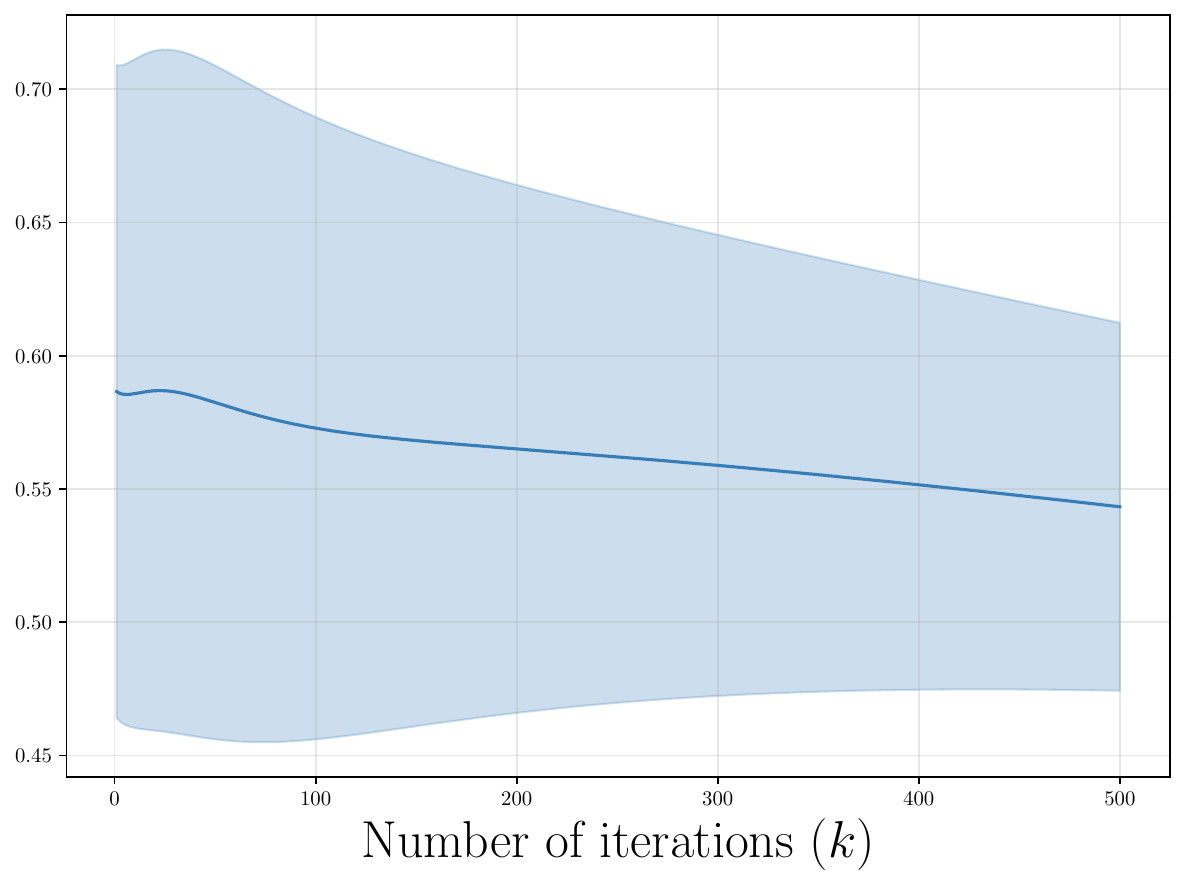}
  \caption{Client 1: $\delta d_1$}\label{fig:g}
\end{subfigure}\hfill
\begin{subfigure}{0.32\textwidth}
  \includegraphics[width=\linewidth]{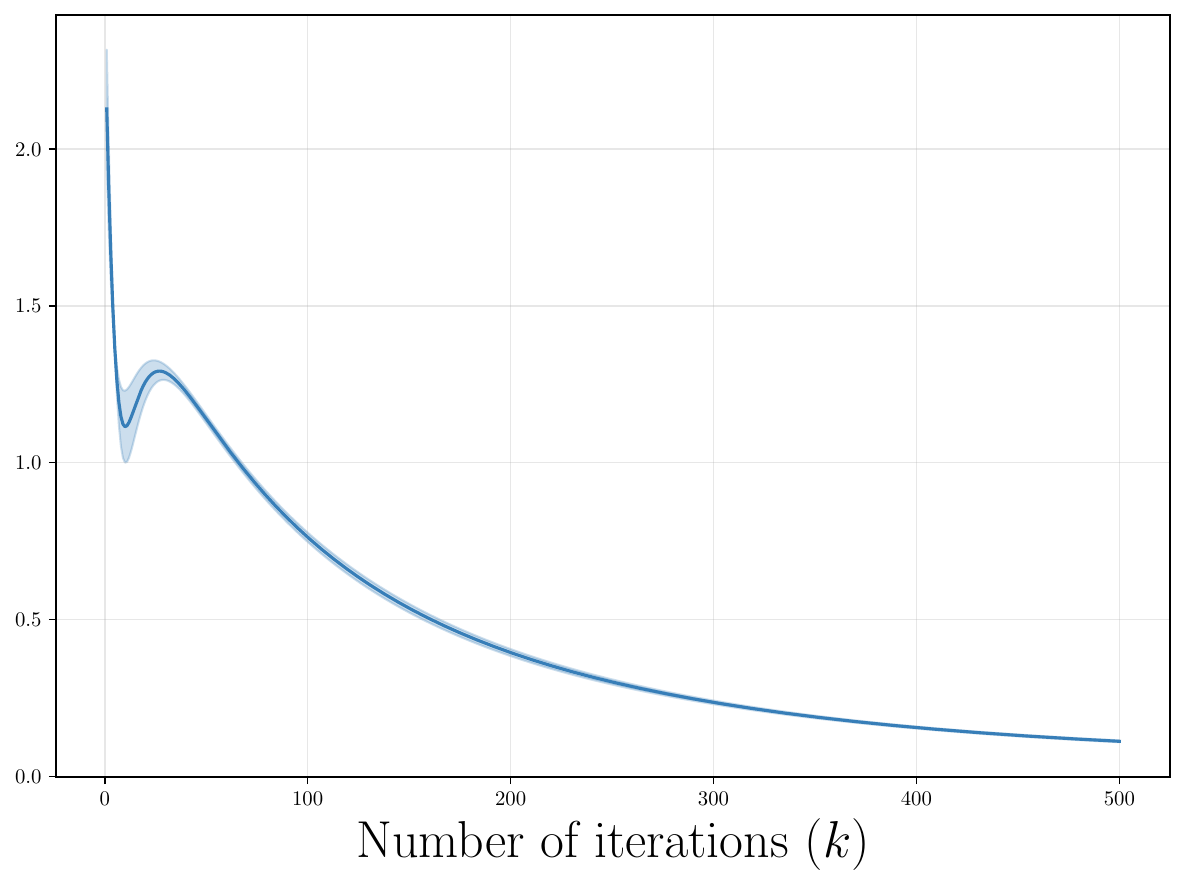}
  \caption{Client 2: $\delta d_2$}\label{fig:h}
\end{subfigure}\hfill
\begin{subfigure}{0.32\textwidth}
  \includegraphics[width=\linewidth]{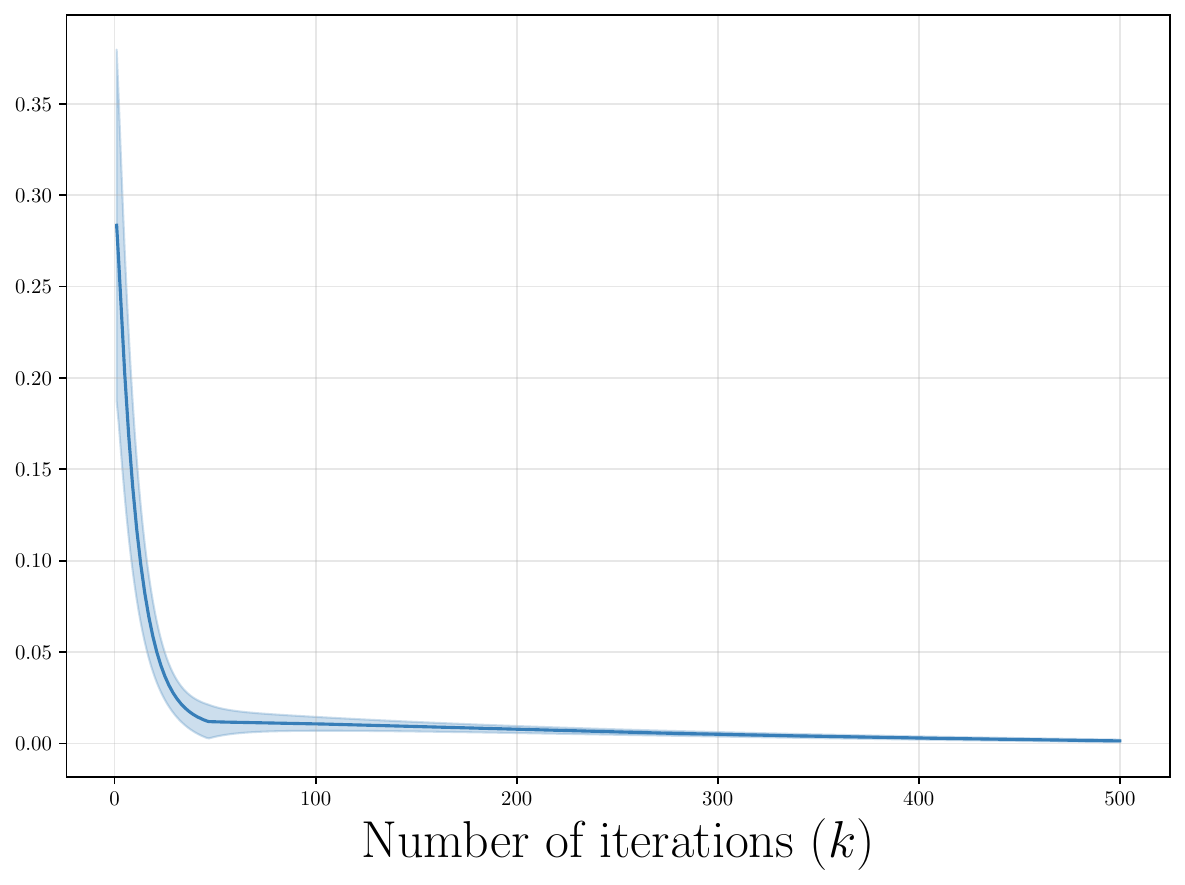}
  \caption{Client 3: $\delta d_3$}\label{fig:i}
\end{subfigure}

\vspace{0.6em}

\begin{subfigure}{0.32\textwidth}
  \includegraphics[width=\linewidth]{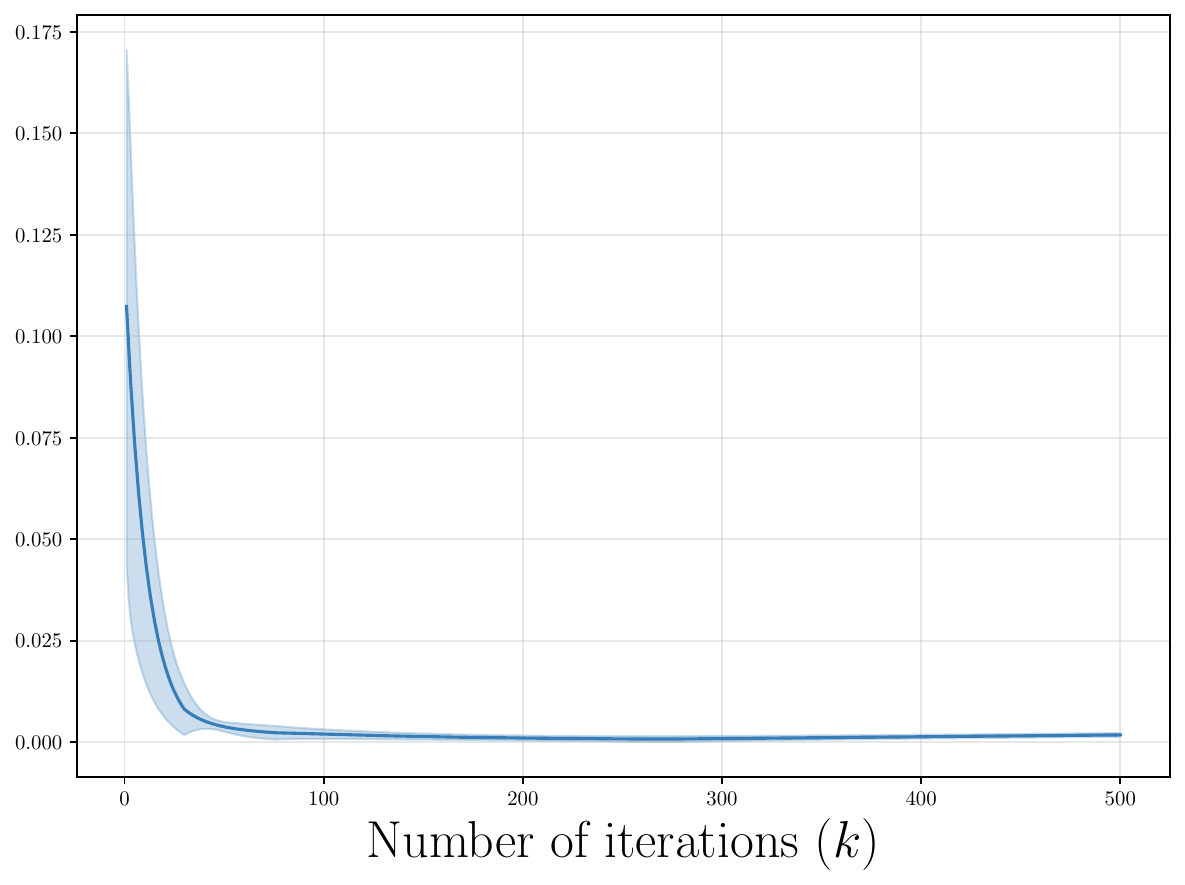}
  \caption{Client 4: $\delta d_4$}\label{fig:j}
\end{subfigure}
\hfill
\begin{minipage}{0.32\textwidth}\hspace{0pt}\end{minipage}
\hfill
\begin{minipage}{0.32\textwidth}\hspace{0pt}\end{minipage}

\caption{HAI dataset}
\label{fig:hai_dataset}
\end{figure*}

\paragraph{SWaT Dataset \cite{swat_dataset}}. The Secure Water Treatment (SWaT) plant is a six-stage potable water facility with tightly controlled pumping, filtration, and disinfection units. Tags follow the instrumentation manual convention: process equipment identifiers (e.g., MV101, P201, UV401) embed the stage in the leading digit. We map federated clients to stages S1–S6 using this digit. Manipulated variables (inputs) come from actuator and command tags—motorised valves (MV***), pumps (P***), UV toggles, and setpoint/command channels. Measurements (outputs) are the continuous transmitters: flow (FIT***), level (LIT***), differential pressure (DPIT***), pressure (PIT***), analogue analysers, etc. Unlike HAI, actuators are almost fully discrete, so we treat them as binary bits (after collapsing occasional tri-state encodings) and reserve all real-valued telemetry for outputs.

\paragraph{Preprocessing}: (i) promote the first row to the header, relabel columns, and align timestamps; (ii) drop constant channels globally and again after truncating to the first 30000 samples; (iii) detect and remove quasi-discrete or boolean-like signals on the measurement side; (iv) classify each tag into inputs or outputs using stage-aware regex heuristics and split per stage; (v) z-score every column, saving the means/standard deviations; and (vi) add small Gaussian noise to the (otherwise discrete) input channels so the centralized LTI identification has non-degenerate excitation. We then identify a centralized LTI model with unconstrained (A, B) but enforce block-diagonal (C, Q, R) by stage. States are assigned to  stages (clients) via the energy of their (C)-columns on each stage’s outputs, with a fix-up to ensure every active stage owns at least one state before exporting per-stage component folders.

The results from SWaT dataset are plotted in Figure \ref{fig:swat_dataset}. Similar to the results from the HAI dataset, Figure~\ref{fig:swat_dataset} illustrates the convergence behavior observed on the SWaT dataset. The server loss $L_s^k$ (a) exhibits a sharp initial decline followed by a smooth saturation phase, indicating a stable approach toward a low-loss regime with no noticeable oscillations. The disentanglement constraint $\mathcal D$ (b) follows a similar monotonic decreasing trend, demonstrating that the consensus and coupling constraints are effectively enforced throughout optimization. Due to the large scale of the SWaT dataset and the constraints of available computational resources, Monte Carlo simulations were not performed; consequently, the reported results correspond to a single representative sample path.

\begin{figure*}[t]
\centering
\captionsetup[subfigure]{font=footnotesize,labelfont=bf,justification=centering}

\begin{adjustbox}{}
\begin{minipage}{\textwidth}

\begin{subfigure}{0.32\textwidth}
  \includegraphics[width=\linewidth]{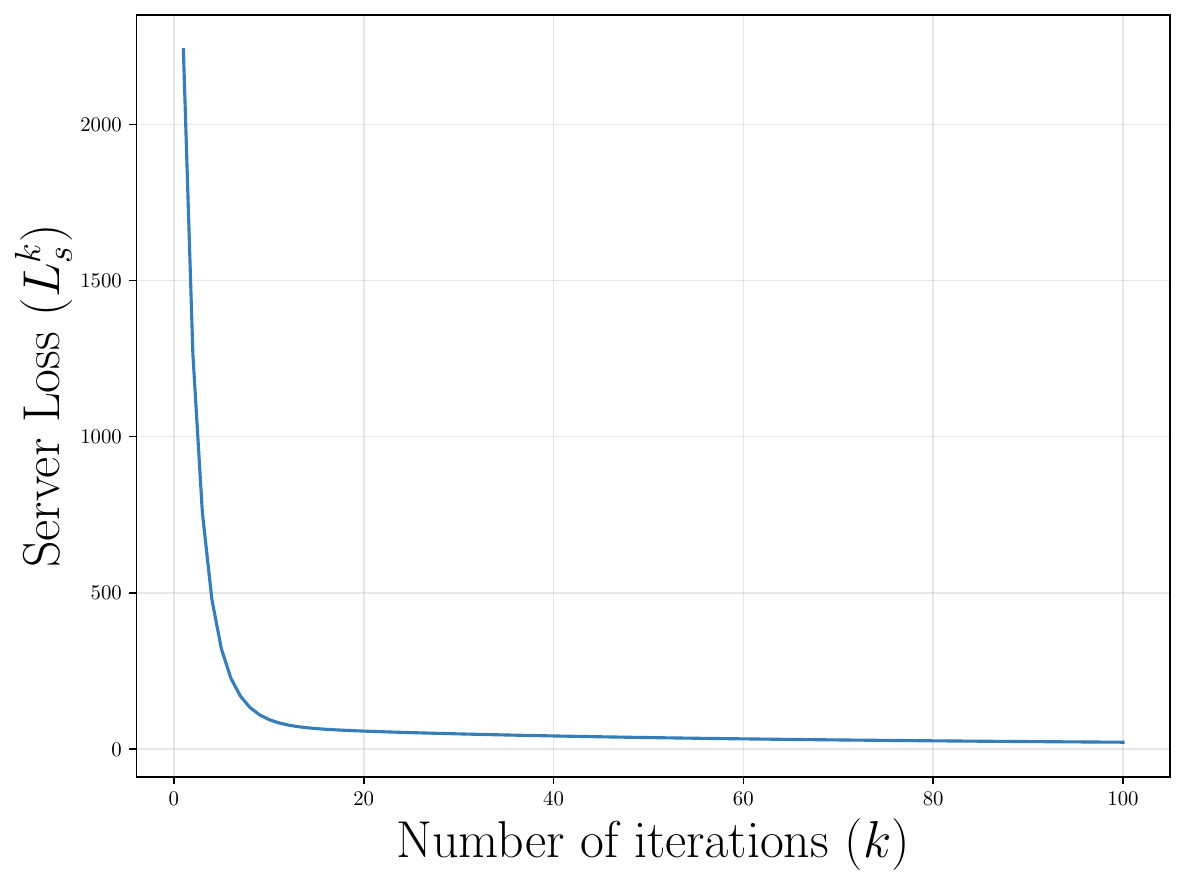}
  \caption{Server Loss}\label{fig:a}
\end{subfigure}\hfill
\begin{subfigure}{0.32\textwidth}
  \includegraphics[width=\linewidth]{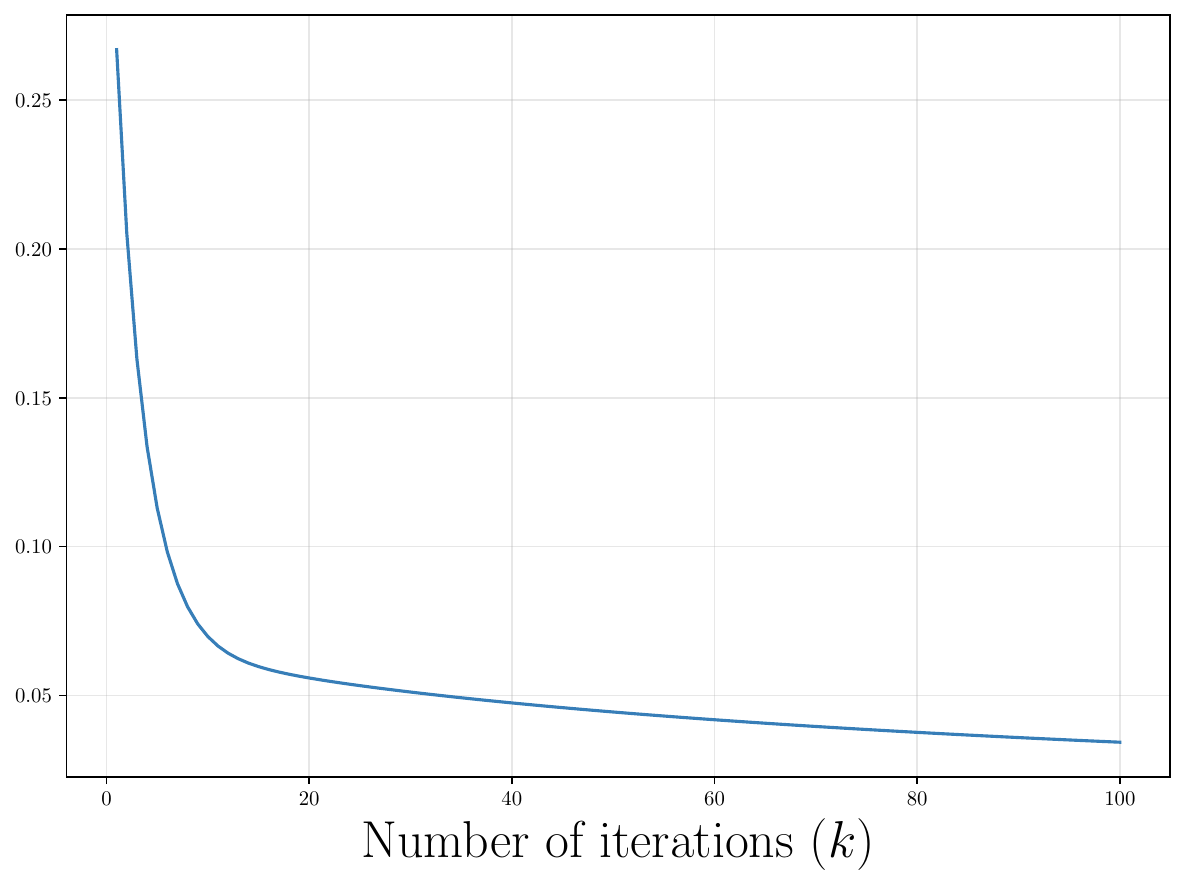}
  \caption{$\mathcal{D}$}\label{fig:b}
\end{subfigure}\hfill
\begin{subfigure}{0.32\textwidth}
  \includegraphics[width=\linewidth]{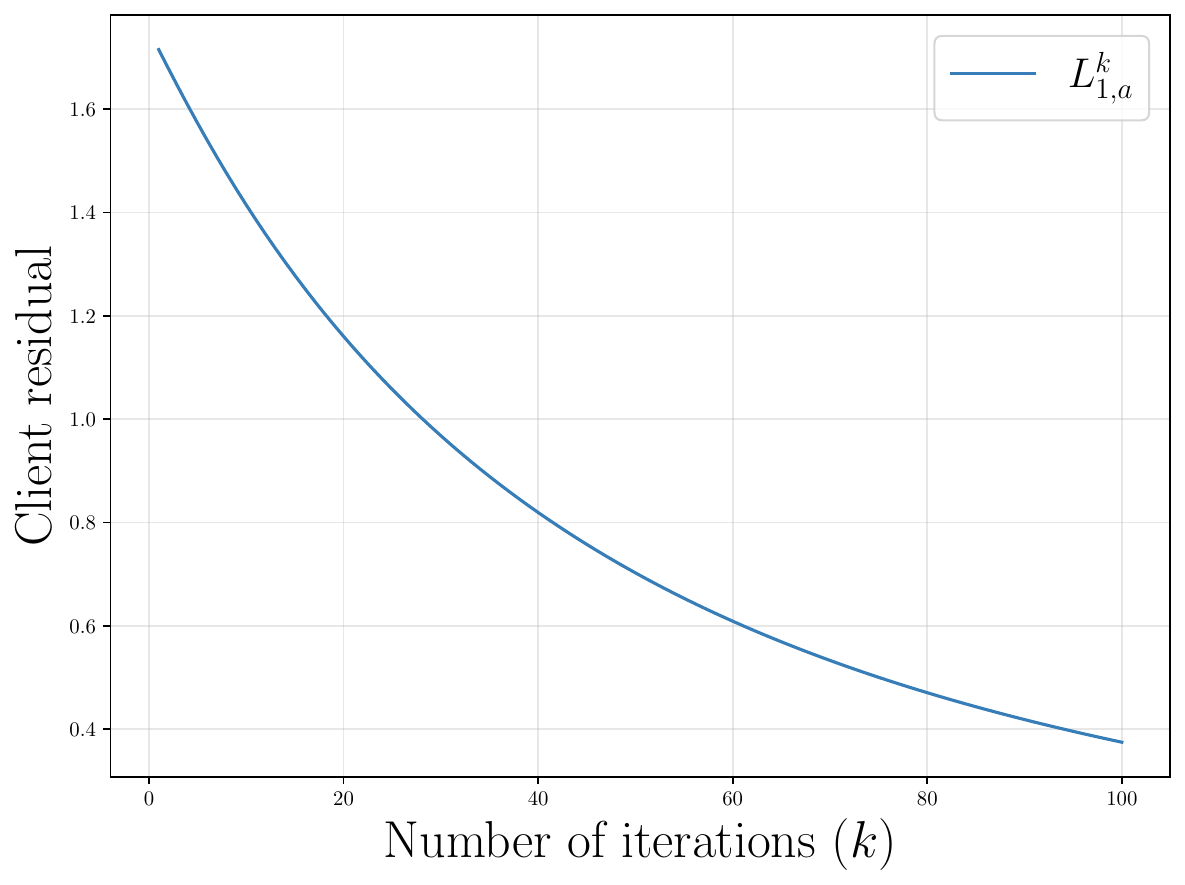}
  \caption{Client 1: $L_{1,a}^k$}\label{fig:c}
\end{subfigure}

\begin{subfigure}{0.32\textwidth}
  \includegraphics[width=\linewidth]{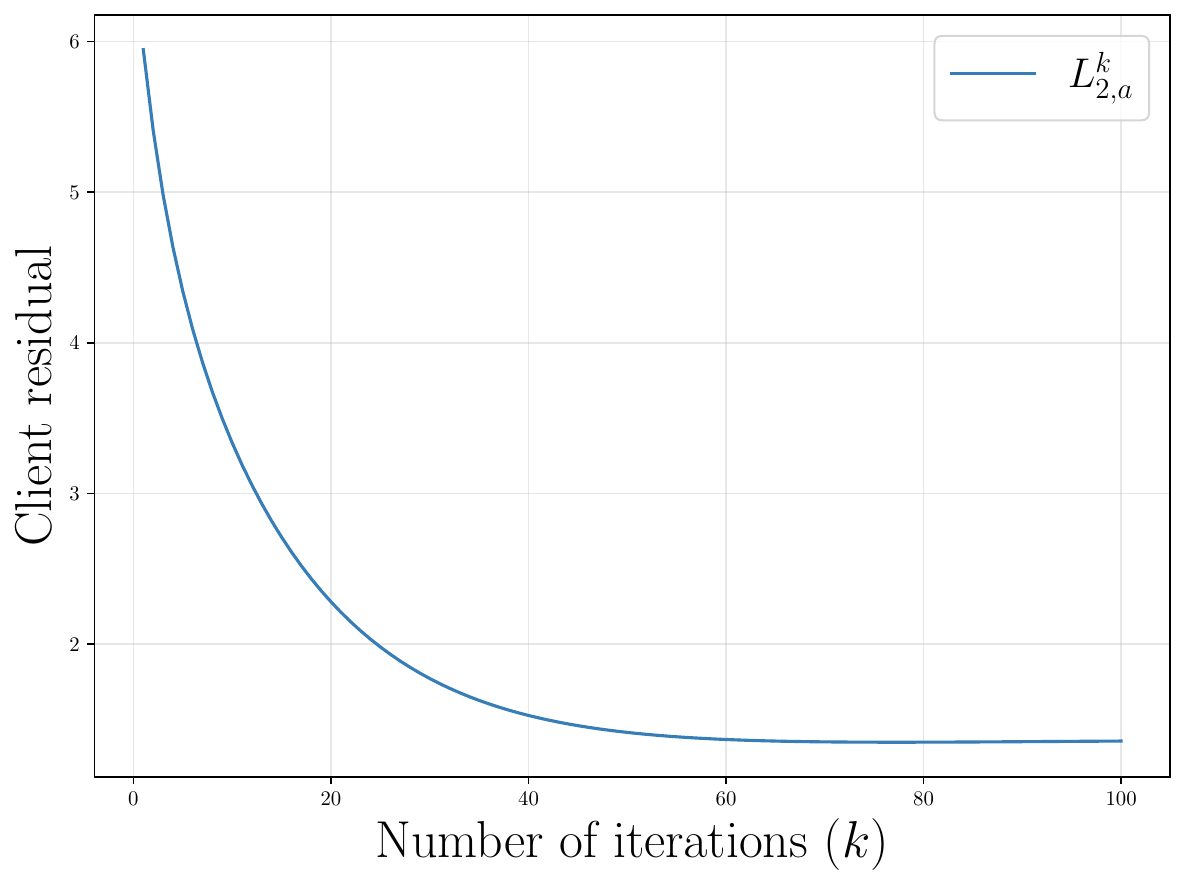}
  \caption{Client 2: $L_{2,a}^k$}\label{fig:d}
\end{subfigure}\hfill
\begin{subfigure}{0.32\textwidth}
  \includegraphics[width=\linewidth]{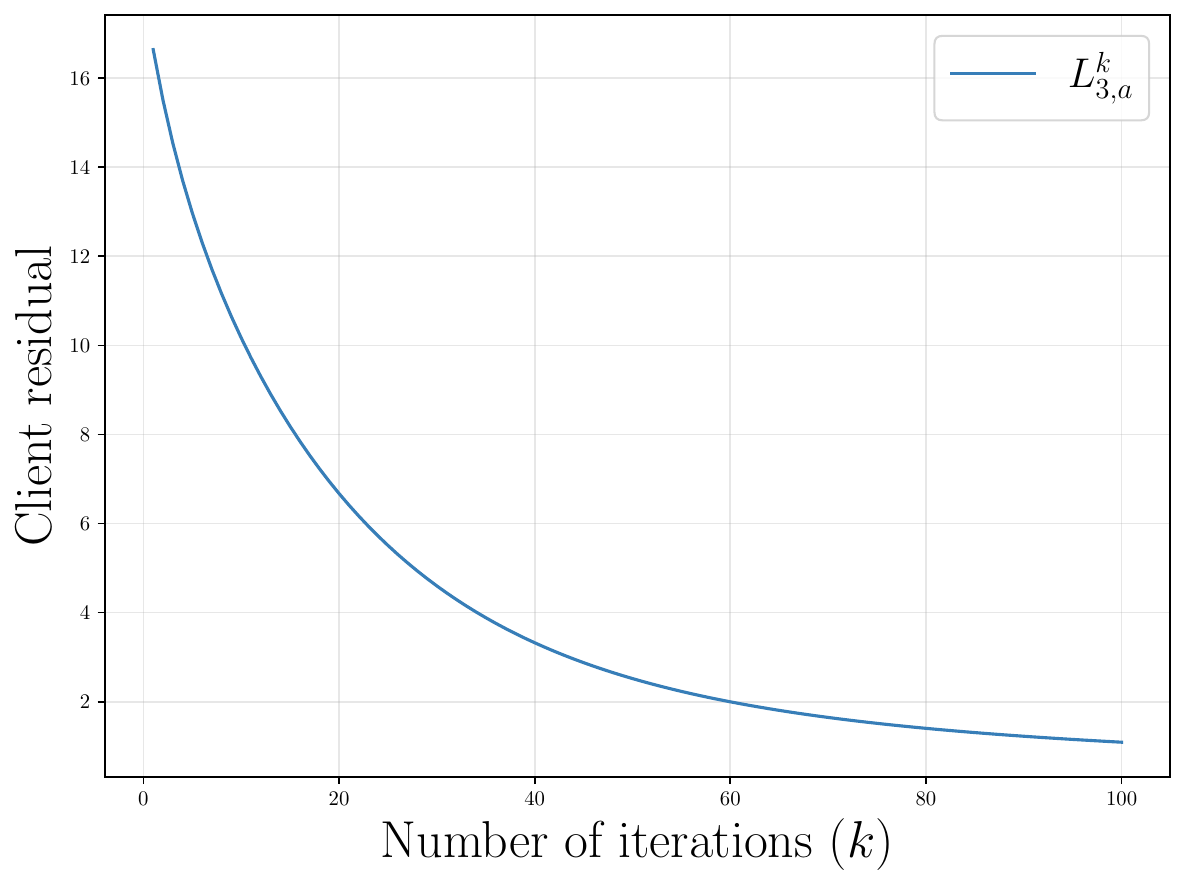}
  \caption{Client 3: $L_{3,a}^k$}\label{fig:e}
\end{subfigure}\hfill
\begin{subfigure}{0.32\textwidth}
  \includegraphics[width=\linewidth]{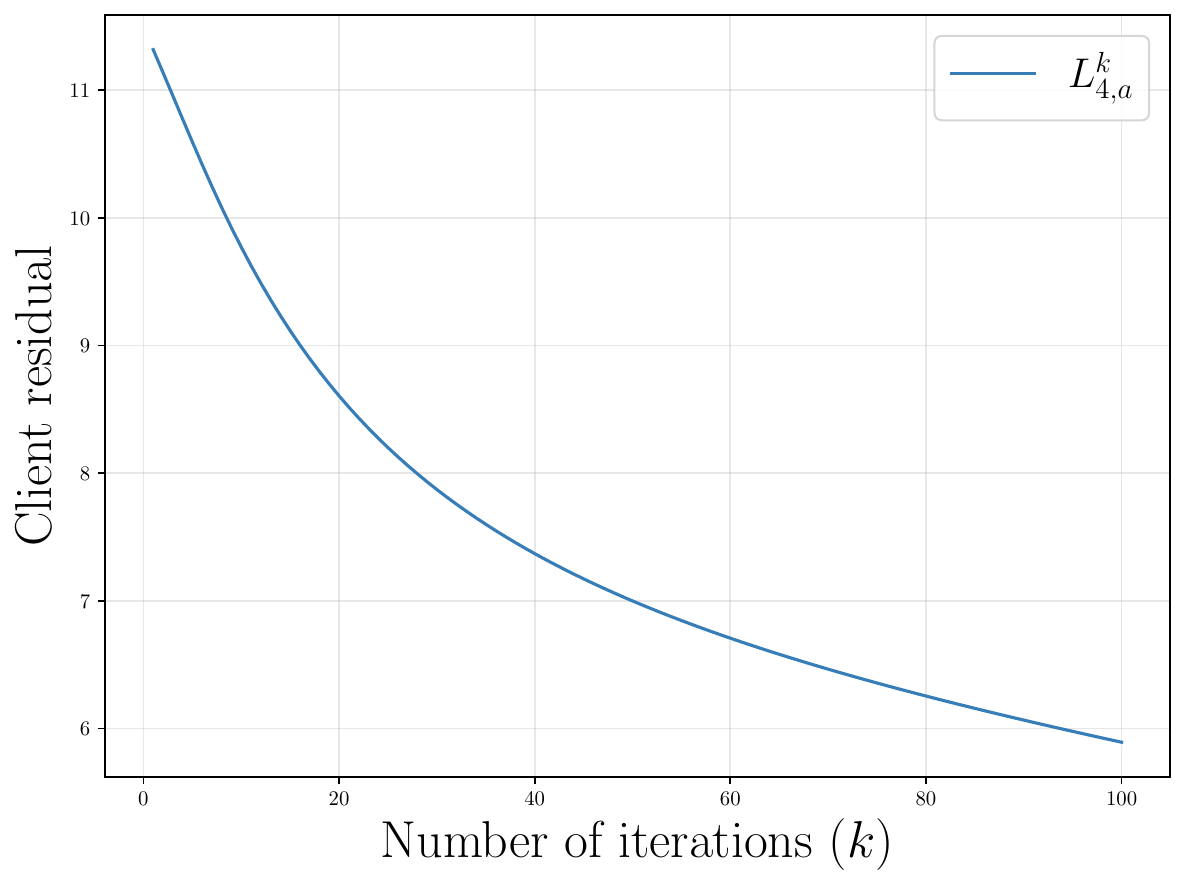}
  \caption{Client 4: $L_{4,a}^k$}\label{fig:f}
\end{subfigure}

\begin{subfigure}{0.32\textwidth}
  \includegraphics[width=\linewidth]{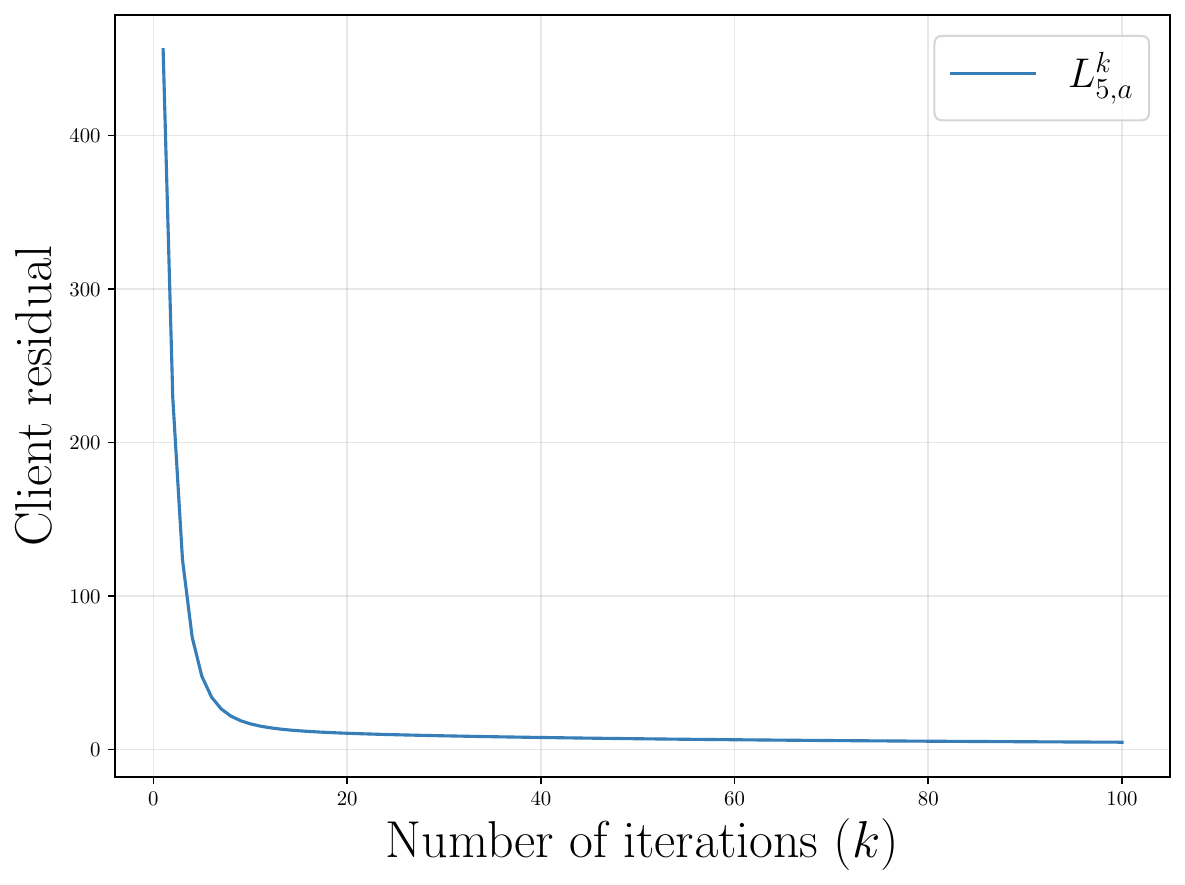}
  \caption{Client 5: $L_{5,a}^k$}\label{fig:k}
\end{subfigure}\hfill
\begin{subfigure}{0.32\textwidth}
  \includegraphics[width=\linewidth]{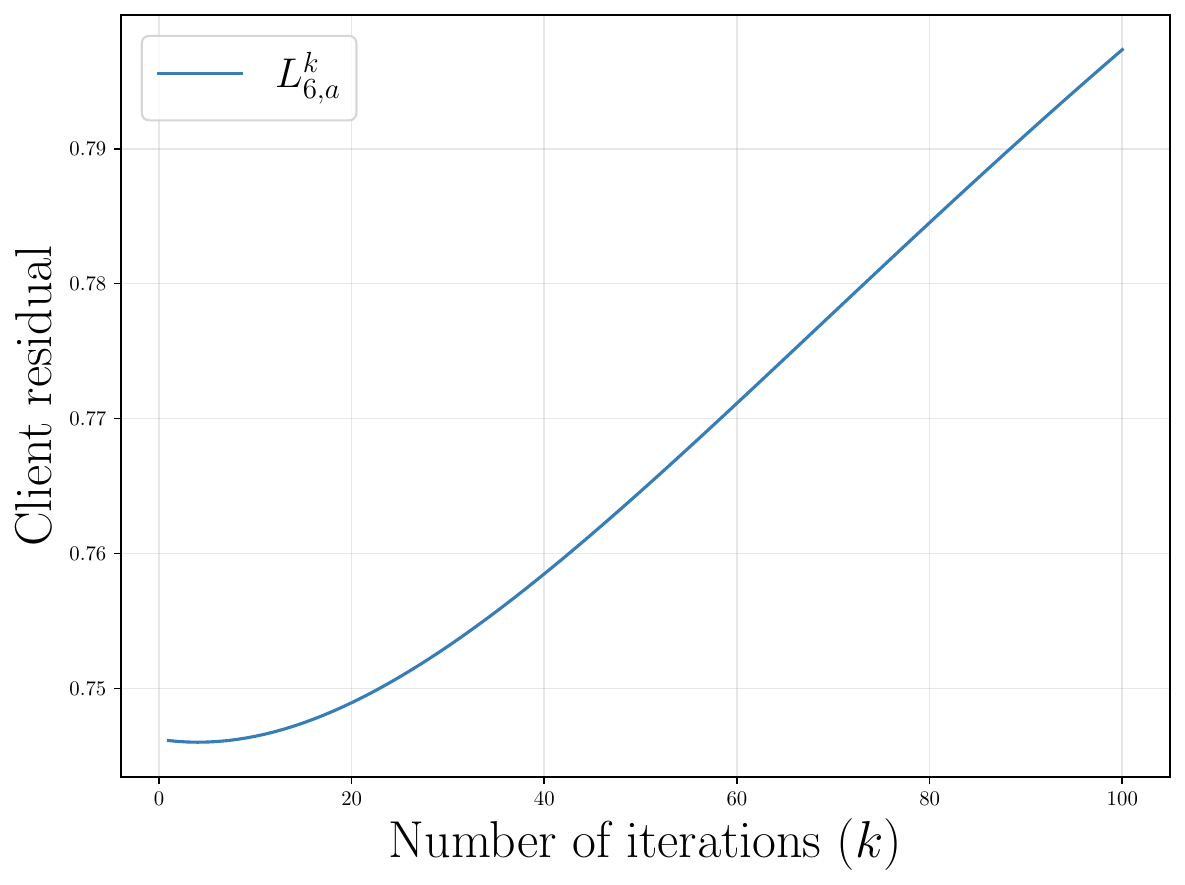}
  \caption{Client 6: $L_{6,a}^k$}\label{fig:l}
\end{subfigure}\hfill
\begin{minipage}{0.32\textwidth}\hspace{0pt}\end{minipage}

\begin{subfigure}{0.32\textwidth}
  \includegraphics[width=\linewidth]{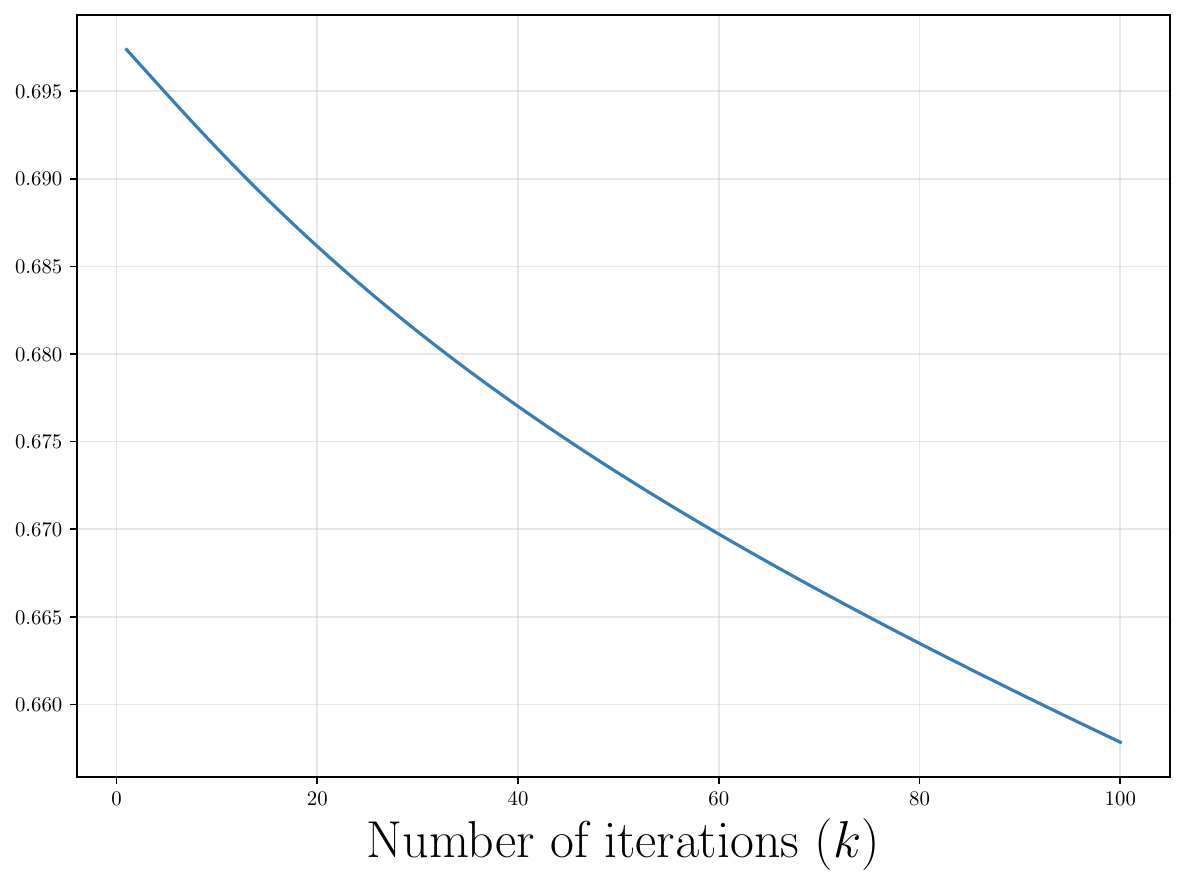}
  \caption{Client 1: $\delta d_1$}\label{fig:g}
\end{subfigure}\hfill
\begin{subfigure}{0.32\textwidth}
  \includegraphics[width=\linewidth]{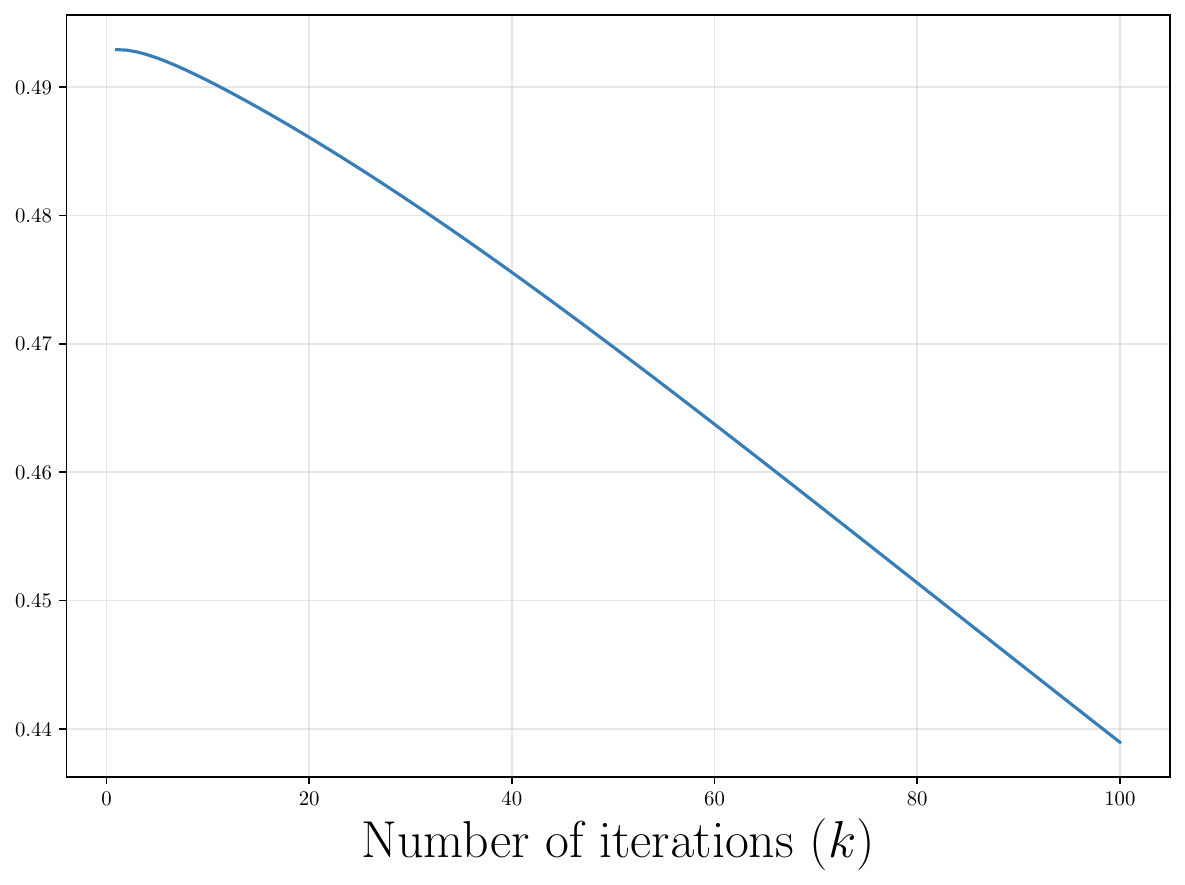}
  \caption{Client 2: $\delta d_2$}\label{fig:h}
\end{subfigure}\hfill
\begin{subfigure}{0.32\textwidth}
  \includegraphics[width=\linewidth]{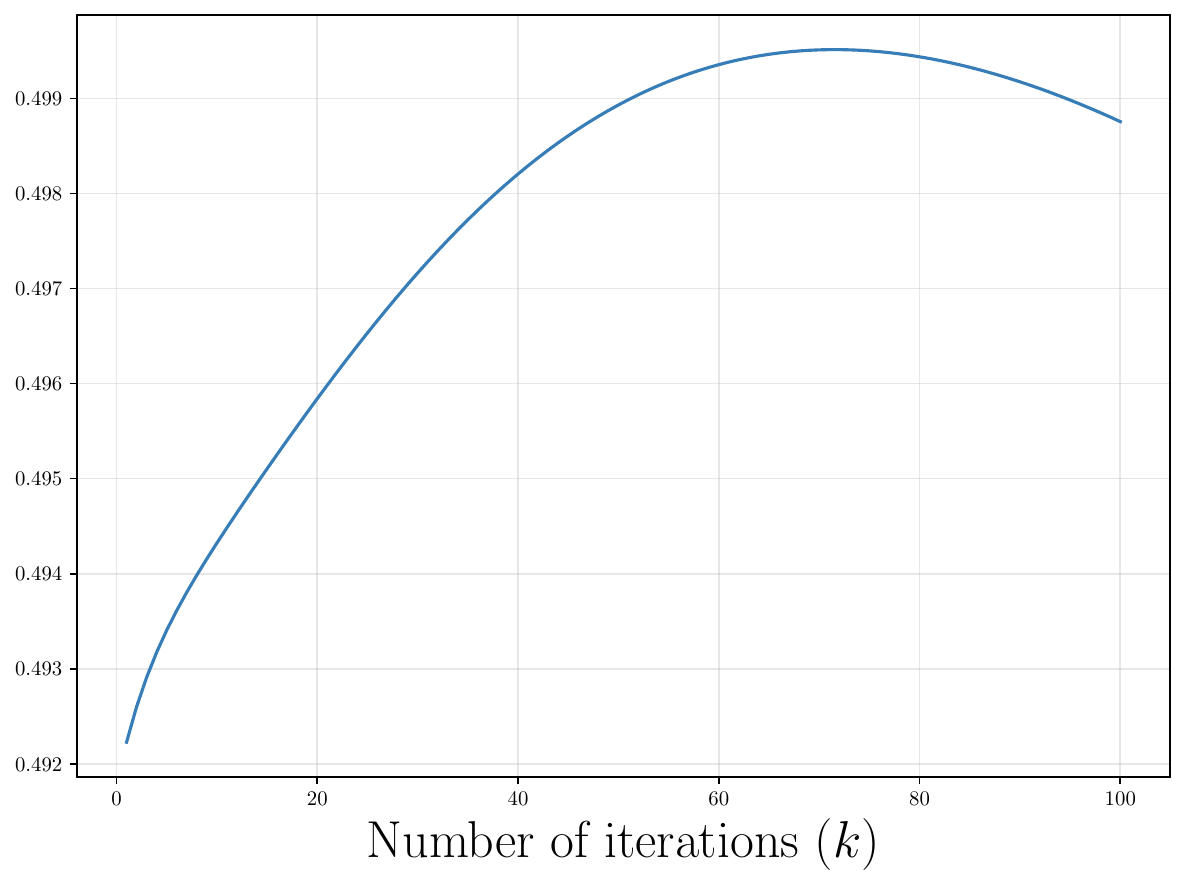}
  \caption{Client 3: $\delta d_3$}\label{fig:i}
\end{subfigure}

\begin{subfigure}{0.32\textwidth}
  \includegraphics[width=\linewidth]{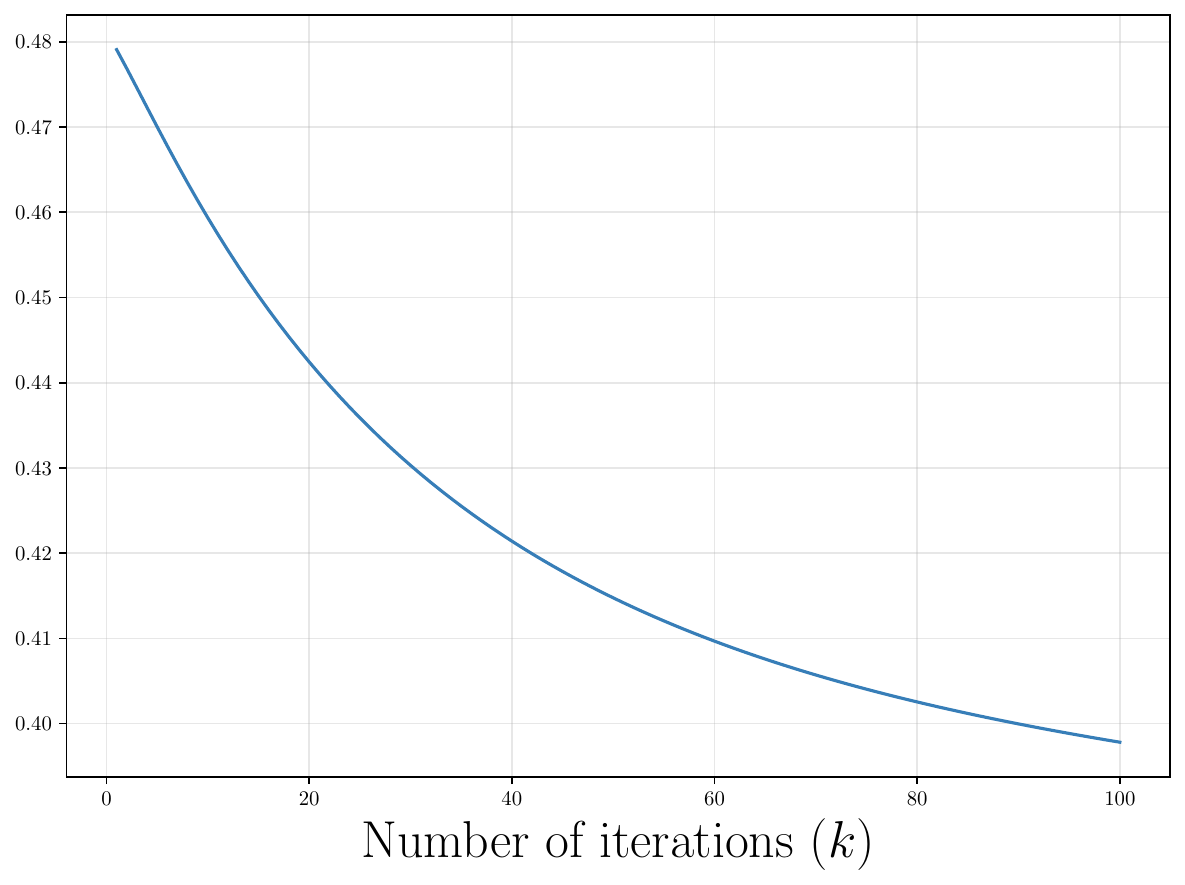}
  \caption{Client 4: $\delta d_4$}\label{fig:j}
\end{subfigure}\hfill
\begin{subfigure}{0.32\textwidth}
  \includegraphics[width=\linewidth]{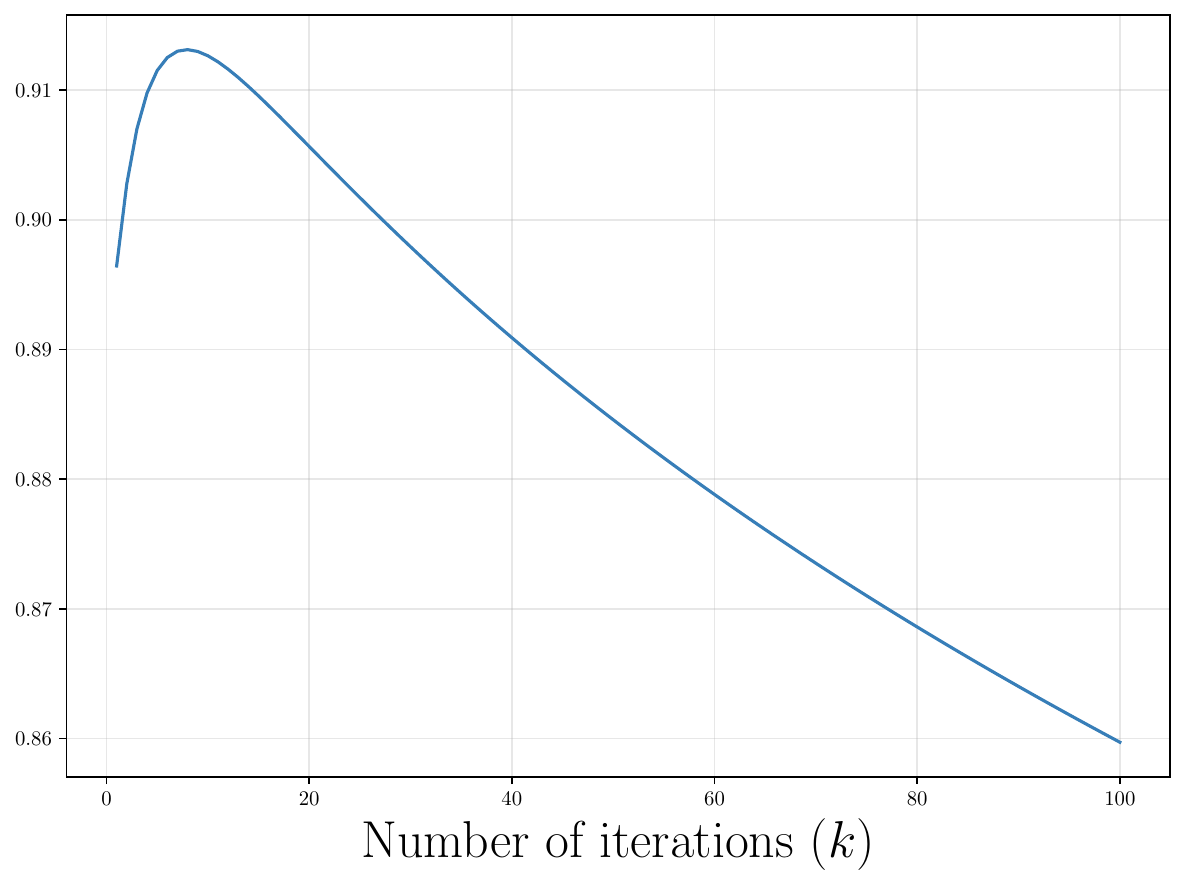}
  \caption{Client 5: $\delta d_5$}\label{fig:m}
\end{subfigure}\hfill
\begin{subfigure}{0.32\textwidth}
  \includegraphics[width=\linewidth]{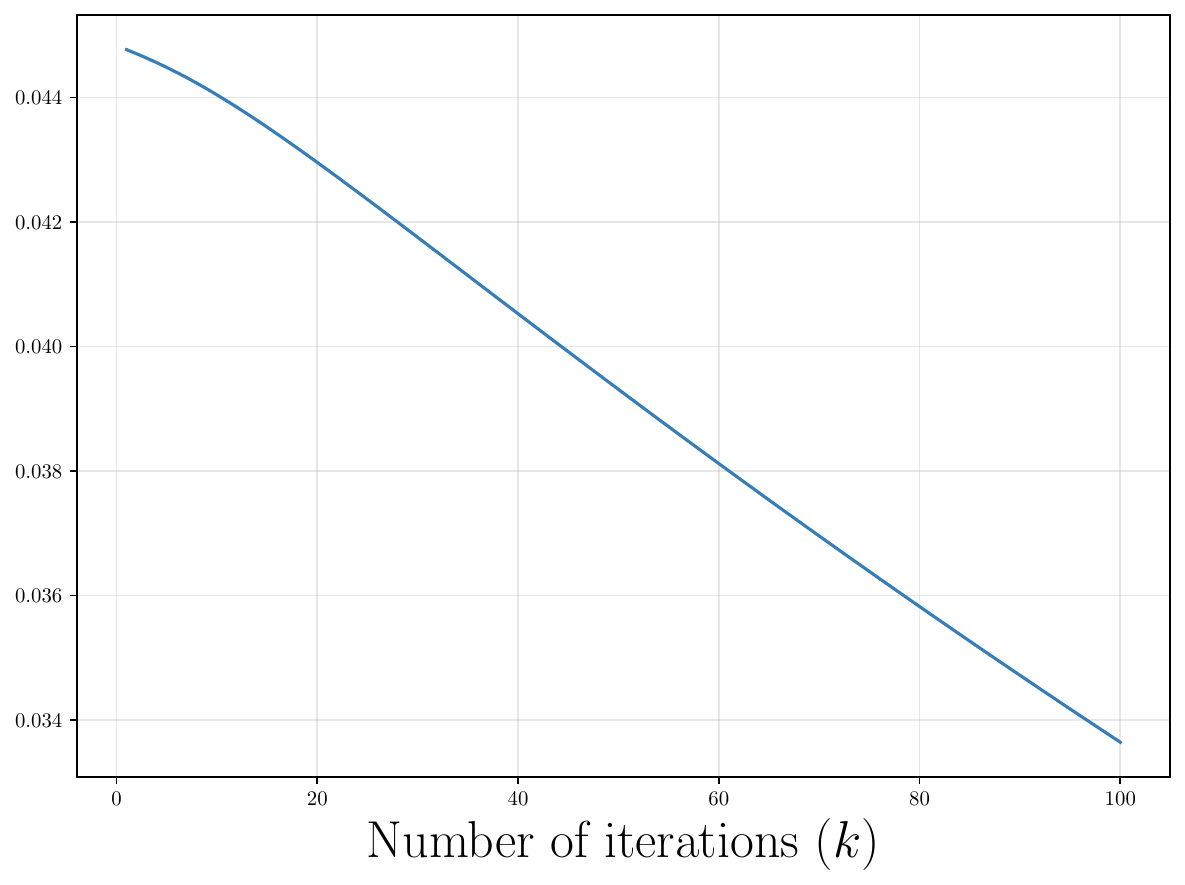}
  \caption{Client 6: $\delta d_6$}\label{fig:n}
\end{subfigure}

\end{minipage}
\end{adjustbox}

\caption{SWaT dataset}
\label{fig:swat_dataset}
\end{figure*}

\clearpage
\section{Pseudocode}
Refer to Algorithm \ref{algo:algorithm_1} for the pseudocode of the proposed federated learning framework.

\begin{algorithm}[H]
  \caption{Federated Learning of $\hat{A}_{mn}$ and $\hat{B}_{mn}$} \label{algo:algorithm_1}
  \begin{algorithmic}[1]
    \State \textbf{Inputs:} $T$, $A_{mm}$, $B_{mm}$, $C_{mm}$, $\hat{h}^t_{m,c}, u^t_m \;\forall m \in \{1,...,M\}, t \in \{1,...,T\}$
    \State \textbf{Choose:} iterations, tolerance $tol$, set $k = 0$
    \State \textbf{Initialize at Server:} $\{\hat{A}_{mn}^0\}_{m \ne n},\; \{\hat{B}_{mn}^0\}_{m \ne n}$
    \State \textbf{Choose at Server:} learning rates $\alpha_A$, $\alpha_B$, $\{\xi_m\} \; \forall m$
    \State \textbf{Initialize at Client $m$:} $\theta_m^0$, $\phi_m^0$, $\hat{h}^{0, t}_{m,a} \; \forall t$
    \State \textbf{Choose at Client:} $\eta_1$, $\eta_2$, $\gamma_1$, $\gamma_2$

    \While{$k < \textit{iterations}$ or $L_s > tol$}

        \For{Each client $m$}
            \For{$t = 1$ to $T$}
                \State $h^{k,t}_{m,a} \gets A_{mm} \cdot \hat{h}^{k,t-1}_{m,a} + B_{mm} \cdot u^{t-1}_m + \phi_m^k$
                \State $r_{m,a}^{k,t} \gets y_m^t - C_{mm} h^{k,t}_{m,a}$
                \State $\hat{h}^{k,t}_{m,a} \gets \hat{h}^t_{m,c} + \theta_m^k\, y_m^t$
                \State $L_{m,a}^k \gets L_{m,a}^k + \frac{1}{T} \norm{r_{m,a}^{k,t}}_2^2$
            \EndFor
            \If{$k = 0$}
                \State Send $\left\{h^{k,t}_{m,a}, \hat{h}^{k,t-1}_{m,a}, \hat{h}^{t-1}_{m,c}, u^{t-1}_m\right\}_{t = 1}^T$ to the server
            \Else
                \State Send $\left\{h^{k,t}_{m,a}\right\}_{t = 1}^T$ to the server
            \EndIf
        \EndFor
        \For{\textbf{[At the Server]}}  
            \For{$t = 1$ to $T$}
                \For{each $m$}
                    \State $h^{k,t}_{m,s} \gets A_{mm} \cdot \hat{h}^{t-1}_{m,c} + B_{mm} u_m^{t-1} + \sum\limits_{n \ne m} ( \hat{A}^k_{mn} \cdot \hat{h}^{t-1}_{n,c} + \hat{B}_{mn}^k \cdot u^{t-1}_n )$
                    
                \EndFor
                \State $g_s^{k,t} \coloneqq \sum_{m=1}^M \left( \| h^{k,t}_{m,a} - h^{k,t}_{m,s} \|_2^2 + \xi \| A_{mm}(\hat h^{k,t-1}_{m,a} - \hat h^{t-1}_{m,c}) - \sum_{n\ne m} \hat A_{mn} \hat h^{t-1}_{n,c} \|_2^2 \right)$
                \State $\nabla_{h^{k,t}_{m,a}} L_s \gets \frac{1}{T}\nabla_{h^{k,t}_{m,a}} g_s^{k,t}$
                \State $L_s^k \gets L_s^k + \frac{1}{T}g_s^{k,t}$
            \EndFor
            \State $\hat{A}^{k+1}_{mn} \gets \hat{A}^{k}_{mn} - \alpha_A \cdot \nabla_{\hat{A}^{k}_{mn}} L_s^k$
            \State $\hat{B}^{k+1}_{mn} \gets \hat{B}^{k}_{mn} - \alpha_B \cdot \nabla_{\hat{B}^{k}_{mn}} L_s^k$
            \State Send $\{ \nabla_{h^{k,t}_{m,a}} L_s^k \}_{t=1}^T$ to client $m$
        \EndFor
        \For{Each client $m$}
            \For{$t = 1$ to $T$}
                
                \State $\nabla_{\theta_m^k} L_s^k \gets \nabla_{\theta_m^k} L_s^k + A_{mm}^\top \left(\nabla_{h^t_{m,a}} L_s^k \right) {y_m^{t-1}}^\top$
            \EndFor
            \State $\theta_m^{k+1} \gets \theta_m^k - \eta_1 \cdot \nabla_{\theta_m^k} L_{m,a}^k - \eta_2 \cdot \nabla_{\theta_m^k} L_s^k$
            \State $\phi_m^{k+1} \gets \phi_m^k - \gamma_1 \cdot \nabla_{\phi_m^k} L_{m,a}^k - \gamma_2 \nabla_{\phi_m^k} L_s^k$
        \EndFor
    \EndWhile
  \end{algorithmic}
\end{algorithm}

\section{Proofs}
\subsection{Derivatives of $L_s$ with respect to client parameters $\theta_m$ and $\phi_m$}
\begin{proposition}
    Using the chain rule, the derivatives of the server loss $L_s$ with respect to the augmented client parameters of client $m$ (i.e., $\phi_m$ and $\theta_m$) are
    \begin{align}
        \nabla_{\theta_m} L_s &= \sum_{t=1}^T \big(A_{mm}^\top \nabla_{h^t_{m,a}} L_s + \nabla_{\hat h^{t-1}_{m,a}} L_s\big)\,{y_m^{t-1}}^{\!\top},\\
        \nabla_{\phi_m} L_s &= \sum_{t=1}^T \nabla_{h^t_{m,a}} L_s .
    \end{align}
\end{proposition}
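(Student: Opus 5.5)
The proof is a direct application of the multivariate chain rule to the computational graph of the augmented client model, Eqs.~\eqref{eq:aug_state_pred}--\eqref{eq:aug_state_estimate}. We treat $L_s$ as a function of the client parameters only through the augmented quantities $h^t_{m,a}$ and $\hat h^{t-1}_{m,a}$ for $t=1,\dots,T$. By Eq.~\eqref{eq:server_residual} the loss $L_s$ touches client $m$ only via $h^t_{m,a}$ in $r^t_{m,s}$, and by the penalty $\mathcal{D}$ in Eq.~\eqref{eq:server_loss} it does so only via $\hat h^{t-1}_{m,a}$. The quantities $\hat h_{n,c}$ and $u_n$ are supplied by the proprietary models and inputs and do not depend on $(\theta_m,\phi_m)$. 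We use the explicit per-step model
\[
h^t_{m,a}=A_{mm}\hat h^{t-1}_{m,a}+B_{mm}u^{t-1}_m+\phi_m,\qquad \hat h^{t-1}_{m,a}=\hat h^{t-1}_{m,c}+\theta_m y^{t-1}_m .
\]
Here we regard $\hat h^{t-1}_{m,a}$ as the leaf through which $\theta_m$ enters, and we do not unroll any further recursion, since $\hat h_{m,c}$ is given.

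Key steps, in order:
\begin{enumerate}
\item Write the partial Jacobians entrywise:
\[
\partial \hat h^{t-1}_{m,a,k}/\partial\theta_{m,rs}=\delta_{kr}\,y^{t-1}_{m,s},\qquad
\partial h^t_{m,a,p}/\partial\theta_{m,rs}=A_{mm,pr}\,y^{t-1}_{m,s},\qquad
\partial h^t_{m,a,p}/\partial\phi_{m,q}=\delta_{pq},
\]
with $\hat h^{t-1}_{m,a}$ independent of $\phi_m$.
\item Apply the chain rule summed over $t$ and over state components $p$. The $\theta$-gradient entry becomes
\[
\sum_t\big[(A_{mm}^\top\nabla_{h^t_{m,a}}L_s)_r+(\nabla_{\hat h^{t-1}_{m,a}}L_s)_r\big]\,y^{t-1}_{m,s}.
\]
\item Reassemble the entries into an outer product with ${y_m^{t-1}}^\top$ to obtain the stated matrix form, and likewise get $\nabla_{\phi_m}L_s=\sum_t\nabla_{h^t_{m,a}}L_s$.
\end{enumerate}

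The main subtlety is not any computation but bookkeeping. The variable $\hat h^{t-1}_{m,a}$ appears both directly, in the penalty $\mathcal D$ at index $t$, and indirectly, through $h^t_{m,a}$. The chain rule must therefore count both paths: this is why the term $\nabla_{\hat h^{t-1}_{m,a}}L_s$ is added to $A_{mm}^\top\nabla_{h^t_{m,a}}L_s$. The gradients must also be understood as partials that hold the other leaf fixed, so the two contributions are not double counted. To make this clean, we will introduce a proxy loss $\ell(h^t_{m,a},\hat h^{t-1}_{m,a})$ that has the same dependency structure as $L_s$: a residual term in $h^t_{m,a}$ and a penalty in $A_{mm}\hat h^{t-1}_{m,a}$. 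We then differentiate $\ell$ in Einstein notation. Finally, we note that the normalizing factor $1/T$ is a constant absorbed into the learning rates, which reconciles the statement's normalization.
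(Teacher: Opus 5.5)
Your proposal is correct and matches the paper's proof essentially step for step: the same proxy loss sharing the computational graph of the augmented client model, the same entrywise Jacobians in Einstein notation, and the same two-path chain rule reassembled into outer products with ${y_m^{t-1}}^{\top}$. You state explicitly that $\nabla_{\hat h^{t-1}_{m,a}}L_s$ must be the direct partial holding $h^t_{m,a}$ fixed, a helpful clarification the paper leaves implicit, and the final absorption of $1/T$ is not actually needed, since $\nabla_{h^t_{m,a}}L_s$ as defined already carries that normalization and the chain rule yields the stated identity exactly.
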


\begin{proof}
From the model,
\[
h^t_{m,a}=A_{mm}\hat h^{t-1}_{m,a}+b_m^{t-1}+\phi_m,\qquad
\hat h^{t-1}_{m,a}=c_m^{t-1}+\theta_m y_m^{t-1}.
\]
To make the chain rule bookkeeping explicit, consider the proxy loss
\[
\ell(h^t,\hat h^{t-1}) \coloneqq \frac1T \sum_{t=1}^T
\Big(\,\|h^t-X^t\|_2^2+\xi\|A\hat h^{t-1}-Z^{t-1}\|_2^2\Big),
\]
with the same computational graph
\[
h^t=A\hat h^{t-1}+b^{t-1}+\phi,\qquad
\hat h^{t-1}=c^{t-1}+\theta y^{t-1}.
\]
We use Einstein summation over repeated Latin indices. Indices $p,k,r\in\{1,\dots,P\}$ denote state components and $j,s\in\{1,\dots,D\}$ denote measurement components. The Kronecker delta is $\delta_{ij}$, and subscripts on a vector/matrix denote components.

\paragraph{Derivative with respect to $\theta$.}
By the chain rule,
\begin{align*}
\frac{\partial \ell}{\partial \theta_{rs}}
&=\frac1T\sum_{t=1}^T\sum_{p=1}^P
\left(
\frac{\partial \ell}{\partial h^t_p}\frac{\partial h^t_p}{\partial \theta_{rs}}
+
\frac{\partial \ell}{\partial \hat h^{t-1}_p}\frac{\partial \hat h^{t-1}_p}{\partial \theta_{rs}}
\right).
\end{align*}
From $h^t_p=A_{pk}\hat h^{t-1}_k+b^{t-1}_p+\phi_p$ and $\hat h^{t-1}_k=c^{t-1}_k+\theta_{kj}y^{t-1}_j$,
\[
\frac{\partial \hat h^{t-1}_k}{\partial \theta_{rs}}=\delta_{kr}y^{t-1}_s,
\qquad
\frac{\partial h^t_p}{\partial \theta_{rs}}=A_{pk}\frac{\partial \hat h^{t-1}_k}{\partial \theta_{rs}}
=A_{pk}\delta_{kr}y^{t-1}_s=A_{pr}y^{t-1}_s.
\]
Hence,
\begin{align*}
\frac{\partial \ell}{\partial \theta_{rs}}
&=\frac1T\sum_{t=1}^T
\left[
\Big(\tfrac{\partial \ell}{\partial h^t_p}\Big)A_{pr}
+
\Big(\tfrac{\partial \ell}{\partial \hat h^{t-1}_p}\Big)\delta_{pr}
\right] y^{t-1}_s
\\
&=\frac1T\sum_{t=1}^T
\left[
\big(A^\top\nabla_{h^t}\ell\big)_r
+
\big(\nabla_{\hat h^{t-1}}\ell\big)_r
\right] y^{t-1}_s.
\end{align*}
Collecting the $(r,s)$-entries, the matrix form is
\[
\nabla_\theta \ell=\frac1T\sum_{t=1}^T \big(A^\top\nabla_{h^t}\ell+\nabla_{\hat h^{t-1}}\ell\big)\,{(y^{t-1})}^\top.
\]

\paragraph{Derivative with respect to $\phi$.}
Again by the chain rule,
\begin{align*}
\frac{\partial \ell}{\partial \phi_a}
&=\frac1T\sum_{t=1}^T\sum_{p=1}^P
\frac{\partial \ell}{\partial h^t_p}\frac{\partial h^t_p}{\partial \phi_a}
=\frac1T\sum_{t=1}^T\sum_{p=1}^P
\frac{\partial \ell}{\partial h^t_p}\,\delta_{pa}
=\frac1T\sum_{t=1}^T \big[\nabla_{h^t}\ell\big]_a,
\end{align*}
or, in matrix form,
\[
\nabla_\phi \ell=\frac1T\sum_{t=1}^T \nabla_{h^t}\ell.
\]

Finally, reinstating the client-$m$ notation
\[
A\mapsto A_{mm},\quad
h^t\mapsto h^t_{m,a},\quad
\hat h^{t-1}\mapsto \hat h^{t-1}_{m,a},\quad
y^{t-1}\mapsto y^{t-1}_m,
\]
and replacing $\ell$ by $L_s$ (the factor $1/T$ is an inconsequential constant), yields
\[
\nabla_{\theta_m} L_s = \sum_{t=1}^T \big(A_{mm}^\top \nabla_{h^t_{m,a}} L_s + \nabla_{\hat h^{t-1}_{m,a}} L_s\big)\,{y_m^{t-1}}^{\!\top},
\qquad
\nabla_{\phi_m} L_s = \sum_{t=1}^T \nabla_{h^t_{m,a}} L_s,
\]
as claimed.
\end{proof}
\subsection{Proposition 4.1}
\begin{proof}
Consider the LTI system without a direct term:
\[
h^{t}=A h^{t-1}+B u^{t-1}+w^{t-1},\qquad y^{t}=C h^{t}+v^{t},
\]
with $w^{t-1}\!\sim\!\mathcal N(0,Q)$, $v^{t}\!\sim\!\mathcal N(0,R)$, independent.

\textbf{(1) Abduction.}
Let $\mathcal Y_{t-1}$ denote data up to $t-1$. The Kalman filter yields
\[
h^{t-1}\mid \mathcal Y_{t-1} \sim \mathcal N(\hat{h}^{t-1},\,P^{t-1|t-1}).
\]
Carry this posterior (and noise laws) unchanged into the counterfactual.

\textbf{(2) Action.}
Apply the intervention $\mathrm{do}(u^{t-1}=u)$: replace $u^{t-1}$ by the chosen constant $u$ in the state update; all other mechanisms unchanged.

\textbf{(3) Prediction.}
Under the intervention (action) represented by $\mathrm{do}(u^{t-1}=u)$ we have, 
\[
h^{t}\mid \mathcal Y_{t-1} \sim \mathcal N\big(A\hat{h}^{t-1}+B u,\; P^{t|t-1}\big),\quad
P^{t|t-1}=A P^{t-1|t-1}A^\top+Q,
\]
and hence
\[
y^{t}\mid \mathrm{do}(u^{t-1}=u),\,\mathcal Y_{t-1}
\sim \mathcal N\big(\mu(u),\,S_t\big),\quad
\mu(u):=C(A\hat{h}^{t-1}+B u),\quad
S_t:=C P^{t|t-1} C^\top + R.
\]

Therefore, for $u_0,u_1$,
\[
\mu(u_1)-\mu(u_0)=CB\,(u_1-u_0),\qquad S_t\ \text{is identical.}
\]
Hence we directly obtain $ATE$ as follows, 
\[
\mathrm{ATE}
= \mathbb E[y^{t}\!\mid\!\mathrm{do}(u_1)]-\mathbb E[y^{t}\!\mid\!\mathrm{do}(u_0)]
= CB\,(u_1-u_0).
\]
\end{proof}

\subsection{Theorem 6.1}
\begin{proof}
For any $m \in \{1,\dots,M\}$, define
\[
x_m^t := \sum_{n \neq m} \hat A_{mn}\,\hat h_{n,c}^{\,t},\quad
r_m^t := h_{a,m}^{t} - A_{mm}\hat h_{m,c}^{\,t} - B_{mm}u_m^{t} - \sum_{n \neq m}\hat B_{mn}u_n^{t},\quad
z_m^t := A_{mm}\!\left(\hat h_{a,m}^{t} - \hat h_{m,c}^{\,t}\right).
\]
The per–time server loss in the $A$-block is
\[
L_s^t \;=\; \|r_m^t - x_m^t\|_2^2 + \xi\,\|z_m^t - x_m^t\|_2^2.
\]
Since $L_s^t$ depends on $\{\hat A_{mn}\}$ only through $x_m^t$, by the chain rule
\[
\nabla_{\hat A_{mn}}L_s^t \;=\; \big(\nabla_{x_m^t}L_s^t\big)\,(\hat h_{n,c}^{\,t})^\top,\qquad
\nabla_{x_m^t}L_s^t \;=\; 2\big[(1+\xi)x_m^t - (r_m^t+\xi z_m^t)\big].
\]
Training minimizes the time–average $\frac1T\sum_{t=1}^TL_s^t$, hence at a stationary point
\[
\frac1T\sum_{t=1}^T \nabla_{x_m^t}L_s^t \;=\; 0
\ \Longrightarrow\
\frac1T\sum_{t=1}^T\big[(1+\xi)x_m^t - (r_m^t+\xi z_m^t)\big] \;=\; 0.
\]
By th ergodicity assumption, time averages converge to expectations, so
\[
(1+\xi)\,\mathrm{E}[x_m^t] \;=\; \mathrm{E}[r_m^t] + \xi\,\mathrm{E}[z_m^t].
\]
Letting $\xi\to\infty$ yields
\[
\mathrm{E}[x_m^t] \;=\; \mathrm{E}[z_m^t].
\]
Finally, by augmentation $\hat h_{a,m}^{t}=\hat h_{m,c}^{\,t}+\theta_m^{*}y_m^{t}$, so
\[
z_m^t \;=\; A_{mm}\theta_m^{*}y_m^{t},
\qquad
x_m^t \;=\; \sum_{n\neq m}\hat A_{mn}^*\,\hat h_{n,c}^{\,t},
\]
and therefore
\[
\mathrm{E}\!\Big[A_{mm}\theta_m^* y_m^{t}\Big] \;=\; \mathrm{E}\!\Big[\sum_{n \neq m} \hat A_{mn}^*\,\hat h^{\,t}_{n,c}\Big].
\]
\end{proof}

\subsection{Corollary 6.2}
\begin{proof}
For any client $m$, we define, 
\[
p_m^t := \sum_{n\neq m}\hat B_{mn}\,u_n^{t},
\qquad
c_m^t := h_{a,m}^{t} - A_{mm}\hat h_{m,c}^{\,t} - B_{mm}u_m^{t}.
\]
With $x_m^t$ fixed, the per–time loss in the $B$-block is
\[
L_s^t(p_m^t) \;=\; \|c_m^t - x_m^t - p_m^t\|_2^2 \;+\; \xi\,\|z_m^t - x_m^t\|_2^2,
\]
so
\[
\nabla_{p_m^t}L_s^t \;=\; -2\,(c_m^t - x_m^t - p_m^t).
\]
Stationarity of the time–average gives
\[
\frac1T\sum_{t=1}^T \nabla_{p_m^t}L_s^t \;=\; 0
\ \Longrightarrow\
\frac1T\sum_{t=1}^T \big(c_m^t - x_m^t - p_m^t\big) \;=\; 0.
\]
By ergodicity,
\[
\mathrm{E}[p_m^t] \;=\; \mathrm{E}[c_m^t - x_m^t].
\]
From Theorem~6.1, as $\xi\to\infty$, $\mathrm{E}[x_m^t]=\mathrm{E}[z_m^t]=\mathrm{E}[A_{mm}(\hat h_{a,m}^{t}-\hat h_{m,c}^{\,t})]$, hence
\[
\mathrm{E}[c_m^t - x_m^t]
= \mathrm{E}\big[h_{a,m}^{t} - A_{mm}\hat h_{m,c}^{\,t} - B_{mm}u_m^{t} - A_{mm}(\hat h_{a,m}^{t}-\hat h_{m,c}^{\,t})\big]
= \mathrm{E}\big[h_{a,m}^{t} - A_{mm}\hat h_{a,m}^{t} - B_{mm}u_m^{t}\big].
\]
Using the augmented client model $h_{a,m}^{t}=A_{mm}\hat h_{a,m}^{t}+B_{mm}u_m^{t}+\phi_m^{*}$ at stationarity yields
\[
\mathrm{E}[c_m^t - x_m^t] \;=\; \mathrm{E}[\phi_m^{*}] \;=\; \phi_m^{*},
\]
so
\[
\phi_m^{*} \;=\; \mathrm{E}[p_m^t] \;=\; \mathrm{E}\!\Big[\sum_{n\neq m}\hat B_{mn}^*\,u_n^{t}\Big].
\]
\end{proof}

\subsection{Lemma 7.2}
\begin{proof}
Since $L_{m,a}=\tfrac{1}{T}\sum_{t=1}^T\|r_{m,a}^t\|^2$, its gradients are given as:
\begin{align*}
\nabla_{\theta_m}L_{m,a}
&= -\frac{2}{T}\sum_{t=1}^T (C_{mm}A_{mm})^\top r_{m,a}^t (y_m^{t-1})^\top,\\
\nabla_{\phi_m}L_{m,a}
&= -\frac{2}{T}\sum_{t=1}^T C_{mm}^\top r_{m,a}^t.
\end{align*}
At a stationary point, $\nabla_{\theta_m}L_{m,a}=0$ and $\nabla_{\phi_m}L_{m,a}=0$, hence
\begin{align*}
\frac1T\sum_{t=1}^T (C_{mm}A_{mm})^\top r_{m,a}^{*,t} (y_m^{t-1})^\top&=0,
&
\frac1T\sum_{t=1}^T C_{mm}^\top r_{m,a}^{*,t}&=0.
\end{align*}
By the ergodicity assumption, these time-averages converge to expectations, giving
\begin{align*}
\mathrm{E}\!\big[(C_{mm}A_{mm})^\top r_{m,a}^{*,t}(y_m^{t-1})^\top\big]&=0, \qquad \text{and} \qquad
\mathrm{E}\!\big[C_{mm}^\top r_{m,a}^{*,t}\big]=0.
\end{align*}
\end{proof}

\subsection{Theorem 7.3}
\begin{proof}
From the augmented client model, we write $\tilde y_{m,a}^t$ as
\begin{align*}
h_{m,a}^t
&= A_{mm}\big(\hat h_{m,c}^{t-1}+\theta_m^* y_m^{t-1}\big)+B_{mm}u_m^{t-1}+\phi_m^*,\\
\tilde y_{m,a}^t(\theta_m^*,\phi_m^*)
&= C_{mm}h_{m,a}^t\\
&= C_{mm}A_{mm}\theta_m^*\,y_m^{t-1}
  \;+\; C_{mm}A_{mm}\,\hat h_{m,c}^{t-1}
  \;+\; C_{mm}B_{mm}\,u_m^{t-1}
  \;+\; C_{mm}\phi_m^*.
\end{align*}
Adding and subtracting the cross-input block so that all inputs $u^{t-1}=[u_1^{t-1};\ldots;u_M^{t-1}]$ appear linearly:
\begin{align*}
\tilde y_{m,a}^t(\theta_m^*,\phi_m^*)
&= C_{mm}A_{mm}\theta_m^*\,y_m^{t-1}
  \;+\; C_{mm}\big[\,\hat B_{m1}^*,\ldots,B_{mm},\ldots,\hat B_{mM}^*\,\big]\,u^{t-1}
  \;+\; C_{mm}A_{mm}\,\hat h_{m,c}^{t-1}\\
&\qquad\qquad
  +\; C_{mm}\!\Big(\phi_m^*-\sum_{n\neq m}\hat B_{mn}^*u_n^{t-1}\Big).
\end{align*}
Define the stacked regressor and the coefficient matrix
\begin{align*}
z_m^{t-1}&:=\begin{bmatrix}y_m^{t-1}\\ u^{t-1}\\ \hat h_{m,c}^{t-1}\end{bmatrix},\qquad
M_{\mathrm{fed}}:=\big[\,C_{mm}A_{mm}\theta_m^*\ \ \ C_{mm}[\,\hat B_{m1}^*,\ldots,B_{mm},\ldots,\hat B_{mM}^*\,]\ \ \ C_{mm}A_{mm}\,\big],
\end{align*}
and the zero-mean error $e_t$ as, 
\begin{align*}
e_t:=C_{mm}\!\Big(\phi_m^*-\sum_{n\neq m}\hat B_{mn}^*u_n^{t-1}\Big),\qquad \mathrm{E}[e_t]=0
\ \ \text{(by Corollary 6.2)}.
\end{align*}
Then we have this affine relationship, 
\begin{align*}
\tilde y_{m,a}^{t}(\theta_m^*,\phi_m^*) \;=\; M_{\mathrm{fed}}\,z_m^{t-1}+e_t.
\end{align*}

From the theorem, we know that $\Sigma_z:=\mathrm{E}[z_m^{t-1}z_m^{t-1\!\top}]$ and $\Gamma_{yz}:=\mathrm{E}[y_m^t z_m^{t-1\!\top}]$. Using Lemma 7.2 we have,
\[
\mathrm{E}\!\big[(C_{mm}A_{mm})^\top r_{m,a}^{*,t}(y_m^{t-1})^\top\big]=0,
\qquad
\mathrm{E}\!\big[C_{mm}^\top r_{m,a}^{*,t}\big]=0.
\]
Since $C_{mm}A_{mm}$ is full column rank, there exists a left inverse
$L_{mm}$ with $L_{mm}(C_{mm}A_{mm})=I$, so
\[
\mathrm{E}\!\big[r_{m,a}^{*,t}(y_m^{t-1})^\top\big]=0.
\]
Stacking with the input and state–estimate blocks gives
\[
\mathrm{E}\!\big[(y_m^t-\tilde y_{m,a}^{t})\,z_m^{t-1\!\top}\big]
=\big[\,0,\ \ \mathrm{E}\!\big[r_{m,a}^{*,t}(u^{t-1})^\top\big],\ \ \mathrm{E}\!\big[r_{m,a}^{*,t}(\hat h_{m,c}^{t-1})^\top\big]\,\big].
\]
Substitute $\tilde y_{m,a}^{t}=M_{\mathrm{fed}}z_m^{t-1}+e_t$ to obtain
\[
\Gamma_{yz}-M_{\mathrm{fed}}\Sigma_z-\mathrm{E}[e_t z_m^{t-1\!\top}]
=\big[\,0,\ \ \mathrm{E}\!\big[r_{m,a}^{*,t}(u^{t-1})^\top\big],\ \ \mathrm{E}\!\big[r_{m,a}^{*,t}(\hat h_{m,c}^{t-1})^\top\big]\,\big].
\]
Now use $r_{m,a}^{*,t}:=y_m^t-y_{m, a}^t=y_m^t-(M_{\mathrm{fed}}z_m^{t-1}+e_t)$ to rewrite the two nonzero blocks:
\[
\mathrm{E}[r_{m,a}^{*,t}(u^{t-1})^\top]
= \mathrm{E}[(y_m^t-M_{\mathrm{fed}}z_m^{t-1})(u^{t-1})^\top]-\mathrm{E}[e_t(u^{t-1})^\top],
\]
and similarly
\[
\mathrm{E}[r_{m,a}^{*,t}(\hat h_{m,c}^{t-1})^\top]
= \mathrm{E}[(y_m^t-M_{\mathrm{fed}}z_m^{t-1})(\hat h_{m,c}^{t-1})^\top]-\mathrm{E}[e_t(\hat h_{m,c}^{t-1})^\top].
\]
But the first terms on the right in these two lines are exactly the $u$- and $\hat h$-blocks of $(\Gamma_{yz}-M_{\mathrm{fed}}\Sigma_z)$. Therefore the whole right-hand side equals $(\Gamma_{yz}-M_{\mathrm{fed}}\Sigma_z)-\mathrm{E}[e_t z_m^{t-1\!\top}]$, and we conclude
\[
\Gamma_{yz}-M_{\mathrm{fed}}\Sigma_z-\mathrm{E}[e_t z_m^{t-1\!\top}]=0.
\]

hence we obtain, 
\begin{align*}
M_{\mathrm{fed}}=\Gamma_{yz}\Sigma_z^{-1}-\mathrm{E}[e_t z_m^{t-1\!\top}]\Sigma_z^{-1}.
\end{align*}

Define the oracle (centralized model) linear predictor on the same regressors:
\begin{align*}
\tilde y_{m,o}^t:=M_o z_m^{t-1},\qquad M_o:=\Gamma_{yz}\Sigma_z^{-1}.
\end{align*}
Therefore
\begin{align*}
\tilde y_{m,o}^{t}-\tilde y_{m,a}^{t}(\theta_m^*,\phi_m^*)
&=(M_o-M_{\mathrm{fed}})z_m^{t-1}-e_t
= \underbrace{\mathrm{E}[e_t z_m^{t-1\!\top}]\Sigma_z^{-1}}_{=:J_m}\,z_m^{t-1}-e_t.
\end{align*}
Taking expectations and using $\mathrm{E}[e_t]=0$ gives the claimed relation
\begin{align*}
\mathrm{E}\big[\tilde y_{m,o}^{t}-\tilde y_{m,a}^{t}(\theta_m^{*},\phi_m^{*})\big]
= J_m\,\mathrm{E}[z_m^{t-1}].
\end{align*}
\end{proof}

\subsection{Lemma A.6}
\begin{proof}
By definition, $z_m^t=[\hat h_{m,c}^t;\hat h_{m,a}^t;h_{m,a}^t;u_m^t]$ with the same $u_m^t$ in $\mathcal{D}_m$ and $\mathcal{D}_m'$. Thus, 
\[
\|z_m^t-z_m^{t\,\prime}\|_2
\le
\|\hat h_{m,c}^t-\hat h_{m,c}^{t\,\prime}\|_2
+ \|\hat h_{m,a}^t-\hat h_{m,a}^{t\,\prime}\|_2
+ \|h_{m,a}^t-h_{m,a}^{t\,\prime}\|_2 .
\]
Contractivity implies that a one-time perturbation at $y_m^{t^\star}$ of magnitude at most $R_y\le R_{\max}$ yields
\[
\|h_{m,a}^{t}-h_{m,a}^{t\,\prime}\|_2 \le L_m\beta_m^{t-t^\star}\,R_{\max} \quad (t\ge t^\star),
\]
Collecting constants from the observer/augmenter maps and using subadditivity of $\|\cdot\|_2$, there exists a finite $\kappa_m$ (depending on $L_m,\beta_m$ and the estimator gains) such that
\[
\|z_m^t-z_m^{t\,\prime}\|_2 \le \kappa_m R_{\max}.
\]
One we have the above inequality, we define $\Delta_{m,\mathrm{msg}}:=\kappa_m R_{\max}$.
\end{proof}

\subsection{Lemma A.8}
\begin{proof}
For notational brevity we write, \[H_m^t := A_{mm} \hat{h}_{m,c}^{t-1}
+ \sum_{n \ne m} \hat{A}_{mn} \hat{h}_{n,c}^{t-1}
+ B_{mm} u_m^{t-1}
+ \sum_{n \ne m} \hat{B}_{mn} u_n^{t-1}\] 

Thus, the server gradient expression is now given as, 
\[
g_m^t = \tfrac{2}{T}\big(h_{m,a}^t - H_m^t\big).
\]

Hence we obtain, 
\[
\|g_m^t-g_m^{t\,\prime}\|_2 \le \tfrac{2}{T}\big(\|h_{m,a}^t-h_{m,a}^{t\,\prime}\|_2 + \|H_m^t-H_m^{t\,\prime}\|_2\big).
\]
\textbf{(i)} \emph{First term:} a one-time change in $y_{m'}^{t^\star}$ propagates to $h_{m',a}$ using
$\|h_{m',a}^{t}-h_{m',a}^{t\,\prime}\|_2 \le L_{m'}\beta_{m'}^{t-t^\star}R_{\max}$. This yields the contribution $L_{m'}\|C_{mm}\|\beta_{m'}^{t-t^\star}R_{\max}$.

\textbf{(ii)} \emph{Second term:} For $n\neq m$, $\|\hat h_{n,c}^{t-1}-\hat h_{n,c}^{t-1\,\prime}\|_2\le L_n\beta_n^{t-1-t^\star}R_{\max}$ if $n=m'$ and zero otherwise (each client’s proprietary estimator depends only on its own measurements). Thus we obtain, 
\[
\|H_m^t-H_m^{t\,\prime}\|_2
\le \sum_{n\neq m}\|\hat A_{mn}\|\,\|\hat h_{n,c}^{t-1}-\hat h_{n,c}^{t-1\,\prime}\|_2
\le \sum_{n\neq m}\|\hat A_{mn}\|\,L_{n}\beta_{n}^{t-1-t^\star}R_{\max}.
\]
Combining \textbf{(i)} and \textbf{(ii)}, absorbing $\beta_{n}^{t-1-t^\star}\le \beta_{n}^{t-t^\star}$ and multiplying by the prefactor $\frac{2}{T}$ plus the linear factor $(1+\|A_{mm}\|)$ from the residual structure yields
\[
\|g_m^t-g_m^{t\,\prime}\|_2
\le \frac{2}{T}(1+\|A_{mm}\|)\,
L_{m'}\!\left(\|C_{mm}\| + \sum_{n\ne m}\|A_{mn}\|\beta_{n}^{t-t^\star}\right)\!R_{\max}.
\]
Adding $\sum_{n\neq m}\|B_{mn}\|$ inside the bracket gives a looser but still valid bound; this produces exactly the stated $\kappa_{m,\mathrm{mix}}$.
\end{proof}

\subsection{Proposition A.11}
\begin{proof}
Consider a mechanism with $\ell_2$-sensitivity $\Delta$ and clipping $C$ s.t., 
$\tilde{x}= \mathcal{M}(x)+\mathcal{N}(0,\sigma^2 C^2 I)$. For neighboring inputs $x,x'$, the privacy loss random variable is Gaussian-subgaussian; the classical analysis of the (basic) Gaussian mechanism yields
\[
\sigma \ \ge\ \frac{\Delta}{C}\cdot \frac{\sqrt{2\ln(1.25/\delta)}}{\varepsilon}
\]
as a sufficient condition for $(\varepsilon,\delta)$-DP. 

Applying this with $(\Delta,C)=(\Delta_{m,\mathrm{msg}}, C_{\mathrm{msg}})$ for messages and $(\Delta_{m,\mathrm{grad}}, C_{\mathrm{grad}})$ for gradients gives the two stated inequalities.
\end{proof}

\subsection{Proposition A.12}
\begin{proof}
\textbf{(Sequential composition.)} If $\mathcal{M}_1$ is $(\varepsilon_1,\delta_1)$-DP and $\mathcal{M}_2$ is $(\varepsilon_2,\delta_2)$-DP, then the pair $(\mathcal{M}_1,\mathcal{M}_2)$ is $(\varepsilon_1+\varepsilon_2,\delta_1+\delta_2)$-DP by the standard composition theorem (union bound on failure events and multiplicativity of the likelihood ratio bounds). Inducting over $R$ rounds yields $(\sum_r \varepsilon_r, \sum_r \delta_r)$.

\textbf{(Joint per-round composition.)} In each round, releasing both message and gradient corresponds to composing two mechanisms with guarantees $(\varepsilon_{\mathrm{msg}},\delta_{\mathrm{msg}})$ and $(\varepsilon_{\mathrm{grad}},\delta_{\mathrm{grad}})$, so the per-round guarantee is $(\varepsilon_{\mathrm{msg}}+\varepsilon_{\mathrm{grad}}, \delta_{\mathrm{msg}}+\delta_{\mathrm{grad}})$. Composing these $R$ times proves the stated bound.
\end{proof}

\subsection{Corollary A.13}
\begin{proof}
Let $\mathcal{M}$ denote the overall privatized training mechanism that produces the transcript
$\mathcal{T}_{\mathrm{priv}} = \{\tilde z_m^t, \tilde g_m^t\}_{m,t}$.
We know that, $\mathcal{M}$ satisfies $(\varepsilon,\delta)$-differential privacy with respect to the raw measurement
$\mathcal{D} = \{y_m^t\}$.

After the privatized transcript is generated, all subsequent computations,
including optimization of system matrices
$\{\hat A_{mn}, \hat B_{mn}\}$,
client updates of
$\{\theta_m,\phi_m\}$,
and downstream counterfactual predictions are deterministic or randomized functions of $\mathcal{T}_{\mathrm{priv}}$ only.
Let us denote this transformation by
\[
\mathcal{T}_{\mathrm{post}}:\mathcal{T}_{\mathrm{priv}}\mapsto
\big(\hat A,\hat B,\theta,\phi,\widehat{\mathrm{ATE}},\widehat{\mathrm{CF}}\big).
\]

For any pair of neighboring datasets $\mathcal{D},\mathcal{D}'$
that differ in one measurement $y_{m^\star}^{t^\star}$,
the privacy guarantee of $\mathcal{M}$ implies that for every measurable set $S$,
\[
\Pr[\mathcal{M}(\mathcal{D})\in S]
\le e^{\varepsilon}\Pr[\mathcal{M}(\mathcal{D}')\in S]+\delta.
\]
Because $\mathcal{T}_{\mathrm{post}}$ is applied only to the outputs of $\mathcal{M}$,
we can substitute $S'=\mathcal{T}_{\mathrm{post}}^{-1}(S)$ to obtain
\[
\Pr[\mathcal{T}_{\mathrm{post}}\!\circ\!\mathcal{M}(\mathcal{D})\in S]
= \Pr[\mathcal{M}(\mathcal{D})\in S']
\le e^{\varepsilon}\Pr[\mathcal{M}(\mathcal{D}')\in S']
+\delta
= e^{\varepsilon}\Pr[\mathcal{T}_{\mathrm{post}}\!\circ\!\mathcal{M}(\mathcal{D}')\in S]+\delta.
\]
Hence $\mathcal{T}_{\mathrm{post}}\!\circ\!\mathcal{M}$ is also $(\varepsilon,\delta)$-DP.

\end{proof}

\end{document}